\pdftrailerid{}
\PassOptionsToPackage{table}{xcolor}
\documentclass[letterpaper,10pt]{routerpreprint}
\usepackage[utf8]{inputenc}
\usepackage{amssymb,amsthm,mathtools,array,tabularx}
\usepackage{algorithm,algpseudocode,needspace,xurl,titletoc,fontawesome5}
\tcbuselibrary{breakable}
\setcitestyle{authoryear,round,citesep={;},aysep={,},yysep={;}}
\newcolumntype{Y}{>{\raggedright\arraybackslash}X}
\definecolor{trmteal}{HTML}{207D80}
\definecolor{trmlight}{HTML}{EDF7F6}
\definecolor{mailMa}{HTML}{9B303F}
\definecolor{mailMi}{HTML}{2959A7}
\newcommand{\MailIcon}[1]{\textcolor{#1}{\faIcon[regular]{envelope}}}
\newcommand{\trm}{\mbox{\textup{T-Router}}}
\newcommand{\R}{\mathbb{R}}
\newcommand{\rms}{\operatorname{RMS}}
\newcommand{\sg}{\operatorname{sg}}
\newcommand{\softmax}{\operatorname{softmax}}
\newcommand{\E}{\mathbb{E}}

\newtheorem{proposition}{Proposition}
\title{T-Router: Learning Thalamic Routing for Reasoning\\with Parameter-Efficient Reinforcement Learning}
\author[1,*,\MailIcon{mailMa}]{Liuxian Ma}
\author[2,*]{Jiale Dai}
\author[3]{Jiaqi Li}
\author[1,\ensuremath{\dagger},\MailIcon{mailMi}]{Lu Mi}
\affiliation[1]{College of Artificial Intelligence, Tsinghua University}
\affiliation[2]{State Key Laboratory of General Artificial Intelligence, School of Intelligence Science and Technology, Peking University}
\affiliation[3]{Beijing Institute for General Artificial Intelligence}
\contribution[*]{Strictly equal contribution; either author order is equally valid.}
\contribution[\ensuremath{\dagger}]{Corresponding author.}
\metadata[Keywords]{parameter-efficient reinforcement learning, large language models, mathematical reasoning, cross-layer routing, computation reuse, thalamic routing}
\metadata[Contact]{\MailIcon{mailMa}\,\href{mailto:maliuxian03@gmail.com}{\textcolor{black}{\texttt{maliuxian03@gmail.com}}}\quad\MailIcon{mailMi}\,\href{mailto:milu@mail.tsinghua.edu.cn}{\textcolor{black}{\texttt{milu@mail.tsinghua.edu.cn}}}}
\hypersetup{pdftitle={T-Router: Learning Thalamic Routing for Reasoning with Parameter-Efficient Reinforcement Learning},pdfauthor={Liuxian Ma, Jiale Dai, Jiaqi Li, Lu Mi},pdfsubject={Parameter-efficient reasoning reinforcement learning},pdfkeywords={parameter-efficient reinforcement learning, large language models, mathematical reasoning, cross-layer routing, computation reuse, thalamic routing}}

\definecolor{paperhead}{HTML}{E7EDF6}
\definecolor{paperrule}{HTML}{A6B2C3}
\definecolor{paperink}{HTML}{33395B}
\definecolor{paperback}{HTML}{F7F8FB}
\colorlet{trmlight}{paperhead!65!white}
\newtcolorbox{ThuPrompt}{breakable,sharp corners,colback=paperback,colframe=paperrule,
  boxrule=.35pt,left=7pt,right=7pt,top=5pt,bottom=5pt,
  before skip=8pt,after skip=8pt}
\newtcolorbox{ThuStatement}{breakable,sharp corners,colback=paperhead!38!white,colframe=paperrule,
  boxrule=.35pt,left=7pt,right=7pt,top=0pt,bottom=0pt,
  before skip=8pt,after skip=8pt}
\newcommand{\ThuTableCaptionGap}{4pt}
\AddToHook{env/table/begin}{\setlength{\belowcaptionskip}{\ThuTableCaptionGap}}
\newcommand{\ThuAppendixLayout}{\renewcommand{\arraystretch}{1.00}
  \renewcommand{\ThuTableCaptionGap}{2pt}\setlength{\parskip}{4.5pt}\setlength{\textfloatsep}{18pt plus 2pt minus 2pt}\setlength{\floatsep}{12pt plus 2pt minus 2pt}}

\newcommand{\zerodash}[1]{\textemdash}

\newcommand{\stat}[2]{\ensuremath{#1\,{\scriptstyle\pm #2}}}
\newcommand{\beststat}[2]{\ensuremath{\mathbf{#1}\,{\scriptstyle\pm #2}}}
\newcommand{\secondstat}[2]{\ensuremath{\underline{#1}\,{\scriptstyle\pm #2}}}

\colorlet{trmnavink}{black}
\colorlet{trmnavmuted}{black}
\colorlet{trmnavrule}{paperrule}
\titlecontents{section}[1.6em]
  {\addvspace{6pt}\fontsize{10}{12}\selectfont\sffamily\bfseries\color{black}}
  {\contentslabel{1.6em}}
  {}{\hspace{.5em}\hfill\contentspage}
\titlecontents{subsection}[2.9em]
  {\addvspace{.6pt}\fontsize{9.2}{11.2}\selectfont\color{black}}
  {\contentslabel{2.5em}}
  {}{\hspace{.5em}\titlerule*[4pt]{.}\contentspage}
\newcommand{\TRMAppendixFront}{
  \phantomsection\label{app:contents}
  \pdfbookmark[0]{Appendix contents}{trm.appendix.contents}
  \startcontents[trmappendix]
  \startcontents[trmspecification]
  \startcontents[trmresults]\stopcontents[trmresults]
  \begingroup\setlength{\parskip}{0pt}
  \noindent{\color{metablue}\rule{\linewidth}{1.2pt}}\par
  \vspace{8pt}
  \noindent{\huge\sffamily Appendix}\par
  \vspace{5pt}
  {\small T-Router: Learning Thalamic Routing for Reasoning\\with Parameter-Efficient Reinforcement Learning\par}
  \vspace{7pt}
  \noindent{\color{trmnavrule}\rule{\linewidth}{.4pt}}\par
  \vspace{7pt}
  {\small Computational specification, formal analysis, and the supporting experimental results.\par}
  \vspace{10pt}
  \noindent\begin{minipage}[t]{.48\linewidth}
    {\small\sffamily\bfseries Specification and analysis}\par
    \vspace{3pt}
    {\color{trmnavrule}\rule{\linewidth}{.4pt}}\par
    \printcontents[trmspecification]{}{1}{\setcounter{tocdepth}{2}}
  \end{minipage}\hfill
  \begin{minipage}[t]{.48\linewidth}
    {\small\sffamily\bfseries Results and interpretation}\par
    \vspace{3pt}
    {\color{trmnavrule}\rule{\linewidth}{.4pt}}\par
    \printcontents[trmresults]{}{1}{\setcounter{tocdepth}{2}}
  \end{minipage}\par
  \vspace{12pt}
  \noindent{\color{trmnavrule}\rule{\linewidth}{.35pt}}\par
  {\fontsize{8.2}{10}\selectfont
    \textbf{Reading routes}\enspace
    \hyperref[app:implementation]{Implementation A}\enspace /\enspace
    \hyperref[app:formal]{Theory B}\enspace /\enspace
    \hyperref[app:results-atlas]{Results C}\enspace /\enspace
    \hyperref[app:design]{Interpretation D}\par}
  \endgroup
  \clearpage
}

\abstract{Parameter-efficient reinforcement learning aims to improve reasoning with a compact trainable interface to a pretrained model. We introduce the \emph{Thalamic Router} (\trm), which concentrates adaptation on the reuse of completed computations. A compressed, addressable bank preserves block changes; a depth-recurrent controller conditions their selection and relative-scale writeback. This coupling gives thalamic context-dependent routing a concrete computational form: learn which earlier contributions a receiving layer uses, and with what influence. Correctness rewards train the interface while preserving backbone parameters and layer order. On an 8.95B-parameter backbone, \trm\ allocates 41.73M parameters---0.466\% of the backbone---and achieves $83.64\pm1.16$ MathAvg after GSM8K RL, compared with $73.79\pm1.83$ for full-parameter GRPO across three evaluation rounds. At a comparable parameter budget and with matched retries, it exceeds LoRA's $77.28\pm1.95$ MathAvg, improving all three task families and raising mean AIME accuracy from 48.33 to 60.56. Capacity-controlled comparisons favor addressable block changes and recurrent context; separate search training extends the interface to tool-mediated reasoning. These results establish controlled computation reuse as an effective route to parameter-efficient reasoning reinforcement learning.
}
\begin{document}
\maketitle
\section{Introduction}
Reinforcement learning (RL) improves mathematical reasoning by rewarding successful solutions \citep{shao2024deepseekmath}. Making this adaptation efficient requires placing trainable capacity where reward feedback can improve reasoning while preserving the bulk of pretrained parameters \citep{sidahmed2024perlhf}. A model already contains useful transformations; an additional opportunity is to learn how their intermediate contributions are organized. Our starting point is therefore \emph{adaptation efficiency}: train a compact \emph{Thalamic Router} (\trm) for \emph{controlled reuse of completed computations} around a frozen backbone.

\begin{figure}[t]
\centering
\includegraphics[width=\linewidth]{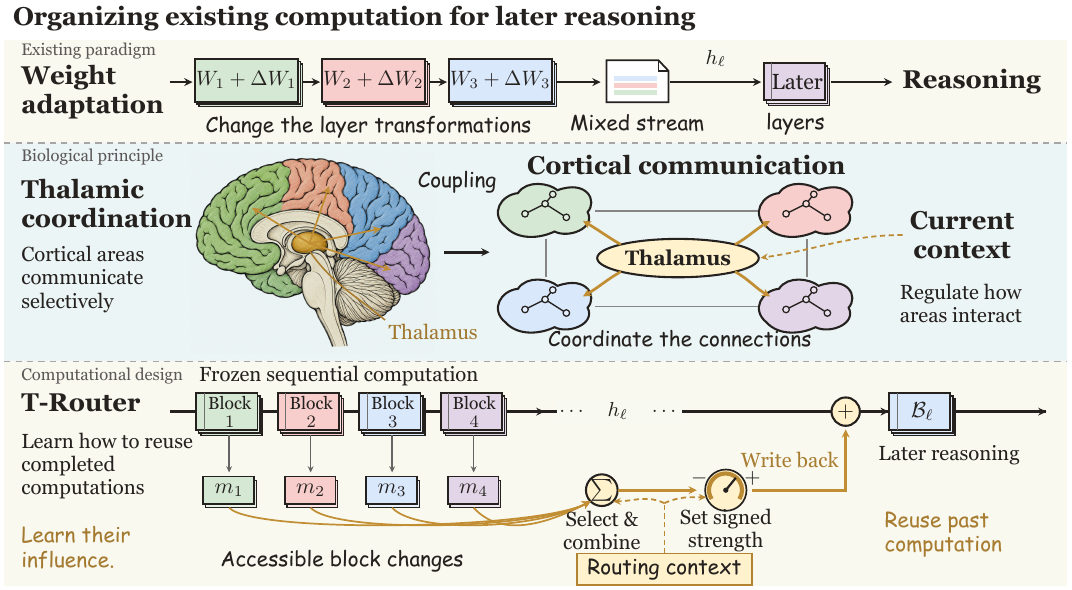}
\caption{\textbf{From adapting transformations to learning their communication.} Weight adaptation changes transformations (top). Thalamic circuits regulate cortical communication according to task context (middle). T-Router applies this routing principle across depth: select addressable block changes and regulate their influence before a later frozen layer executes (bottom).}
\label{fig:problem}
\end{figure}

LoRA and adapters place trainable capacity in weight updates or local transformations \citep{hu2022lora,houlsby2019adapters}. In the resulting residual stream, earlier block contributions continue to mix with subsequent updates. Keeping these contributions separately addressable exposes an additional learning target: which completed changes a later computation reuses, and with what influence. Learned depth connectivity already develops this direction through attention to prior computation, additive delta reuse, and frozen-backbone residual routing \citep{kimi2026attnres,luo2026delta,oldenburg2026mhcpeft}. The question for a compact RL interface is how to couple reusable content, the receiving computation's context, and writeback strength.

Thalamic circuits regulate cortical communication according to task demands. The pulvinar coordinates information transmission between cortical areas through attention-dependent synchronization \citep{saalmann2012pulvinar}; mediodorsal thalamic input amplifies functional prefrontal connectivity to sustain task representations \citep{schmitt2017thalamic}. We take these findings as a routing principle: use context to select information sources and regulate their influence on a receiving computation. In \trm, frozen decoder blocks perform computation, while the complete auxiliary pathway controls which completed block changes a later layer reuses and at what strength. Figure~\ref{fig:problem} makes this correspondence explicit.

\trm\ preserves compressed block changes in a source bank alongside the evolving residual. Later layers can recombine these records according to their current state and accumulated depth context. Controller slots $S$ accumulate that history; attention queried by the current hidden state reads a context vector $P$ from the updated slots. Together with the hidden state, $P$ conditions both source attention and a signed gate. Selected content is projected into the receiving residual and added before the frozen layer executes. The bank defines \emph{what is available}; the router and gate learn \emph{which mixture to reuse and at what strength}.

Correctness rewards train this communication interface end to end. Its defining feature is the coupling of \emph{addressable content}, \emph{recurrent control}, and \emph{relative influence}: the bank preserves reusable changes, depth context guides their selection, and a signed gate regulates the resulting writeback. RMS calibration expresses the gate's strength relative to the receiving residual. Every pretrained layer still executes in its original order. RL can therefore optimize how completed contributions serve a later computation while preserving the backbone's pretrained transformations.

The mathematics comparison demonstrates this efficiency at two adaptation budgets. With 41.73M parameters, 0.466\% of the 8.95B backbone, \trm\ reaches $83.64\pm1.16$ MathAvg versus $73.79\pm1.83$ for full-parameter GRPO across three evaluation rounds. At a compact budget, matched-retry LoRA uses 43.28M parameters and reaches $77.28\pm1.95$. \trm\ improves all three mathematical families, including 92.73 versus 87.93 on MATH-500 and 60.56 versus 48.33 mean AIME accuracy after GSM8K training. Capacity-controlled comparisons favor the full interface over hidden-state memory (67.46 MathAvg) and a feedforward controller replacement (72.80). Separate search training applies the same compact interface to tool-mediated reasoning.

Our contributions are: (i) a parameter-efficient RL interface coupling compressed, addressable block changes to independent depth-recurrent control and relative-scale writeback; (ii) stronger mathematical reasoning than full-parameter GRPO with 0.466\% of the backbone's parameter count, and than matched-retry LoRA at a comparable budget, with an application to agentic search; and (iii) capacity-controlled comparisons and a formal characterization of the content and control pathways. Together they establish controlled computation reuse as a concrete route to efficient RL adaptation.

\section{Related work}
\paragraph{Parameter-efficient reinforcement learning.}
Adapters and LoRA reduce the trainable footprint of task adaptation \citep{houlsby2019adapters,hu2022lora}; PE-RLHF applies parameter-efficient learning to reward modeling and reinforcement learning \citep{sidahmed2024perlhf}. GRPO uses group-relative rewards \citep{shao2024deepseekmath}; Section~\ref{sec:method} identifies the single-update policy gradient used to train the module. RO-GRPO shapes rewards using routing statistics in LoRA mixtures \citep{ma2026balancing}, while S-GRPO samples output-token positions for the training loss \citep{lee2025token}. DAPO oversamples rollout groups and filters all-correct or all-incorrect groups \citep{yu2025dapo}; \trm\ uses a bounded retry rule, analyzed in Appendix~\ref{app:formal}. \trm\ retains input-dependent cross-layer communication at inference; Appendix~\ref{app:implementation} details the parameter and execution costs.

\paragraph{Learning communication across depth.}
DenseNet enables feature reuse through dense concatenation of preceding feature maps \citep{huang2017densenet}. DenseFormer learns depth-weighted combinations \citep{pagliardini2024denseformer}; MUDDFormer makes cross-layer connections dynamic and stream-specific \citep{xiao2025muddformer}. Attention Residuals introduces attention over earlier computation, including a block variant \citep{kimi2026attnres}. Delta Attention Residuals adds attention-weighted sublayer or block changes to the residual \citep{luo2026delta}, providing the closest source-reuse precedent. Multi-Head Attention Residuals gives feature subspaces separate distributions over depth \citep{luo2026multihead}. Hyper-connection constraints and adaptive reassignment further expand depth connectivity \citep{xie2026mhc,zhang2026ancre}. In the frozen-backbone setting, mHC-based PEFT learns read/write routing over multiple residual streams and investigates learned versus identity residual mixing \citep{oldenburg2026mhcpeft}. These methods establish depth connectivity as an adaptation axis alongside local weight updates.

\trm\ targets parameter-efficient RL through a coupled content-and-control interface: compressed, origin-indexed changes supply content; a separate depth-recurrent controller supplies routing context; and calibrated signed writeback supplies relative influence. A current-state query reads $P$ from slots $S$ to condition both selection and strength. Correctness rewards train this compact interface while preserving pretrained transformations and layer order. Appendix~\ref{app:design} develops these computational distinctions.

\paragraph{Thalamic regulation of cortical communication.}
The pulvinar coordinates cortical information transmission through attention-dependent synchronization \citep{saalmann2012pulvinar}. Mediodorsal thalamic input amplifies functional prefrontal connectivity, sustaining rule representations \citep{schmitt2017thalamic}. These mechanisms motivate context-dependent control over the contributions of distributed computations. \trm\ realizes this principle through source selection and writeback strength, conditioned on the receiving residual and recurrent depth context. HippoRAG draws on hippocampal indexing theory for external knowledge retrieval \citep{gutierrez2024hipporag}.

\section{T-Router}\label{sec:method}

\trm\ concentrates trainable capacity in a compact communication path around a frozen decoder. A source bank stores compressed block changes, a recurrent controller summarizes depth history, and a query-dependent readout conditions source attention and writeback strength. The router and gate combine this signal with the current residual to form an intervention. All pretrained layers execute in order; correctness rewards train how earlier contributions are reused.

\begin{figure}[t]
\centering
\includegraphics[width=\linewidth]{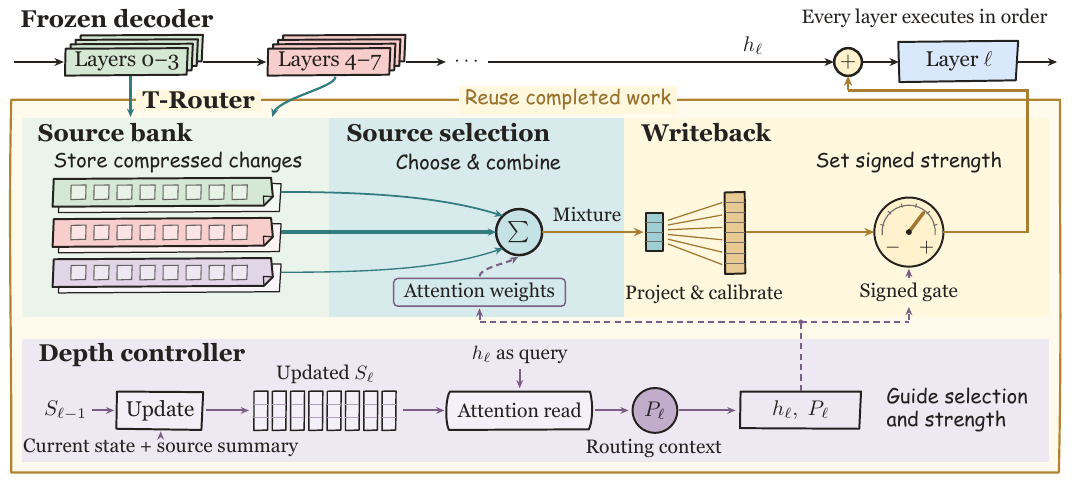}
\caption{\textbf{How T-Router reuses completed computation.} Completed blocks supply compressed changes to the source bank. The depth controller updates its state $S$, then reads routing context $P$ using the current hidden state. This context conditions which sources are mixed and how strongly the projected mixture is written back. The contribution is added before the receiving frozen layer; all layers execute in order. Teal and gold paths carry reusable content; dashed purple branches condition selection and strength. Equations~\ref{eq:controller-readout}--\ref{eq:writeback} specify the operations.}
\label{fig:framework}
\end{figure}

\Needspace{8\baselineskip}
\subsection{Recording changes across depth}
Let $H_\ell\in\R^{B\times T\times d}$ be the residual tensor before layer $\ell$, for batch size $B$ and sequence length $T$. With the batch index suppressed, $h_{\ell,t}\in\R^d$ denotes its token-$t$ vector; $R_\ell$ stacks the token interventions $R_{\ell,t}$. The complete frozen decoder layer $\mathcal B_\ell$ acts on the sequence tensor, including its causal token interactions and residual connections:
\begin{equation}
\widetilde H_\ell=H_\ell+R_\ell,\qquad
H_{\ell+1}=\mathcal B_\ell(\widetilde H_\ell).
\label{eq:execution}
\end{equation}
Writing $\mathcal B_\ell(X)=X+F_\ell(X)$ gives $H_{\ell+1}=H_\ell+R_\ell+F_\ell(H_\ell+R_\ell)$: the intervention enters both the residual connection and the frozen transformation. Training updates only the auxiliary module.

We group consecutive layers into blocks of size $s$. Upon completing block $b$, \trm\ compresses the change between its first actual input and final output:
\begin{equation}
D_{b,t}=h_{s(b+1),t}-\widetilde h_{sb,t},\qquad
m_{b,t}=C_bD_{b,t}\in\R^r.
\label{eq:memory}
\end{equation}
Here $C_b\in\R^{r\times d}$ is learned. This change is measured along the intervened forward pass: writebacks inside the block can contribute to $D_b$. Before layer $\ell$, only completed blocks
$\mathcal J_\ell=\{b\in\mathbb N_0:s(b+1)\leq\ell\}$ are available. Each record pairs $m_{b,t}$ with a learned source embedding $e_b^{\mathrm{src}}$. The bank stores changes at the same token position; it introduces no new attention over token positions.

We use $L=32$, $s=4$, and $r=256$. The first four layers receive zero writeback; later layers can read up to seven completed blocks from the same forward pass.

\subsection{From depth state to controller context}
The controller summarizes the token's computation across layers to condition source selection and writeback strength. Its state comprises $K$ slots $S_{\ell,t,k}\in\R^p$, initialized as $S_{-1,t,k}=S_k^0$ from a learned array. At every layer, the update combines the current residual, the mean $\bar m_{\ell,t}$ of completed records (zero for an empty bank), and the receiving-layer embedding $e_\ell$:
\begin{equation}
z_{\ell,t}=[A_hh_{\ell,t};A_m\bar m_{\ell,t};e_\ell],
\qquad [u_{\ell,t};v_{\ell,t}]=\operatorname{MLP}(z_{\ell,t}).
\label{eq:controller-input}
\end{equation}
Write attention $\omega$ distributes the shared proposal across the previous slots:
\begin{equation}
\begin{aligned}
\omega_{\ell,t,k}
&=\softmax_k\!\left(S_{\ell-1,t,k}^{\top}W_sz_{\ell,t}/\sqrt p\right),\\
S_{\ell,t,k}
&=\gamma S_{\ell-1,t,k}
+(1-\gamma)\omega_{\ell,t,k}\bigl[\sigma(v_{\ell,t})\odot\tanh(u_{\ell,t})\bigr].
\end{aligned}
\label{eq:controller}
\end{equation}
We define \emph{controller context} $P_{\ell,t}\in\R^p$ as an attention readout of these updated slots. The current residual supplies the query $Q_sh_{\ell,t}$; slot projections supply keys $K_sS_{\ell,t,k}$ and values $V_sS_{\ell,t,k}$:
\begin{equation}
\xi_{\ell,t,k}=\softmax_k\!\left((Q_sh_{\ell,t})^\top K_sS_{\ell,t,k}/\sqrt p\right),
\qquad P_{\ell,t}=\sum_{k=1}^{K}\xi_{\ell,t,k}V_sS_{\ell,t,k}.
\label{eq:controller-readout}
\end{equation}
Here $\xi$ is normalized over the $K$ slots. $S$ is the recurrent state; $P$ is its current control signal.

\Needspace{4\baselineskip}
The router and gate combine $P$ with $h$ to determine source weights (Equation~\ref{eq:routing}) and writeback strength (Equation~\ref{eq:writeback}). Source attention supplies the transported content $c$ from the bank. We use $K=8$, $p=256$, and $\gamma=0.9$; Appendix~\ref{app:slot-history} expands the history represented by $S$ and $P$.

The slots and bank reset at each forward pass and maintain separate depth trajectories for each token. Information from preceding tokens reaches them through the backbone's causal representations and, during cached decoding, its cache.

\subsection{Routing completed computations}
For every layer with a nonempty bank, routing combines the current state, controller context, and layer identity:
\begin{equation}
\begin{aligned}
q_{\ell,t}&=W_q[h_{\ell,t};P_{\ell,t};e_\ell],&
k_{b,t}&=W_k[m_{b,t};e_b^{\mathrm{src}}],\\
a_{\ell,t,b}&=\softmax_{b\in\mathcal J_\ell}
\left(q_{\ell,t}^{\top}k_{b,t}/\sqrt r\right),&
c_{\ell,t}&=\sum_{b\in\mathcal J_\ell}a_{\ell,t,b}W_v[m_{b,t};e_b^{\mathrm{src}}].
\end{aligned}
\label{eq:routing}
\end{equation}
Dense softmax mixes the visible sources in each forward pass. Source and layer embeddings identify origin and destination; the hidden state and controller context determine the query. Learned $C_b$, $W_v$, and $U_\ell$ determine how the selected content reaches the frozen layer. At a fixed receiving layer, all interventions lie in the at-most-$r$-dimensional column space of $U_\ell$ before dtype rounding, even after calibration and scalar gating. Equation~\ref{eq:transport} gives the aggregate transport form.

\subsection{Calibrating the intervention}
A gate on an unnormalized projection does not directly specify the size of its effect: the same scalar can multiply writeback directions with very different norms. We first map the retrieved source mixture to $w_{\ell,t}=U_\ell c_{\ell,t}\in\R^d$ and calibrate its RMS to the receiving state. Define $\rms(x)=\sqrt{d^{-1}\sum_jx_j^2}$ and stop-gradient $\sg$. The implemented update is
\begin{equation}
\begin{aligned}
\widehat w_{\ell,t}
&=\begin{cases}
w_{\ell,t}\dfrac{\sg(\rms(h_{\ell,t}))}{\rms(w_{\ell,t})},
&\rms(w_{\ell,t})>\epsilon,\\[-1pt]
w_{\ell,t},&\text{otherwise},
\end{cases}\\
g_{\ell,t}&=g_{\max}\tanh\!\left(b_\ell+w_g^\top[h_{\ell,t};P_{\ell,t}]\right),
\qquad R_{\ell,t}=g_{\ell,t}\widehat w_{\ell,t}.
\end{aligned}
\label{eq:writeback}
\end{equation}
We use $\epsilon=10^{-6}$ and $g_{\max}=0.05$. The signed gate can reinforce or oppose the retrieved direction. Orthogonal initialization of $U_\ell$, zero initialization of $w_g$, and $b_\ell=\operatorname{arctanh}(0.4)$ give an initial calibrated magnitude of 2\%.

When $\rms(w_{\ell,t})>\epsilon$ and $h_{\ell,t}\neq0$, calibration gives the local scale identity
\begin{equation}
\rho_{\ell,t}:=\frac{\|R_{\ell,t}\|_2}{\|h_{\ell,t}\|_2}
=|g_{\ell,t}|<g_{\max}.
\label{eq:scale-identity}
\end{equation}
The gate specifies the local relative intervention size independently of the raw projection scale. Subsequent nonlinear computation can amplify or attenuate its effect. The small-norm fallback is part of the implementation; Appendix~\ref{app:calibration} derives the active-branch Jacobian and characterizes its threshold behavior.

\subsection{Training the auxiliary module}
For mathematics, the GRPO trainer implements a single-evaluation group-relative policy gradient with a frozen-base reference. For each prompt, it samples four completions and retries a correctness-uniform group at most three additional times, retaining the first mixed-correctness group or the final attempt. Binary parsed-correctness rewards are centered and standardized within valid completions to form $A_i$; responses that are both truncated and unparseable are omitted from this preference term.

The policy is the adapted language model's completion distribution. A sequence-normalized group-relative gradient trains source transport and control end to end, with one differentiable evaluation per retained group. Appendix~\ref{app:formal} derives the objective and its credit-assignment paths. We optimize
\begin{equation}
\mathcal L=\mathcal L_{\mathrm{policy}}
+\beta\mathcal L_{\mathrm{KL}}+\lambda\Omega,
\qquad \beta=0.02,\quad\lambda=0.01.
\label{eq:objective}
\end{equation}
Here $\mathcal L_{\mathrm{KL}}$ is the sampled token-level divergence surrogate to the backbone, and $\Omega$ penalizes routing non-uniformity and source-mixture magnitude using the full-tensor reduction. Appendix~\ref{app:formal} specifies the exact surrogate, masks, and sampling distribution. Resampling changes the generated-data budget independently of the adapter. The module has 40,681,724 loss-connected parameters (0.454\% of the backbone); its allocation is 41,730,332 (0.466\%), including an unused terminal compressor and source-embedding row.

\section{Experiments}
\subsection{Evaluation setting}\label{sec:setup}
We evaluate reasoning quality, trainable allocation, and the organization of that capacity. The backbone is Qwen3.5-9B-Base \citep{qwen2026model}, with 8,953,803,264 parameters. \trm\ uses 32 layers, four-layer blocks, rank 256, and eight controller slots of dimension 256. It allocates 41.730M trainable parameters (0.466\%); 40.682M are connected to the loss.

All trained methods in the mathematics comparison use the full GSM8K training set of 7,473 prompts. Evaluation covers all 1,319 GSM8K test questions \citep{cobbe2021gsm8k}, the 500-question MATH-500 split \citep{hendrycks2021math,lightman2024verify}, and 30 questions from each of AIME 2024 and 2025. We report means $\pm$ sample standard deviations (SD) across three evaluation rounds with seeds 42, 43, and 44. Within each round, the equal-family score is
\begin{equation}
\mathrm{MathAvg}=\frac{\mathrm{GSM8K}+\mathrm{MATH\text{-}500}+\tfrac12(\mathrm{AIME24}+\mathrm{AIME25})}{3}.
\label{eq:mathavg}
\end{equation}
Aggregate SD is computed across round-level aggregates; paired differences match round indices before computing their mean and SD. Baselines include the frozen model, full-parameter GRPO, RFT, LoRA at ranks 2 and 16, LoRA-MoE with RO-GRPO, LoRA with S-GRPO, and rank-16 LoRA with matched informative retries. The last compares weight adaptation and controlled reuse under the same retry rule at a similar trainable scale. Appendix~\ref{app:implementation} details scoring and aggregation.

\subsection{Parameter-efficient mathematical reasoning}\label{sec:math-generalization}
\begin{table}[!htbp]
\caption{\textbf{Mathematical reasoning after GSM8K RL.} Accuracy (\%), mean $\pm$ SD across three evaluation seeds. AIME averages the two annual scores. LoRA-r16 (retries) matches \trm\textquotesingle s retry rule. Parameters are in millions; a dash denotes no update. Bold/underlined means are best/second-best, including ties; fewer trainable parameters are preferred.}
\label{tab:math}
\centering\small
\setlength{\tabcolsep}{4pt}
\renewcommand{\arraystretch}{1.05}
\begin{tabular}{lrrrrr}
\toprule
\rowcolor{paperhead}
\textbf{Method} & \textbf{Params (M)} & \textbf{GSM8K} & \textbf{MATH-500} & \textbf{AIME mean} & \textbf{MathAvg} \\
\midrule
Frozen base & \zerodash{0} & \stat{88.73}{0.95} & \stat{75.60}{2.25} & \stat{31.67}{3.33} & \stat{65.33}{0.89} \\
Full-parameter GRPO & 8954.000 & \stat{93.86}{0.26} & \stat{83.07}{0.70} & \stat{44.44}{5.36} & \stat{73.79}{1.83} \\
RFT & 8954.000 & \stat{91.86}{1.16} & \stat{82.93}{2.47} & \stat{46.11}{2.55} & \stat{73.64}{1.33} \\
LoRA-r2 + GRPO & \textbf{5.410} & \stat{93.51}{0.23} & \stat{84.53}{0.81} & \stat{42.22}{12.73} & \stat{73.42}{4.28} \\
LoRA-r16 + GRPO & 43.278 & \stat{93.81}{0.77} & \stat{85.13}{2.53} & \stat{45.56}{6.31} & \stat{74.83}{1.41} \\
LoRA-MoE + RO-GRPO & 173.112 & \stat{95.30}{0.61} & \stat{87.47}{1.45} & \stat{44.44}{2.55} & \stat{75.74}{1.47} \\
LoRA + S-GRPO & 43.278 & \stat{94.64}{0.46} & \stat{84.33}{1.86} & \secondstat{48.89}{8.22} & \stat{75.95}{2.43} \\
LoRA-r16 + GRPO, retries & 43.278 & \secondstat{95.58}{0.50} & \secondstat{87.93}{0.12} & \stat{48.33}{6.01} & \secondstat{77.28}{1.95} \\
\midrule\rowcolor{trmlight}
\textbf{\trm\ + GRPO} & \underline{41.730} & \beststat{97.62}{0.79} & \beststat{92.73}{0.70} & \beststat{60.56}{3.47} & \beststat{83.64}{1.16} \\
\bottomrule
\end{tabular}
\end{table}

\paragraph{Higher accuracy with 0.466\% of the backbone's parameters.}
\trm\ reaches $83.64\pm1.16$ MathAvg with 41.73M trainable parameters, versus $73.79\pm1.83$ for full-parameter GRPO over all 8.95B parameters (Table~\ref{tab:math}). The paired improvement is $9.85\pm2.93$ points. The compact interface leads all three mathematical families, combining 97.62 on GSM8K with 92.73 on MATH-500 and 60.56 mean AIME accuracy. These results link parameter efficiency to reasoning performance beyond the GSM8K training domain.

\paragraph{Stronger generalization at a comparable adaptation budget.}
Matched-retry LoRA uses 43.28M parameters and reaches $77.28\pm1.95$ MathAvg. \trm\ improves the mean by 6.36 points with 41.73M parameters. Its advantage spans MATH-500 ($+4.80$), GSM8K ($+2.04$), and AIME (about $+12.2$). The gains on broader and competition mathematics show that learning reuse supports generalization across mathematical task families. Figure~\ref{fig:reasoning-gains}a,b,d relates this improvement to trainable allocation under the matched retry rule.

\subsection{Agentic search}\label{sec:agentic-search}
\begin{table}[!htbp]
\caption{\textbf{Agentic-search comparison.} Mean $\pm$ SD across three evaluation seeds. BrowseComp Plus uses answer token-F1 $\times100$; ASearch Test uses a 0--100 score. $\Delta$ is the paired difference from full-parameter GRPO; dashes mark the reference. Rankings follow Table~\ref{tab:math}.}
\label{tab:agents}
\centering\small
\setlength{\tabcolsep}{5pt}
\renewcommand{\arraystretch}{1.05}
\begin{tabular}{lrrrr}
\toprule
\rowcolor{paperhead}
\textbf{Method} & \textbf{BrowseComp F1} & $\Delta$ vs GRPO & \textbf{ASearch} & $\Delta$ vs GRPO \\
\midrule
Frozen base & \stat{5.11}{0.30} & \stat{-28.12}{1.25} & \stat{59.76}{0.41} & \stat{-10.19}{0.56} \\
Full-parameter GRPO & \stat{33.23}{1.54} & \zerodash{0.00} & \stat{69.94}{0.53} & \zerodash{0.00} \\
RFT & \stat{31.20}{0.75} & \stat{-2.03}{2.07} & \stat{65.81}{0.92} & \stat{-4.14}{1.18} \\
LoRA-r2 + GRPO & \stat{32.98}{0.56} & \stat{-0.25}{1.25} & \stat{69.19}{0.59} & \stat{-0.75}{1.01} \\
LoRA-r16 + GRPO & \stat{36.06}{0.56} & \stat{+2.83}{1.80} & \stat{70.59}{0.18} & \stat{+0.64}{0.64} \\
LoRA-MoE + RO-GRPO & \beststat{37.00}{1.17} & \beststat{+3.77}{2.56} & \stat{71.46}{0.55} & \stat{+1.52}{0.65} \\
LoRA + S-GRPO & \stat{35.50}{0.57} & \stat{+2.27}{1.58} & \stat{70.67}{0.19} & \stat{+0.73}{0.34} \\
LoRA-r16 + GRPO, retries & \stat{36.90}{1.13} & \stat{+3.67}{1.32} & \secondstat{72.80}{0.75} & \secondstat{+2.86}{0.63} \\
\midrule\rowcolor{trmlight}
\textbf{\trm\ + GRPO} & \secondstat{36.98}{1.16} & \secondstat{+3.75}{0.91} & \beststat{73.98}{0.74} & \beststat{+4.04}{1.27} \\
\bottomrule
\end{tabular}
\end{table}

The search models are trained separately using ASearcher data \citep{asearcherdata} and the \texttt{search-agent-rl} implementation \citep{searchagentrl}, which generates multi-turn trajectories with a retrieval-and-summary tool. \trm\ reaches $36.98\pm1.16$ on BrowseComp Plus \citep{chen2025browsecompplus} and $73.98\pm0.74$ on ASearch Test (Table~\ref{tab:agents}). Its paired gains over full-parameter GRPO are $3.75\pm0.91$ F1 points and $4.04\pm1.27$ ASearch points. It achieves the highest ASearch mean, 1.18 points above matched-retry LoRA. On BrowseComp Plus, its 36.98 F1 is comparable to LoRA-MoE's 37.00 with 24.1\% of the latter's trainable allocation. These results extend the quality--allocation tradeoff to tool-mediated information search.

\begin{figure}[!htbp]
\centering
\includegraphics[width=\linewidth]{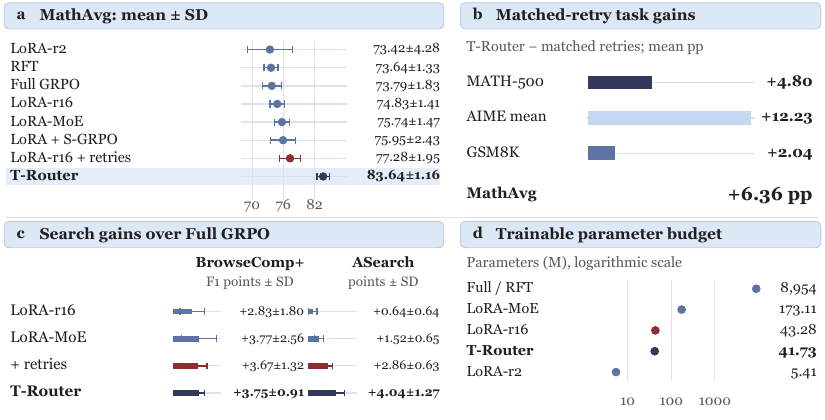}
\caption{\textbf{Reasoning quality and parameter efficiency.} (a) MathAvg mean $\pm$ SD. (b) Task-family gains over matched-retry LoRA, computed from displayed means. (c) Paired gains over full-parameter GRPO in each search metric, mean $\pm$ SD. (d) Trainable allocations on a logarithmic scale; LoRA-r16 variants share the same budget. Error bars summarize three evaluation rounds.}
\label{fig:reasoning-gains}
\end{figure}

\subsection{Addressable content and recurrent control}\label{sec:design-implications}
Table~\ref{tab:components} tests the source bank, controller, writeback, and training recipe. Figure~\ref{fig:components} aligns task responses with capacity and cost; Appendix~\ref{app:results-atlas} gives complete numerical profiles.

\begin{table}[!t]
\caption{\textbf{Components of the reuse interface.} Accuracy (\%), mean $\pm$ SD across evaluation seeds. $\Delta$ is the paired MathAvg difference from full \trm; a dash marks the reference. The MLP retains comparable parameter capacity; hidden-state memory keeps the full allocation. Appendix~\ref{app:component-results} includes both annual AIME scores.}
\label{tab:components}
\centering\small
\setlength{\tabcolsep}{6pt}
\renewcommand{\arraystretch}{1.05}
\begin{tabular}{lrrrr}
\toprule
\rowcolor{paperhead}
\textbf{Configuration} & \textbf{GSM8K} & \textbf{MATH-500} & \textbf{MathAvg} & $\Delta$ \\
\midrule
\rowcolor{trmlight}
\textbf{Full \trm} & \beststat{97.62}{0.79} & \beststat{92.73}{0.70} & \beststat{83.64}{1.16} & \zerodash{0.00} \\
Without cross-layer retrieval & \stat{96.26}{0.59} & \stat{79.40}{1.56} & \stat{69.48}{2.36} & \stat{-14.16}{1.78} \\
Hidden-state memory & \secondstat{96.79}{0.53} & \stat{77.80}{1.31} & \stat{67.46}{2.19} & \stat{-16.18}{2.96} \\
Learned static routing & \stat{96.01}{0.68} & \stat{77.27}{0.81} & \stat{64.98}{1.71} & \stat{-18.66}{2.78} \\
State replaced by matched MLP & \stat{95.02}{0.75} & \secondstat{83.93}{1.92} & \secondstat{72.80}{4.10} & \secondstat{-10.84}{3.09} \\
Without layer/block identity & \stat{94.29}{0.42} & \stat{66.13}{1.80} & \stat{58.10}{1.06} & \stat{-25.53}{0.56} \\
Without RMS alignment & \stat{94.57}{0.64} & \stat{67.93}{3.80} & \stat{56.39}{0.41} & \stat{-27.25}{1.24} \\
Without token-conditioned gate & \stat{96.26}{0.42} & \stat{66.73}{1.81} & \stat{63.78}{2.61} & \stat{-19.86}{1.47} \\
Without informative retries & \stat{94.39}{0.55} & \stat{74.87}{2.55} & \stat{62.53}{1.20} & \stat{-21.11}{2.26} \\
\bottomrule
\end{tabular}
\end{table}

\begin{figure}[!t]
\centering
\includegraphics[width=\linewidth]{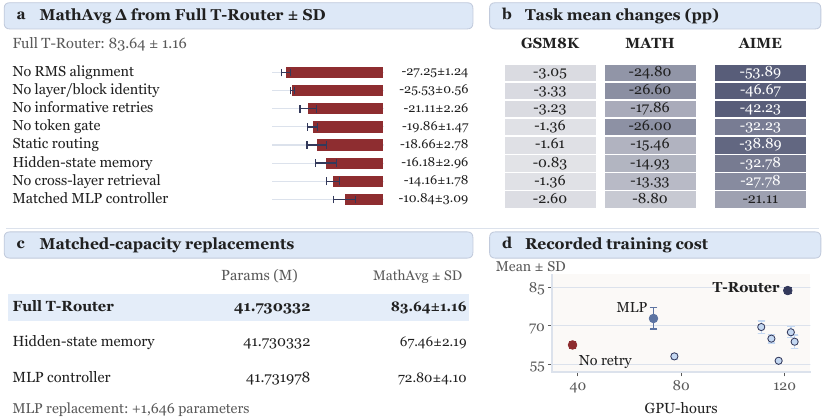}
\caption{\textbf{Dissecting controlled reuse.} (a) Paired MathAvg changes from full \trm, mean $\pm$ SD. (b) Family-wise changes in means; AIME averages both years. (c) Memory and controller replacements at matched capacity. (d) MathAvg versus recorded GPU-hours for all configurations. Numerical summaries in (c) and error bars in (d) show MathAvg SD across evaluation rounds. The no-retry setting also changes the generation budget.}
\label{fig:components}
\end{figure}

\Needspace{6\baselineskip}
\paragraph{Preserve contributions as reusable content.}
With exactly 41,730,332 parameters in both settings, block-change memory reaches $83.64\pm1.16$ MathAvg versus $67.46\pm2.19$ for hidden-state memory. Removing cross-layer retrieval yields $69.48\pm2.36$. The full interface favors a particular use of its capacity: preserve completed contributions as individually addressable sources for later computation. Compression retains each block's identity, allowing a receiver to revisit its contribution through a low-dimensional record while preserving backbone width.

\paragraph{Use depth history as routing context.}
The recurrent interface reaches $83.64\pm1.16$ MathAvg versus $72.80\pm4.10$ for a capacity-matched MLP with 41,731,978 parameters. Here $S$ accumulates depth history, and its readout $P$ conditions the router and gate while content comes from the bank.

\subsection{Source identity and relative-scale writeback}\label{sec:writeback-control}
Table~\ref{tab:components} tests the coordinates used to learn reuse. Layer and block identities specify the contribution's origin and receiving location; the full setting reaches 83.64 MathAvg versus 58.10 without them. The token-conditioned gate sets the mixture's signed influence; its removal gives 63.78. RMS alignment makes this influence relative to the receiving residual (Equation~\ref{eq:scale-identity}); omitting it gives 56.39. This calibration changes both forward intervention scale and gradient geometry, as derived in Appendix~\ref{app:calibration}. It provides a scale-defined optimization interface for learning writeback. Appendix~\ref{app:results-atlas} further characterizes the interaction between writeback and routing structure.

\subsection{Learning and executing the reuse policy}\label{sec:writeback}
\paragraph{Informative groups support learning the reuse policy.}
The full recipe records 121.23 GPU-hours and $83.64\pm1.16$ MathAvg; without informative retries, the corresponding values are 37.92 GPU-hours and $62.53\pm1.20$. Retries invest additional generation in finding mixed-reward groups that carry a policy-gradient signal. Matched-retry LoRA applies the same rule and reaches $77.28\pm1.95$ (Table~\ref{tab:math}). The capacity-matched MLP records 69.25 GPU-hours and $72.80\pm4.10$. Figure~\ref{fig:components}c,d shows these quality--cost profiles; parameter allocation and recorded training effort are distinct efficiency dimensions.

\paragraph{Summary: learning what to reuse and how strongly.}
At each receiver, source transports share $U_\ell$, constraining the intervention to an at-most-$r$-dimensional subspace (Appendix~\ref{app:formal-transport}). Source attention, recurrent context, and signed gating choose an input-dependent intervention within that space. The method comparison establishes gains across three mathematical families over matched-retry LoRA; the capacity-controlled comparisons favor block changes and recurrent context. Correctness rewards optimize these content and control pathways together. Appendix~\ref{app:results-atlas} contains complete configuration profiles, costs, auxiliary metrics, and analytical score reaggregations.

\section{Conclusion}
\trm\ improves the parameter efficiency of reasoning RL by learning how completed computations are reused. Its compressed source bank, depth-recurrent controller, and relative-scale writeback form a compact thalamic routing interface around a frozen backbone. With 0.466\% of the backbone's parameter count, it reaches 83.64 MathAvg versus 73.79 for full-parameter GRPO and 77.28 for matched-retry LoRA. Capacity-controlled comparisons favor addressable changes and recurrent context; separate search training extends the interface to tool use. The central result is an efficient allocation of adaptation capacity: preserve pretrained transformations, and learn how their contributions serve subsequent reasoning.

\subsection*{Reproducibility statement}
Section~\ref{sec:method} defines the module and training objective. Appendix~\ref{app:implementation} gives tensor shapes, exact parameter allocation, execution order, and evaluation procedures. Appendix~\ref{app:formal} supplies derivations. Appendix~\ref{app:results-atlas} includes the complete component results and costs, evaluation means and variances, configuration comparisons, auxiliary metrics, and definitions of every analytical reaggregation. The accompanying implementation includes the module, configuration constructor, training launcher, and evaluators; figure sources expose the tabulated numerical inputs.

\subsection*{AI use statement}
We used AI assistants in two roles. First, for manuscript preparation, including conceptual framing, drafting, language editing, literature retrieval, mathematical derivations, and result interpretation. Second, for coding, LaTeX production, and figure preparation, including the AI-generated brain illustration. The authors made the final decisions on the method, experimental design, and implementation, verified all AI-assisted output, and take full responsibility for this paper.

\FloatBarrier
\phantomsection
\pdfbookmark[0]{References}{references}

\clearpage
\appendix
\ThuAppendixLayout
\TRMAppendixFront

\section*{Guide to the appendices}
The appendices provide the full computational specification, mathematical analysis, and configuration-level results supporting the communication interface. Table~\ref{tab:appendix-guide} organizes the material by the question it answers.

\begin{table}[!htbp]
\centering
\caption{\textbf{Appendix guide.} Each part develops a distinct aspect of the method and study.}
\label{tab:appendix-guide}
\begin{tabularx}{\linewidth}{>{\raggedright\arraybackslash}p{4.1cm}Y r}
\toprule
\rowcolor{paperhead}
Question & Material & Page \\
\midrule
How is the module executed? & Tensor shapes, exact parameter counts, layer schedule, state lifetimes, training settings, prompts, scoring, and analytical storage & \pageref{app:implementation} \\
What properties follow from the design? & Causal source availability, transport subspaces, slot history, calibration derivatives, gradient flow, and finite group sampling & \pageref{app:formal} \\
What do the configurations reveal? & Full-test comparisons, component results and costs, writeback and initialization comparisons, routing interactions, auxiliary diagnostics, evaluation variance, and metric sensitivity & \pageref{app:results-atlas} \\
How do the components fit together? & Thalamic functional correspondence, bank/controller roles, intervention coordinates, and adaptation interfaces & \pageref{app:design} \\
\bottomrule
\end{tabularx}
\end{table}

Tables~\ref{tab:math}--\ref{tab:components} present the primary method and component comparisons. The results atlas includes their complete mean/SD profiles, sample variances, and resource measurements, alongside the historical configuration studies. Arithmetic decompositions and analytical illustrations are identified by their formulas and captions. The execution specification and derivations use zero-based decoder-layer indices, matching the source-availability schedule. Symbols retain the definitions in Section~\ref{sec:method} throughout.

\FloatBarrier
\section{Implementation and evaluation details}\label{app:implementation}
This appendix specifies the computational objects, execution schedule, parameter allocation, and training and evaluation interfaces used by \trm. The module operates on the residual input of a decoder layer. Source recording occurs after a completed block; the controller and receiving-layer intervention execute before the next frozen transformation. Keeping these events explicit gives a direct correspondence between the mathematical definition and a causal language-model implementation.

\subsection{Notation and tensor organization}
The batch dimension is $B$, the number of positions in the current forward call is $T$, and the backbone residual width is $d$. We use $L$ decoder layers, source blocks of size $s$, memory width $r$, $K$ controller slots of width $p$, controller MLP width $c$, and source/layer embedding width $e$. Canonical values are
\[
(L,s,d,r,K,p,c,e)=(32,4,4096,256,8,256,256,32).
\]
A token position is a position processed by the current call. During prompt prefill, $T$ can be the entire prompt length. During ordinary one-token cached decoding, $T=1$. All auxiliary operations broadcast over $B$ and $T$; attention over source blocks and attention over controller slots are separate from the backbone's attention over token positions.

\begin{table}[!htbp]
\centering\small
\caption{\textbf{Runtime tensors and their axes.} $J_\ell=|\mathcal J_\ell|$ is the number of completed sources visible before layer $\ell$. No source or slot axis is a token-history axis.}
\label{tab:runtime-shapes}
\begin{tabularx}{\linewidth}{l l Y}
\toprule
\rowcolor{paperhead}
Object & Tensor shape & Role \\
\midrule
Residual $H_\ell$ & $B\times T\times d$ & Input to a receiving decoder layer \\
Block anchor $A$ & $B\times T\times d$ & Post-intervention input at the start of a block \\
Block displacement $D_b$ & $B\times T\times d$ & Completed block output minus its anchor \\
Compressed memory $M_b$ & $B\times T\times r$ & Source-specific record of that displacement \\
Visible bank & $B\times T\times J_\ell\times r$ & Completed records available to a receiver \\
Controller slots $S_\ell$ & $B\times T\times K\times p$ & Depth history used to condition retrieval \\
Controller context $P_\ell$ & $B\times T\times p$ & Slot readout for the source query and gate \\
Source weights $a_\ell$ & $B\times T\times J_\ell$ & Distribution over completed source blocks \\
Source mixture $c_\ell$ & $B\times T\times r$ & Weighted sum of projected source values \\
Raw direction $W_\ell$ & $B\times T\times d$ & Receiving-layer projection of $c_\ell$ \\
Gate $g_\ell$ & $B\times T\times1$ & Signed coefficient for calibrated writeback \\
Intervention $R_\ell$ & $B\times T\times d$ & Additive input to the frozen layer \\
\bottomrule
\end{tabularx}
\end{table}

The ordering of the bank is chronological in depth. Every record keeps its own compressor and source identity even when its current routing weight is small. Layer embeddings identify receiving locations, while source embeddings identify completed blocks. Their roles are asymmetric: a receiver asks a context-dependent question about the available records, and each record provides an origin-labelled key and value. The canonical configuration retains all completed blocks within the forward call.

Controller context is the $p$-dimensional readout $P_{\ell,t}$ in Equation~\ref{eq:controller-readout}. The learned, bias-free maps $Q_s\in\R^{p\times d}$ and $K_s,V_s\in\R^{p\times p}$ are shared across layers. Attention compares $Q_sh_{\ell,t}$ with the $K$ updated slot keys, normalizes over slots, and averages their projected values. The recurrent state $S$ combines residual states, source summaries, and layer identity; the readout $P$ conditions source selection and the gate. For example, layers 4 and 5 see the same bank $\{m_0\}$, but their residuals and controller states can differ, yielding different contexts. With one source its routing weight is one, while the learned gate can still vary with $P$.

\subsection{Parameter allocation and initialization}
Table~\ref{tab:module-shapes} counts scalar parameters by component. Matrices are expressed as output-by-input dimensions. The controller MLP is
\[
\operatorname{MLP}(z)=W_2\operatorname{SiLU}(W_1z+b_1)+b_2,
\qquad z\in\R^{2p+e},
\]
with hidden dimension $c$ and output dimension $2p$. The two output halves provide a shared proposal and an elementwise proposal gate. The MLP has biases; the listed linear projections are otherwise bias free. The receiving gate has a shared vector and a separate bias for each routed layer.

\begin{table}[!htbp]
\centering\small
\caption{\textbf{Canonical module dimensions and exact parameter counts.} Counts include the allocated terminal source. The final row counts parameters with a path to the training loss.}
\label{tab:module-shapes}
\begin{tabular}{llr}
\toprule
\rowcolor{paperhead}
Component & Shape or multiplicity & Parameters \\
\midrule
Block compressors $C_b$ & $8\times(r\times d)$ & 8,388,608 \\
Writeback projections $U_\ell$ & $28\times(d\times r)$ & 29,360,128 \\
Layer and source embeddings & $32e+8e$ & 1,280 \\
Initial controller slots & $K\times p$ & 2,048 \\
Routing query $W_q$ & $r\times(d+p+e)$ & 1,122,304 \\
Routing key and value & $2\times[r\times(r+e)]$ & 147,456 \\
Hidden projection $A_h$ & $p\times d$ & 1,048,576 \\
Memory projection $A_m$ & $p\times r$ & 65,536 \\
Controller MLP & $(2p+e)\to c\to2p$ & 271,104 \\
Slot selector $W_s$ & $p\times(2p+e)$ & 139,264 \\
Slot query $Q_s$ & $p\times d$ & 1,048,576 \\
Slot key and value & $2\times(p\times p)$ & 131,072 \\
Dynamic and layer gates & $\R^{d+p}$ vector $+$ 28 scalars & 4,380 \\
\midrule
Allocated total & & 41,730,332 \\
Terminal source parameters & $rd+e$ & 1,048,608 \\
\textbf{Loss-connected total} & & \textbf{40,681,724} \\
\bottomrule
\end{tabular}
\end{table}

There are 28 receiving projections because the first four layers have no completed source to read. Eight compressors are allocated, one for each block, while the eighth source is produced after the last receiving decision. Removing that terminal compressor and its source embedding changes the allocated count by $rd+e=1,048,608$ and leaves the output unchanged. We distinguish allocation from loss connectivity so that the parameter count describes both the implemented state and the effective learning interface.

The allocated count is 0.4661\% of the 8,953,803,264-parameter backbone; the loss-connected count is 0.4544\%. Receiving projections account for 29.36M scalars and source compression accounts for 8.39M. Together they contain approximately 90.46\% of the allocation. The remainder implements routing, state update, slot reading, and scalar control. Thus most trainable capacity transforms source content into receiver-specific directions; the controller coordinates those transformations with a comparatively compact shared mechanism.

\begin{table}[!htbp]
\centering\small
\caption{\textbf{Initialization of the canonical learned-RMS module.} Initialization assigns content transport and intervention magnitude distinct roles.}
\label{tab:initialization-details}
\begin{tabularx}{\linewidth}{>{\raggedright\arraybackslash}p{3.8cm}Y}
\toprule
\rowcolor{paperhead}
Component & Initialization \\
\midrule
Source compressors, source/layer embeddings, initial slots & Gaussian with standard deviation $10^{-3}$ \\
Routing, controller-input and slot projections & Xavier initialization \\
Controller MLP output layer & Zero weights and biases \\
Receiving projections $U_\ell$ & Orthogonal initialization, gain 1 \\
Shared dynamic gate vector & Zero \\
Receiving-layer gate bias & $\operatorname{arctanh}(0.02/0.05)$ \\
Gate bound and initial coefficient & $g_{\max}=0.05$ and $g_{\mathrm{init}}=0.02$ \\
\bottomrule
\end{tabularx}
\end{table}

The zero MLP output makes the initial proposal vanish, so the slots initially follow their decay from the learned starting array. This does not zero the complete module: the source bank and orthogonal receiving projections provide a direction, and the initialized gate applies a calibrated 2\% coefficient. As optimization changes the proposal, different layers contribute different updates to the slot history. Appendix~\ref{app:slot-history} derives that accumulation.

\subsection{Layer schedule and source availability}
The distinction between a source's creation and a receiver's decision is explicit in Table~\ref{tab:depth-schedule}. Before a layer executes, its controller reads the records that already exist. The intervention is applied next. If the layer starts a block, its post-intervention residual becomes the anchor. The frozen layer then runs, and a block-ending layer creates a new source for subsequent receivers.

\begin{table}[!htbp]
\centering\small
\caption{\textbf{Source availability over 32 decoder layers.} Layer indices are zero based. Each row covers four receiver decisions; the source appended at the end is available from the next row onward.}
\label{tab:depth-schedule}
\begin{tabular}{llllr}
\toprule
\rowcolor{paperhead}
Receiving layers & Sources visible before layer & Appended after & New source & Read count \\
\midrule
0--3 & None & Layer 3 & $m_0$ & 0 \\
4--7 & $m_0$ & Layer 7 & $m_1$ & 1 \\
8--11 & $m_0,m_1$ & Layer 11 & $m_2$ & 2 \\
12--15 & $m_0,m_1,m_2$ & Layer 15 & $m_3$ & 3 \\
16--19 & $m_0,\ldots,m_3$ & Layer 19 & $m_4$ & 4 \\
20--23 & $m_0,\ldots,m_4$ & Layer 23 & $m_5$ & 5 \\
24--27 & $m_0,\ldots,m_5$ & Layer 27 & $m_6$ & 6 \\
28--31 & $m_0,\ldots,m_6$ & Layer 31 & $m_7$ & 7 \\
\bottomrule
\end{tabular}
\end{table}

The total number of source--receiver pairs per position is $4(1+2+\cdots+7)=112$. These are attention candidates, not 112 backbone layer executions: the backbone always runs exactly 32 layers. Source selection has one candidate in layers 4--7, so its normalized weight is identically one there. The receiving projection and gate still operate in those layers. Source preference becomes a nontrivial distribution at layer 8, when the second completed block is available.

At layer 4, for example, the controller has already progressed through four earlier updates. It incorporates $m_0$, reads its slots, and constructs the direction for layer 4 from the sole source. At layer 8, the same sequence of operations includes $m_0$ and $m_1$. Their attention weights can differ with the receiving token, hidden state, and accumulated controller context. This schedule separates the growth of the source set from the evolution of the state used to query it.

\begin{algorithm}[!htbp]
\caption{One causal \trm\ forward pass}\label{alg:trm}
\begin{algorithmic}[1]
\Require Residual tensor $H_0\in\R^{B\times T\times d}$, frozen layers $\mathcal B_0,\ldots,\mathcal B_{L-1}$
\State Copy the learned initial slots to all current batch/token positions
\State Set source bank $\mathcal M\gets\varnothing$ and block anchor $A\gets\varnothing$
\For{$\ell=0,\ldots,L-1$}
  \State Compute the mean visible memory; use zero when the bank is empty
  \State Update slots, then read $P_\ell$ using Equations~\ref{eq:controller-input}--\ref{eq:controller-readout}
  \If{$\mathcal M\neq\varnothing$}
    \State Compute source weights and retrieved source mixture using Equation~\ref{eq:routing}
    \State Project, calibrate and gate the writeback using Equation~\ref{eq:writeback}
  \Else
    \State $R_\ell\gets0$
  \EndIf
  \State $\widetilde H_\ell\gets H_\ell+R_\ell$
  \If{$\ell\bmod s=0$}
    \State $A\gets\widetilde H_\ell$
  \EndIf
  \State $H_{\ell+1}\gets\mathcal B_\ell(\widetilde H_\ell)$
  \If{$(\ell+1)\bmod s=0$}
    \State $b\gets\lfloor\ell/s\rfloor$
    \State Append $(C_b(H_{\ell+1}-A),e_b^{\rm src})$ to $\mathcal M$
    \State Clear the completed block anchor
  \EndIf
\EndFor
\State \Return $H_L$
\end{algorithmic}
\end{algorithm}

The post-intervention anchor matters for the meaning of a record. The writeback immediately before the block's first layer is already included in the anchor; subsequent writebacks are included in the completed displacement. A record therefore describes the change produced along the block's actual executed path. The decomposition in Appendix~\ref{app:formal} makes this distinction algebraic. It is useful when comparing a stored displacement with a hypothetical sum of frozen-layer outputs evaluated on another trajectory.

\subsection{State lifetime, precision, and differentiation}
Every forward call starts a fresh bank and copies the learned initial slot array. Runtime state also resets when the first target layer is entered. During teacher forcing, each token position has its own depth-local bank and controller. During cached generation, each new call constructs these objects for the current positions, while the backbone retains its normal causal token cache. Depth recurrence therefore operates inside a forward call; the backbone cache carries context across calls.

\begin{table}[!htbp]
\centering\small
\caption{\textbf{Objects with different lifetimes.} The same word ``memory'' can refer to trainable state, temporary depth context, or the language model's causal cache.}
\label{tab:state-lifetimes}
\begin{tabularx}{\linewidth}{lYY}
\toprule
\rowcolor{paperhead}
Object & Lifetime & Update rule \\
\midrule
Adapter parameters & Shared across examples and forward calls & Optimizer update during training \\
Initial slot array & Learned parameter & Copied at forward initialization \\
Controller state & Current forward call and token position & Recurrent update at each layer \\
Source bank & Current forward call and token position & Append after a completed block \\
Block anchor & Current block & Replace at block start; clear at block end \\
Backbone causal cache & Generation context & Managed by the backbone's decoder \\
\bottomrule
\end{tabularx}
\end{table}

The adapter computes its projections, slots, normalization, and gates in fp32. Frozen backbone tensors use bfloat16. The final intervention is cast back to the receiving residual's dtype before addition, keeping the next decoder layer's expected dtype. The target hidden-state RMS is detached; gradients remain active through the direction normalization, source weights, source content, controller, and gate inputs. This is the implementation of the geometric distinction analyzed in Appendix~\ref{app:calibration}.

Frozen backbone parameters have no parameter gradients, while their input derivatives connect earlier interventions to the final loss. Compressed records remain in the graph as well. An earlier intervention can therefore influence later losses through both the ordinary residual path and the record subsequently read by another layer. The first routed layer is layer 4; the prefix before that point supplies inputs to trainable auxiliary operations without requiring gradients for its frozen weights.

Activation checkpointing encloses the frozen decoder-layer computation. Source recording and controller updates occur outside the recomputed function. Consequently, recomputation reevaluates a frozen transformation at its saved input and does not append another source or advance the controller a second time. This placement preserves the one-update-per-layer semantics of Algorithm~\ref{alg:trm}. The adapter remains differentiable while the frozen backbone is kept in evaluation mode.

\subsection{Training configuration and optimization flow}
The final mathematics comparison trains each adaptation method on all 7,473 GSM8K training examples. Evaluation uses the full GSM8K test set, all 500 MATH-500 questions, and both 30-question AIME sets. The component study reports an estimated 3,737 optimizer steps per configuration, with exact trainable allocations and training costs in Table~\ref{tab:ablation-resources}. Prompt count, optimizer steps, and generated responses describe different aspects of this training budget.

\begin{table}[!htbp]
\centering\small
\caption{\textbf{Final mathematics comparison: training and evaluation scope.} The full-test suite is shared by the methods in Table~\ref{tab:math}.}
\label{tab:training-settings}
\begin{tabularx}{\linewidth}{lY}
\toprule
\rowcolor{paperhead}
Setting & Value \\
\midrule
Training data & GSM8K train: 7,473 questions \\
Evaluation data & GSM8K: 1,319; MATH-500: 500; AIME24/25: 30 each \\
Evaluation generation & One generated answer per question in each evaluation round \\
Evaluation aggregate & Equal weight for GSM8K, MATH-500, and mean AIME \\
T-Router allocation & 41,730,332 parameters; 40,681,724 loss connected \\
Reference policy & Frozen backbone with the adapter disabled \\
\bottomrule
\end{tabularx}
\end{table}

\begin{table}[!htbp]
\centering\small
\caption{\textbf{Optimization controls exposed by the implementation.} Values are the implementation's configurable mathematics recipe. Sampling and regularization are specified separately from the benchmark aggregation rule.}
\label{tab:implementation-controls}
\begin{tabularx}{\linewidth}{lYlY}
\toprule
\rowcolor{paperhead}
Control & Value & Control & Value \\
\midrule
Optimizer & AdamW & Weight decay & 0.01 \\
Peak learning rate & $10^{-4}$ & Warmup & 10\% of scheduled updates \\
Schedule & Linear warmup/decay & Accumulation & 2 prompt groups \\
Prompt batch size & 1 & Group size & 4 completions \\
Additional retry limit & 3 groups & Temperature & 0.7 \\
Top-$p$ & 0.95 & Completion ceiling & 512 tokens \\
KL coefficient & 0.02 & Allocation coefficient & 0.01 \\
Backbone dtype & bfloat16 & Adapter arithmetic & fp32 \\
Frozen-layer checkpointing & Enabled & Slot-diversity weight & 0 \\
\bottomrule
\end{tabularx}
\end{table}

For each prompt, generation uses the current adapted model. A group whose correctness rewards are all equal can be replaced by another group for that same prompt, with at most three replacements. The retained group is the first mixed-correctness group or the final attempt. The procedure then evaluates adapted and adapter-disabled log probabilities on the retained completions, computes group-standardized advantages, and differentiates the policy, KL, and allocation terms. The configurable recipe accumulates two prompt groups before an optimizer update. For $N_{\rm opt}$ scheduled updates, warmup lasts $\max(1,\lfloor N_{\rm opt}/10\rfloor)$ updates.

The generation distribution uses temperature and nucleus sampling, whereas the objective uses full-vocabulary model log probabilities. An explicit top-$k$ override is not applied, so generation retains the loaded model's top-$k$ setting. These are distinct parts of the procedure: sampling determines which responses enter the group, and the differentiable objective assigns gradients to those responses. The exact group selection law and finite response budget are derived in Appendix~\ref{app:formal}.

The adapter-disabled reference retains the same frozen backbone parameters. No separately trained reward model is used; the reward is defined by final-answer correctness. The configurable recipe disables parse-failure reward shaping and the optional length-format and correct-response anchoring losses. A completion that is both truncated and unparseable has zero preference weight. Other incorrect but valid completions remain in the group with reward zero. The KL term applies to the nonpadding completion tokens of the retained responses. The allocation term uses source weights $a_\ell$ and source mixtures $c_\ell$ across the current sequence, including prompt positions. The corresponding reductions are specified in Equation~\ref{eq:routing-regularizer} and the training analysis.

\subsection{Evaluation tasks, repeated rounds, and scoring}
Generative evaluations use three rounds with random seeds 42, 43, and 44. For a score $x_j$ in round $j$, the reported statistics are
\begin{equation}
\bar{x}=\frac{1}{3}\sum_{j=1}^{3}x_j,\qquad
s^2=\frac{1}{2}\sum_{j=1}^{3}(x_j-\bar{x})^2,\qquad s=\sqrt{s^2}.
\label{eq:evaluation-statistics}
\end{equation}
Tables report $\bar{x}\pm s$; $s$ is the sample SD across evaluation rounds. Within each round, AIME mean averages the two annual accuracies and MathAvg averages the three task families. For method comparisons, $d_j=x_j-y_j$ pairs matching round indices, and the same formulas give $\bar d$ and its SD. Means and SD are displayed to two decimals, after aggregation; Appendix~\ref{app:evaluation-variance} reports the complete variances to four decimals. Accuracy SD is in percentage points and its variance in squared percentage points; search scores use their corresponding points and squared points. Parameter allocations and GPU-hours are resource records, separate from these evaluation statistics.

The mathematics suite measures grade-school arithmetic, broader mathematical problem solving, and competition mathematics. Table~\ref{tab:eval-procedures} specifies the full-test denominators. Auxiliary tasks examine narrative reasoning with MuSR \citep{sprague2024musr}, domain-question answering with GPQA \citep{rein2023gpqa}, instruction compliance with IFEval \citep{zhou2023ifeval}, and language-model likelihood with WikiText-2 \citep{merity2017wikitext}. Their scores remain in their native units rather than entering MathAvg. MATH-500 uses the held-out subset released with PRM800K \citep{lightman2024verify}, accessed through the \href{https://huggingface.co/datasets/HuggingFaceH4/MATH-500}{Hugging Face interface} in Table~\ref{tab:dataset-versions}.

\begin{table}[!htbp]
\centering\small
\caption{\textbf{Full-test mathematics evaluation.} Each evaluation round covers the listed questions. AIME's two annual scores are averaged before the three-family MathAvg is computed.}
\label{tab:eval-procedures}
\begin{tabularx}{\linewidth}{l r Y}
\toprule
\rowcolor{paperhead}
Task & Questions & Scoring object \\
\midrule
GSM8K & 1,319 & Normalized final answer \\
MATH-500 & 500 & Mathematical equivalence of the final answer \\
AIME 2024 & 30 & Mathematical equivalence of the final answer \\
AIME 2025 & 30 & Mathematical equivalence of the final answer \\
\bottomrule
\end{tabularx}
\end{table}

The multiple-choice evaluator scores option likelihoods. MuSR provides narrative text, a question, and its explicit option list; the GPQA-D input contains the question and options. The evaluator tokenizes each space-prefixed option letter, requires it to be one token, and selects the highest next-token log probability. This decision rule makes the predicted option independent of the length of a generated explanation.

\begin{table}[!htbp]
\centering\small
\caption{\textbf{Dataset interfaces in the evaluation implementation.} These identifiers specify the task content accessed by the corresponding loaders.}
\label{tab:dataset-versions}
\begin{tabularx}{\linewidth}{lY}
\toprule
\rowcolor{paperhead}
Task & Dataset interface \\
\midrule
GSM8K & \texttt{openai/gsm8k} \\
MATH-500 & \texttt{HuggingFaceH4/MATH-500} \\
AIME 2024 & \texttt{HuggingFaceH4/aime\_2024} \\
AIME 2025 & \texttt{yentinglin/aime\_2025} \\
MuSR & \texttt{TAUR-Lab/MuSR} \\
GPQA-D & \texttt{fingertap/GPQA-Diamond} \\
IFEval & \texttt{google/IFEval} \\
WikiText-2 & \texttt{Salesforce/wikitext} \\
\bottomrule
\end{tabularx}
\end{table}

\subsection{Agentic search data and evaluation}\label{app:search-protocol}
Search training uses ASearcher data \citep{asearcherdata} and the \texttt{search-agent-rl} implementation \citep{searchagentrl}, which combines \texttt{verl} training, SGLang trajectory generation, and a retrieval-and-summary tool. The specifications below describe the public pipeline's data interfaces and default execution protocol. Dataset sizes count converted records before long-prompt filtering; retrieval and serving settings are repository defaults.

\paragraph{Data partitions and evaluation roles.}
The conversion reads \nolinkurl{aidenjhwu/ASearcher_en_no-math_Qwen3-8B-reject-sample}. Its 13,985-record snapshot is randomly split with seed 42 and a 2\% holdout, producing 13,706 training records and 279 held-out records. The latter file is named \texttt{test.parquet}, but the training entry point passes it to \texttt{data.val\_files} and validates every 50 optimizer steps. Accordingly, the column headed \emph{ASearch Test} in Tables~\ref{tab:agents} and~\ref{tab:search-complete} refers to \emph{ASearcher held-out validation}, rather than an additional independent test partition.

BrowseComp Plus \citep{chen2025browsecompplus} uses the 830 original questions in \texttt{Tevatron/browsecomp-plus}; the converter neither subsamples nor randomly repartitions them. Its evaluation entry point passes the same file to both \texttt{train\_files} and \texttt{val\_files}, with \texttt{val\_before\_train=True} and \texttt{val\_only=True}. The corresponding control flow returns after validation, without training updates. The retrieval collection, \texttt{Tevatron/browsecomp-plus-corpus}, contains 100,195 documents in the public snapshot. These are source sizes; prompt filtering determines the effective evaluation count, and the local index determines the available retrieval collection.

\paragraph{Exact-match split diagnostic.}
The ASearcher converter splits records without first deduplicating question text. Reconstructing the default split from the fixed public snapshot identifies eight held-out records whose question text and reference answer exactly match a training record: $8/279=2.87\%$ of the holdout. The records have different IDs, so ID-disjointness alone does not reveal these matches. This diagnostic concerns exact question--answer matches in the reconstructed, pre-filter split; it does not quantify overlap in a run-specific filtered file or semantic near-duplicates. The overlap count is a property of this validation partition and does not, by itself, attribute a difference between methods to duplication.

\paragraph{Answer extraction and token-F1.}
Both conversion entry points select the \texttt{token\_f1} scorer. It extracts the last complete \texttt{<answer>...</answer>} span and returns zero when no complete span exists. With multiple reference answers, it takes the highest answer-level F1. Whitespace is first collapsed; strings containing a space are split on spaces, and strings without spaces are split into characters, including a single English word. Case and punctuation are preserved. For token-multiset overlap $o$ and answer lengths $n_{\rm pred},n_{\rm ref}$, the nonempty-answer form is
\begin{equation}
\mathrm{Prec}=\frac{o}{n_{\rm pred}},\qquad \mathrm{Rec}=\frac{o}{n_{\rm ref}},\qquad
\operatorname{F1}=\frac{2\,\mathrm{Prec}\,\mathrm{Rec}}{\mathrm{Prec}+\mathrm{Rec}},
\end{equation}
with zero F1 when the overlap is zero. Scoring is string-based and does not use a language-model judge; the original BrowseComp-Plus benchmark's judge-based accuracy is a distinct metric.

\paragraph{Default returned score.}
The scorer's default reward applies repetition, answer-length, and answer-tag penalties to token-F1. Thus a table obtained directly from that reward has the interpretation \emph{penalized token-F1 $\times100$}:
\begin{equation*}
\operatorname{Score}=\frac{100}{N}\sum_{i=1}^{N}
  \operatorname{F1}_i\,\rho_{{\rm rep},i}\,\rho_{{\rm len},i}\,\rho_{{\rm tag},i}.
\end{equation*}
Here $\rho_{\rm rep}$ is the scorer's repetition penalty. The length factor multiplies the score by 0.85 above 3,000 scoring tokens, and by a further 0.7 above 6,000. The tag factor is 0.25 if either the opening- or closing-answer-tag count exceeds ten, and one otherwise. Scoring tokens follow the string rule above and are distinct from model tokens. A 0--100 score is a scaled mean reward, not the percentage of questions answered correctly. An unpenalized token-F1 value instead requires aggregation of the F1 component before these factors; the default returned reward must not be interpreted as that component alone.

\paragraph{Retrieval and summarization.}
Each \texttt{local\_search(query)} call retrieves the top ten documents and uses a separate model to summarize them. Table~\ref{tab:search-tool-settings} records the default settings. Both summarizers generate at most 256 model tokens with temperature 0.2 and top-$p$ 0.9. BrowseComp Plus shortens the document and retries on summary failure, with an excerpt fallback. Its launcher uses the same \texttt{BASE\_MODEL\_PATH} variable for the actor base and summary service; overriding this variable can therefore change both roles. The listed models describe the repository defaults, not an invariant of the launcher under overrides.

\begin{table}[!htbp]
\centering\small
\caption{\textbf{Default search-tool settings.} Character limits apply to the initial document excerpts; summary limits use model tokens.}
\label{tab:search-tool-settings}
\begin{tabularx}{\linewidth}{lYY}
\toprule\rowcolor{paperhead}
Setting & ASearcher training/validation & BrowseComp Plus evaluation \\
\midrule
Retrieval collection & Local \texttt{wiki-18.jsonl} & BrowseComp-Plus corpus \\
Retriever & \texttt{e5-base-v2} & \texttt{Qwen3-Embedding-8B} \\
Summary model & \texttt{Qwen3-1.7B} & \texttt{Qwen3-8B}; three services \\
Retrieved documents & Top 10 per call & Top 10 per call \\
Initial excerpt & 2,000 characters/document & 10,000 characters/document \\
Summary ceiling & 256 model tokens & 256 model tokens \\
Tool-return truncation & 2,048 characters & 2,048 characters \\
\bottomrule
\end{tabularx}
\end{table}

\paragraph{Trajectory budgets and evaluation sampling.}
The default entry points cap the prompt at 4,096 model tokens, the response or multi-turn trajectory at 35,000 token positions, and total model length at 40,000. The trajectory limit includes tool feedback. At most 100 assistant-generation rounds are permitted, with at most one tool call per round; the termination order allows at most 99 executed tool-call rounds, and length limits can stop a trajectory earlier. Tool outputs are truncated by string slicing at 2,048 characters. Training samples eight trajectories per question. Validation uses one trajectory per question, with temperature 0.7, top-$p$ 0.8, and top-$k$ 20; the training setting \texttt{rollout.n=8} does not imply best-of-eight evaluation. These are execution ceilings and sampling settings, rather than measured equal token or tool-use costs across methods.

\paragraph{Reading the aggregate comparisons.}
Tables~\ref{tab:agents} and~\ref{tab:search-complete} report three-round score means and SDs; their paired differences use corresponding evaluation-round indices. Those SDs describe variation across the reported rounds. The ASearcher column characterizes performance on the validation partition identified above, while BrowseComp Plus supplies the separate benchmark evaluation. Differences are descriptive score comparisons; evaluation-round SDs do not themselves establish item-level statistical significance.

\subsection{Prompt construction and final-answer extraction}
Prompt templates define the answer interface for each task. They are kept fixed within a comparison. GSM8K training and evaluation use a direct-answer text prompt. Mathematics tasks use a problem-solving prompt whose final answer is requested in a box. Multiple-choice tasks request the option letter directly. The task text is inserted at the indicated symbolic field below; these fields describe the template, rather than individual generated examples.

\begin{ThuPrompt}
\begin{minipage}{\linewidth}\small\ttfamily
Solve the following grade-school math problem.\\
Return only the final answer.\\[4pt]
Question: \normalfont\itshape question text\normalfont\ttfamily\\
Answer:
\end{minipage}
\end{ThuPrompt}

\begin{ThuPrompt}
\begin{minipage}{\linewidth}\small\ttfamily
Solve the following math problem carefully. End with the final answer in \textbackslash boxed\{...\}.\\[4pt]
Problem: \normalfont\itshape problem text\normalfont\ttfamily\\
Solution:
\end{minipage}
\end{ThuPrompt}

\begin{ThuPrompt}
\begin{minipage}{\linewidth}\small\ttfamily
\normalfont\itshape question and option text\normalfont\ttfamily\\[4pt]
Answer immediately with only the option letter. Do not explain or show reasoning.\\
Answer:
\end{minipage}
\end{ThuPrompt}

GSM8K reference answers are normalized from the final-answer field. The prediction parser first recognizes an explicit final-answer delimiter, then considers the region after a completed thinking segment and the output as a whole. Within a candidate region, it prioritizes an explicit final-answer line, followed by a final numeric candidate from non-step lines. Normalization removes commas, currency markers and whitespace and strips a trailing period. A mathematical answer parser selects the final expression and checks that expression against the reference answer.

For mathematical equivalence, the scorer first uses the parsed expression with the mathematical verifier when both sides parse. Its fallback normalizes basic LaTeX formatting and compares expressions symbolically; ordered tuples are compared component by component. The selected final answer is the object being judged. An intermediate occurrence of the reference expression elsewhere in the reasoning trace does not itself satisfy the final-answer criterion.

Answer generation stops at the first end-of-sequence token or at the task's completion ceiling. A response at the ceiling is marked truncated. For a truncated response that remains inside an unfinished thinking segment without an explicit final answer, the score is incorrect. An explicit final-answer region, a box, or the designated final-answer markers allow the parser to evaluate the answer that was actually emitted. This rule treats termination and answer correctness as separate observables.

\subsection{Auxiliary metric definitions}
IFEval evaluates each instruction constraint using its task-specific checker. If prompt $i$ has $n_i$ constraints with binary outcomes $z_{ij}$, its strict prompt score is
\begin{equation}
I_i=\prod_{j=1}^{n_i}z_{ij},\qquad
\operatorname{Strict@50}=\frac{100}{50}\sum_{i=1}^{50}I_i.
\end{equation}
Every prompt therefore carries equal weight, independent of how many constraints it contains. The instruction-level average is a different quantity because it weights individual constraints. The reported strict@50 metric makes satisfying the entire prompt the unit of evaluation.

WikiText-2 perplexity evaluates language-model likelihood. The evaluator joins nonempty paragraphs, tokenizes the text, and scores overlapping windows while masking context positions covered by preceding windows. Window length, stride, and token limit are evaluation configuration parameters. If a window ends at token $e_j$ after a previous end $e_{j-1}$, its new span has length $t_j=e_j-e_{j-1}$. With model-returned masked causal loss $\ell_j$, the implementation uses
\begin{equation}
\operatorname{PPL}=\exp\left(\frac{\sum_j w_j\ell_j}{\sum_jw_j}\right),
\qquad w_j=\max(t_j-1,1).
\label{eq:ppl-aggregation}
\end{equation}
The first window supplies the initial context; each later window reuses context and scores its newly exposed suffix. Perplexity is retained on its own lower-is-better scale in Table~\ref{tab:main-diagnostics}.

Final WB is a whole-tensor ratio at the latest routed layer,
\begin{equation}
\operatorname{FinalWB}=100\frac{\|g_\ell\widehat W_\ell\|_F}{\|H_\ell\|_F},\qquad \ell=31.
\label{eq:finalwb-definition}
\end{equation}
The numerator is computed before the intervention is cast to the backbone dtype. Layer-wise execution overwrites the latest writeback statistic, which makes this diagnostic a description of the final receiving location. When all positions use active calibration, $(\operatorname{FinalWB}/100)^2$ is a hidden-energy-weighted mean of squared token gates. It is consequently distinct from the arithmetic mean of gate coefficients. Appendix~\ref{app:calibration} gives the corresponding identity and its interpretation.

\subsection{Analytical storage and projection work}
The auxiliary live state has three principal components: completed compressed sources, current controller slots, and the active block anchor. Before the final receiving layers, their scalar counts per batch/token position are $7r$, $Kp$, and $d$, respectively. For the canonical dimensions this is $1792+2048+4096=7936$ fp32 scalars, or 31 KiB. This count describes those named tensors; temporary projection operands, backbone state, and differentiation history have their own allocations.

\begin{table}[!htbp]
\centering\small
\caption{\textbf{Analytical storage of source bank, current slots, and anchor.} Values use fp32 and $2^{20}$ bytes per MiB. $BT$ is the number of positions processed concurrently in one forward call.}
\label{tab:state-storage}
\begin{tabular}{rrrrr}
\toprule
\rowcolor{paperhead}
$BT$ & Source bank (MiB) & Slots (MiB) & Anchor (MiB) & Sum (MiB) \\
\midrule
1 & 0.0068 & 0.0078 & 0.0156 & 0.0303 \\
128 & 0.875 & 1.000 & 2.000 & 3.875 \\
512 & 3.500 & 4.000 & 8.000 & 15.500 \\
2048 & 14.000 & 16.000 & 32.000 & 62.000 \\
8192 & 56.000 & 64.000 & 128.000 & 248.000 \\
\bottomrule
\end{tabular}
\end{table}

Under reverse-mode differentiation, earlier controller states and auxiliary operands participate in the graph. Keeping the controller history over $L$ updates has a leading $O(BTLKp)$ scalar term. Repeated source projections additionally involve $O(BTr\sum_\ell J_\ell)$ operands. At inference, those gradient-history objects are absent, while current source and controller state remain. This distinction follows from the execution graph rather than from the allocated parameter count.

The two source-transport projection families have a particularly simple operation count. A dense $d$-to-$r$ or $r$-to-$d$ multiplication uses $dr$ scalar multiply--accumulates per position. Eight block compressions and 28 writebacks therefore contribute
\begin{equation}
8dr+28dr=36dr=37,748,736
\end{equation}
multiply--accumulates per batch/token position in the allocated execution. Omitting the terminal compression gives $35dr=36,700,160$. Routing queries, source key/value projections, controller updates, and slot attention add their respective matrix and reduction operations. These formulas expose which dimensions control the auxiliary work: $r$ scales transport width, $Kp$ scales live controller state, and $s$ sets the number and spacing of source records.

Unlike a fixed linear weight update, the intervention depends on the receiving state, visible records, controller history, and calibrated direction. Its computation remains part of the forward pass after training. The storage and operation counts above therefore characterize the resources of an explicit communication interface: a small trainable footprint with transient per-position state and context-dependent computation.

\subsection{How architectural dimensions allocate capacity}
The parameter count can be evaluated for other source ranks and block sizes without training a new model. This calculation separates architectural capacity from measured task quality. Let $M=L/s$ for a block size that divides $L$. With all layers targeted and the same dense controller structure, the allocated count decomposes as
\begin{equation}
\begin{aligned}
P_{\rm transport}&=(M+L-s)dr,\\
P_{\rm route}&=r(d+p+e)+2r(r+e),\\
P_{\rm controller}&=2pd+pr+c(2p+e)+c+2pc+2p\\
&\quad+p(2p+e)+2p^2+Kp,\\
P_{\rm identity}&=e(L+M),\qquad
P_{\rm gate}=d+p+L-s.
\end{aligned}
\label{eq:general-parameter-count}
\end{equation}
Their sum gives 41,730,332 at the canonical dimensions. The receiving and compression terms grow linearly with rank, while the source key/value term also has a quadratic component. Controller width is held fixed in Figure~\ref{fig:analytical-parameters}, which isolates the allocation changes induced by source rank and block size.

\begin{figure}[!htb]
\centering
\includegraphics[width=\linewidth]{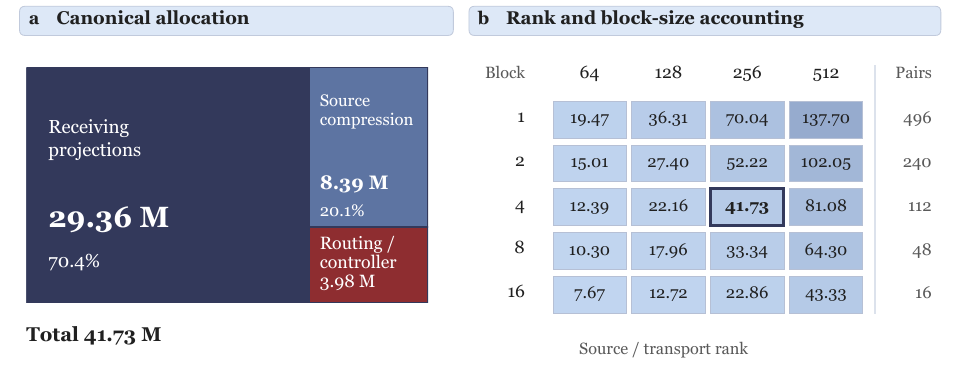}
\caption{\textbf{Analytical allocation of adaptation capacity.} Left: area-proportional component allocation in the canonical module. Right: Equation~\ref{eq:general-parameter-count} over source ranks and block sizes, with $L=32$, $d=4096$, $K=8$, $p=c=256$, and $e=32$. Cells give parameters in millions; the outlined cell is canonical. The last column counts source--receiver pairs.}
\label{fig:analytical-parameters}
\end{figure}

A smaller block size creates more sources and enables earlier writeback, increasing transport allocation and source--receiver pairs, $sM(M-1)/2$. A larger block compresses a longer interval into each record, reducing the resolution of the addressable history.

Memory rank sets the width of each stored change and the maximal receiving subspace dimension. Controller width sets the capacity used to select among records. Their equal canonical width, $r=p=256$, does not make these roles interchangeable; Figure~\ref{fig:analytical-parameters} quantifies the resulting allocation.

\clearpage
\section{Formal analysis of cross-layer communication}\label{app:formal-analysis}\label{app:formal}

This section analyzes the interface through which \trm\ reuses completed computations. We first establish its event ordering and the content of its block records. We then characterize the receiving subspace, the history represented by controller slots, the geometry of calibrated interventions, and their propagation through frozen layers. The final part derives the differentiated training objective and the distribution induced by bounded informative resampling. These results concern the specified computation in exact arithmetic; casts to the backbone dtype occur after the calibrated intervention is formed.

\subsection{Notation and completed-block event ordering}\label{app:formal-events}

We use sequence tensors $H_\ell\in\R^{B\times T\times d}$ and token vectors $h_{\ell,t}\in\R^d$, suppressing the batch index unless a reduction requires it. The norm of a sequence tensor is its Frobenius norm. Each $\mathcal B_\ell$ is a complete frozen decoder layer, including causal token mixing and its internal residual connections. Auxiliary projections and attentions are tokenwise. The $K$ controller slots for one token form the rows of $\mathbf S_{\ell,t}\in\R^{K\times p}$. Layer indices start at zero; $\mathbf S_{-1,t}=\mathbf S^0$ denotes the learned initial state.

For block size $s$, the events at depth $\ell$ occur in the order
\begin{equation}
\begin{gathered}
\text{read completed records}\;\longrightarrow\;
\text{update and read controller}\;\longrightarrow\;R_\ell,\\
\widetilde H_\ell=H_\ell+R_\ell
\;\longrightarrow\;H_{\ell+1}=\mathcal B_\ell(\widetilde H_\ell)
\;\longrightarrow\;\text{append if the block ends}.
\end{gathered}
\label{eq:formal-event-order}
\end{equation}
An anchor $\widetilde H_{sb}$ is associated with the first actual input of block $b$. Its compressed record is appended after layer $s(b+1)-1$ finishes. Thus the readable set before layer $\ell$ is
\begin{equation}
\mathcal J_\ell=\{0,\ldots,\lfloor\ell/s\rfloor-1\},\qquad
n_\ell:=|\mathcal J_\ell|=\lfloor\ell/s\rfloor,
\label{eq:formal-bank-cardinality}
\end{equation}
with the set empty when $\ell<s$. This formula assumes the full-history configuration and $s\mid L$. It distinguishes storing a record from using it: block $b$ has exactly $L-s(b+1)$ receiving layers, when this quantity is positive.

\begin{ThuStatement}
\begin{proposition}[Causal source availability]\label{prop:causality}
Assume each frozen layer is causal over token positions. Before layer $\ell$, the residual, controller slots, and visible records at position $t$ depend only on input positions at most $t$. Routing reads only completed blocks and does not depend on the output of its receiving layer.
\end{proposition}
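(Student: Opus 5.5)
The plan is a strong induction on the layer index $\ell$, carrying one invariant for every position $t$. The invariant says that $h_{\ell,t}$, the slot array $\mathbf S_{\ell-1,t}$, the active anchor $\widetilde h_{s\lfloor \ell/s\rfloor,t}$ (when it exists), and every record $m_{b,t}$ with $b\in\mathcal J_\ell$ are functions only of the inputs at positions $1,\ldots,t$. I will write this property as ``$t$-causal''.

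The base case is $\ell=0$. Here $h_{0,t}$ is the embedding of token $t$ (or of the prefix, under cached decoding, through the backbone cache, which stores only earlier positions). The slots are the learned array $\mathbf S^0$, which is constant in the input, and the bank is empty by Equation~\ref{eq:formal-bank-cardinality}.

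For the inductive step I follow the event order in Equation~\ref{eq:formal-event-order}. First, $\bar m_{\ell,t}$ and $z_{\ell,t}$ are tokenwise functions of $h_{\ell,t}$, the records at $t$, and the constant embedding $e_\ell$, so they are $t$-causal. Equations~\ref{eq:controller-input}--\ref{eq:controller-readout} are tokenwise maps of $z_{\ell,t}$, $\mathbf S_{\ell-1,t}$ and $h_{\ell,t}$; their softmaxes normalize over the slot index $k$, not over positions. Hence $\mathbf S_{\ell,t}$ and $P_{\ell,t}$ are $t$-causal. The same argument covers Equation~\ref{eq:routing}, whose softmax runs over $b\in\mathcal J_\ell$ with keys $m_{b,t}$ at the same $t$, and Equation~\ref{eq:writeback}. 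There the RMS calibration uses $\rms(h_{\ell,t})$ and $\rms(w_{\ell,t})$ of single token vectors, and the $\epsilon$ branch is decided from $w_{\ell,t}$ alone, so $R_{\ell,t}$ and $\widetilde h_{\ell,t}$ are $t$-causal.

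Next, $h_{\ell+1,t}=[\mathcal B_\ell(\widetilde H_\ell)]_t$ depends only on $\widetilde h_{\ell,t'}$ with $t'\le t$ by the causality assumption. By the induction hypothesis each of these depends only on positions $\le t'\le t$, so $h_{\ell+1,t}$ is $t$-causal. If $\ell\bmod s=0$, the anchor is set to $\widetilde h_{\ell,t}$. If $(\ell+1)\bmod s=0$, the appended record $C_b(h_{\ell+1,t}-\widetilde h_{sb,t})$ is a difference of two $t$-causal vectors, so it is $t$-causal. This closes the induction.

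The second claim follows from the same ordering. $R_\ell$ is computed from objects that exist before $\mathcal B_\ell$ runs. The bank before layer $\ell$ contains only records appended after layers $s(b+1)-1\le \ell-1$, which matches the definition $\mathcal J_\ell=\{b:s(b+1)\le\ell\}$. So neither $H_{\ell+1}$ nor the record of the block containing $\ell$ can enter the computation of $R_\ell$. The construction is acyclic: the computation graph of Algorithm~\ref{alg:trm} is a topological order in which the output of layer $\ell$ comes after the routing read at layer $\ell$.

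I expect the main obstacle to be bookkeeping rather than depth. The first point is to state carefully that every auxiliary normalization acts along the slot axis, the source axis, or the feature axis within one token, never across positions. The second is to handle the shared batch dimension: positions in different batch rows do not interact, since every operation acts within a row. The third is to note that the training regularizer $\Omega$ reduces over positions but is not part of the forward computation, so it does not affect the claim.
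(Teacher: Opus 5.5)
Your proposal is correct and follows essentially the same route as the paper: induction over depth, using three facts. Every auxiliary operation (including both softmaxes and the RMS calibration) is tokenwise, the frozen layer is causal, and the event ordering appends a block record only after the receiving layer's routing decision. Carrying the active block anchor explicitly in your invariant is a small tightening of the paper's brief appeal to ``an earlier causal anchor.''
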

\end{ThuStatement}
\begin{proof}
At the first layer the input embeddings are causal, the slots are copied from an input-independent parameter array, and the bank is empty. Suppose the claim holds before layer $\ell$. Every auxiliary operation at position $t$ uses $h_{\ell,t}$, slots at that position, and compressed records at that position. Linear maps, nonlinearities, slot attention, source attention, and RMS calibration therefore preserve the same token support. Applying the causal map $\mathcal B_\ell$ also preserves it. A new record, if created, is a tokenwise linear map of the difference between this output and an earlier causal anchor. Equation~\ref{eq:formal-event-order} places that append after the receiving decision, so it is available only to later layers. Induction establishes both token causality and depth causality.
\end{proof}

The argument also explains the state lifetime. A new forward pass initializes fresh slots and an empty bank. In cached decoding, earlier tokens influence the current token through the backbone's causal cache; the auxiliary state evolves along the current pass's depth. Under exact arithmetic, identical position/mask semantics, and cache-equivalent frozen layers, teacher-forced and cached evaluations construct the same token-local auxiliary state for the same prefix. This follows by applying the induction to the current token at each depth. It does not require persistent controller slots across generated tokens.

For $L/s=m$ complete blocks, the total number of source--receiver pairs is
\begin{equation}
\sum_{\ell=0}^{L-1}n_\ell
=s\sum_{j=0}^{m-1}j
=\frac{s m(m-1)}2.
\label{eq:formal-read-count}
\end{equation}
The canonical configuration has 112 pairs. The terminal record has no consumer; deleting its append leaves $H_L$ and the training loss unchanged. This is an execution identity rather than an approximation.

\subsection{What a block change contains}\label{app:formal-delta}

Write $\mathcal B_j(X)=X+F_j(X)$, where $F_j$ includes all computations within the complete layer beyond its outer identity term. This is an algebraic decomposition of the frozen layer, not a replacement for its internal architecture. The actual residual transition is
\begin{equation}
H_{j+1}-H_j=R_j+F_j(\widetilde H_j).
\label{eq:formal-layer-change}
\end{equation}
Summing across block $b$ and subtracting the post-intervention anchor gives
\begin{equation}
\begin{aligned}
D_b
&=H_{s(b+1)}-\widetilde H_{sb}\\
&=\sum_{j=sb}^{s(b+1)-1}F_j(\widetilde H_j)
  +\sum_{j=sb+1}^{s(b+1)-1}R_j.
\end{aligned}
\label{eq:delta-decomposition}
\end{equation}
Indeed, the telescoping sum first includes $R_{sb}$, and the subtraction of $\widetilde H_{sb}=H_{sb}+R_{sb}$ removes precisely that term. All later injections within the block remain. For a two-layer block, for example,
$D_b=F_{2b}(\widetilde H_{2b})+R_{2b+1}+F_{2b+1}(\widetilde H_{2b+1})$.

The first injection can still influence every $F_j$ through its input. Equation~\ref{eq:delta-decomposition} excludes its \emph{explicit additive term}, not its downstream computational effect. The record therefore describes the change realized by the intervened block. It is neither a separately evaluated unmodified-backbone change nor a causal attribution to an isolated layer. This distinction makes the record usable during one forward pass: subsequent routing operates on computations the model has actually completed.

Two immediate consequences are useful. First, compression yields
\begin{equation}
\|m_b\|_F\le\|C_b\|_2
\left(\sum_{j=sb}^{s(b+1)-1}\|F_j(\widetilde H_j)\|_F
+\sum_{j=sb+1}^{s(b+1)-1}\|R_j\|_F\right).
\label{eq:formal-memory-bound}
\end{equation}
Second, both endpoints remain differentiable. If $G_b=\partial\mathcal L/\partial m_b$ denotes the adjoint from all later uses, the direct contribution to the compressor gradient is
$\nabla_{C_b}\mathcal L=\sum_{n,t}G_{b,n,t}D_{b,n,t}^{\top}$, and the adjoint entering $D_b$ is $C_b^\top G_b$. The endpoint difference distributes that adjoint with opposite signs to output and anchor, after which ordinary reverse-mode differentiation follows their shared earlier computation. Shared ancestors are handled by summing these paths, not by treating the two endpoints as independent model executions.

\subsection{Source transport and the receiving subspace}\label{app:formal-transport}

Partition the routing value projection into content and identity terms, $W_v=[W_v^m\;W_v^e]$. For one receiving layer and token, the raw direction is
\begin{equation}
w_{\ell,t}=\sum_{b\in\mathcal J_\ell}a_{\ell,t,b}
\left[\underbrace{U_\ell W_v^mC_b}_{T_{\ell b}}D_{b,t}
+U_\ell W_v^e e_b^{\mathrm{src}}\right],
\quad\operatorname{rank}(T_{\ell b})\le r.
\label{eq:transport}
\end{equation}
Conditioning on the attention weights, this is affine in the collection of block changes. The content transports $T_{\ell b}$ can differ across sources through $C_b$, while all terms share the receiving projection $U_\ell$. In particular, the concatenated content transport $[T_{\ell0}\;T_{\ell1}\;\cdots]$ also has rank at most $r$, rather than the sum of the individual rank bounds.

\begin{ThuStatement}
\begin{proposition}[Aggregate receiving subspace]
At fixed parameters and receiving layer $\ell$, every intervention lies in $\operatorname{col}(U_\ell)$:
\begin{equation}
\operatorname{span}\{R_{\ell,t}:\text{inputs and positions }t\}
\subseteq\operatorname{col}(U_\ell),\qquad
\dim\operatorname{col}(U_\ell)\le r.
\label{eq:writeback-subspace}
\end{equation}
This holds for both branches of RMS calibration and for signed scalar gates.
\end{proposition}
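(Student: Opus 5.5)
The plan is to show that every intervention at layer $\ell$ is a scalar multiple of $w_{\ell,t}=U_\ell c_{\ell,t}$. Once that is established, the result follows from the elementary fact that $\operatorname{col}(U_\ell)$ is a linear subspace. No property of the source bank, the controller, or the attention weights is needed beyond the fact that $c_{\ell,t}\in\R^r$ is some vector.

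First, for any input and position $t$ with a nonempty bank, Equation~\ref{eq:routing} produces $c_{\ell,t}\in\R^r$. Then $w_{\ell,t}=U_\ell c_{\ell,t}\in\operatorname{col}(U_\ell)$ by definition of the column space. The source-dependent structure in Equation~\ref{eq:transport} is irrelevant here, because the shared left factor $U_\ell$ already fixes the range.

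Second, split on the two branches of Equation~\ref{eq:writeback}.
\begin{itemize}
\item Active branch, $\rms(w_{\ell,t})>\epsilon$: here $\widehat w_{\ell,t}=\alpha_{\ell,t}w_{\ell,t}$ with the scalar $\alpha_{\ell,t}=\sg(\rms(h_{\ell,t}))/\rms(w_{\ell,t})$. This scalar is finite because its denominator is positive, and the stop-gradient does not change the forward value.
\item Fallback branch: $\widehat w_{\ell,t}=w_{\ell,t}$, which is the case $\alpha_{\ell,t}=1$.
\end{itemize}
In both branches, $R_{\ell,t}=g_{\ell,t}\alpha_{\ell,t}w_{\ell,t}$ with $g_{\ell,t}\in(-g_{\max},g_{\max})$ a real scalar, so $R_{\ell,t}=U_\ell(g_{\ell,t}\alpha_{\ell,t}c_{\ell,t})\in\operatorname{col}(U_\ell)$. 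The sign of the gate and the data dependence of $\alpha$ and $g$ do not matter, since the column space is closed under arbitrary scalar multiplication. For layers with an empty bank, $R_\ell=0$, which lies in any subspace, although those layers have no $U_\ell$ allocated. I would state the proposition for routed layers and note this trivial case separately.

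Third, since every generator $R_{\ell,t}$ (over all inputs and positions) lies in the subspace $\operatorname{col}(U_\ell)$, so does their span. The dimension bound follows from $U_\ell\in\R^{d\times r}$, since $\dim\operatorname{col}(U_\ell)=\operatorname{rank}(U_\ell)\le r$.

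There is no real obstacle. The only point needing care is the calibration step, where one must check that calibration is a pure rescaling and not, for example, an elementwise normalization that could leave the subspace. Equation~\ref{eq:writeback} divides by the scalar $\rms(w_{\ell,t})$, so it is a pure rescaling. I would also remark that the claim holds in exact arithmetic, before the cast to the backbone dtype, consistent with the section's standing convention.
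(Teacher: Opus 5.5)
Your proposal is correct and follows essentially the same argument as the paper. Both write $R_{\ell,t}=U_\ell(\alpha_{\ell,t}c_{\ell,t})$ with a branch-dependent real scalar (the gate times the calibration factor on the active branch, the gate alone on the fallback), then use closure of $\operatorname{col}(U_\ell)$ under scalar multiples and spans, with $\dim\operatorname{col}(U_\ell)\le r$ from $U_\ell\in\R^{d\times r}$. Your remarks on empty-bank layers and on exact arithmetic before the dtype cast are consistent with the paper's setup.
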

\end{ThuStatement}
\begin{proof}
Source attention first produces $c_{\ell,t}\in\R^r$. Calibration and gating multiply $U_\ell c_{\ell,t}$ by a scalar $\alpha_{\ell,t}$, with $\alpha=g\rms(h)/\rms(w)$ on the active branch and $\alpha=g$ on the fallback branch. Thus $R_{\ell,t}=U_\ell(\alpha_{\ell,t}c_{\ell,t})$. Every such vector belongs to the same column space, including zero vectors and source-identity contributions. Taking a span proves the result.
\end{proof}

There is a corresponding differential statement. Let $x$ collect any continuous upstream inputs to a token's routing decision, and write $f_\ell(x)=\alpha_\ell(x)c_\ell(x)$. At any differentiable point away from a calibration boundary,
\begin{equation}
\frac{\partial R_\ell}{\partial x}
=U_\ell\frac{\partial f_\ell}{\partial x},\qquad
\operatorname{rank}\!\left(\frac{\partial R_\ell}{\partial x}\right)\le r.
\label{eq:formal-interface-jacobian}
\end{equation}
Adaptive attention changes the coefficient derivative but not its receiving column space. Holding the auxiliary history fixed and differentiating the layer-input interface gives
$\partial\widetilde h_\ell/\partial h_\ell=I+U_\ell\partial f_\ell/\partial h_\ell$.
This locates the low-dimensional modification at the interface itself. The subsequent nonlinear $\mathcal B_\ell$ can transform it, and comparing whole-layer Jacobians at two different inputs introduces the change in the backbone Jacobian as well.

\paragraph{Addressable records and coefficient choice.}
For fixed visible values $v_b=W_v[m_b;e_b^{\rm src}]$, dense softmax gives $c\in\operatorname{conv}\{v_b\}$; with finite logits all weights are positive. At the raw interface $w$ lies in the corresponding convex hull of $U_\ell v_b$. Calibration preserves the retrieved direction, while the signed gate sets the final magnitude and sign. Thus dense attention permits a state-dependent mixture without implying sparse execution or omission of a frozen layer. When there is one record, $a_0=1$ and the attention-logit derivative is zero, although the content/value path is still trainable. With additional records, the controller and current state can change the mixture within the same receiving subspace.

The interface is generally nonlinear in its input: attention weights, controller history, normalization, and gating all depend on that input. A fixed rank-$r$ transport and an input-dependent rank-$r$ interface share a dimensional constraint but need not implement the same function. For example, with one nonzero value and a scalar gate depending on $h$, $R(h)=g(h)\widehat w(h)$ already varies nonlinearly. It cannot in general be absorbed into one constant additive weight matrix.

\subsection{Controller history, rank, and state bounds}\label{app:slot-history}

Suppressing the token index, define $\nu_\ell=\sigma(v_\ell)\odot\tanh(u_\ell)\in(-1,1)^p$ and let $\omega_\ell$ be the slot-selection probability vector. We take $0\le\gamma<1$, with $\gamma=0.9$ in the canonical configuration. The row-stacked update is
\begin{equation}
\begin{aligned}
\mathbf S_\ell
&=\gamma\mathbf S_{\ell-1}+(1-\gamma)\omega_\ell\nu_\ell^\top\\
&=\gamma^{\ell+1}\mathbf S^0
+(1-\gamma)\sum_{j=0}^{\ell}\gamma^{\ell-j}\omega_j\nu_j^\top.
\end{aligned}
\label{eq:slot-history}
\end{equation}
The second equality follows by substitution, starting from $\mathbf S_{-1}=\mathbf S^0$. It is valid even though $\omega_j$ and $\nu_j$ depend on the realized preceding states. Unrolling a trajectory is an algebraic operation; it does not assume independent updates.

\paragraph{Rank and distinct history mixtures.}
Each added matrix has rank at most one, so subadditivity gives
\begin{equation}
\operatorname{rank}(\mathbf S_\ell)
\le\min\{K,p,\operatorname{rank}(\mathbf S^0)+\ell+1\}.
\label{eq:formal-slot-rank}
\end{equation}
To see why the state need not remain rank one, take $K=p=2$, $\mathbf S^0=(1,2)^\top e_1^\top$, scaled selector $W_sz/\sqrt p=e_1$, and a nonzero proposal proportional to $e_2$. The selector produces $\omega=\softmax(1,2)$, whose entries are not in the ratio $1:2$. Therefore the two columns of
$\gamma(1,2)^\top e_1^\top+(1-\gamma)\omega\nu^\top$
are linearly independent for $0<\gamma<1$. A rank-one initial state becomes rank two in one valid shared-proposal update. Both proposal coordinates and selector scale can be chosen within the specified parameterization.

There are also exact symmetry cases. If all initial rows are equal, their slot logits are equal, so $\omega_k=1/K$. The shared proposal preserves equal rows; by induction, the slots remain identical. If $\omega_j$ is a fixed vector at every depth and the initial state has the same row factor, the complete history also has that row factor and rank at most one. These cases show what produces multiple slot mixtures: distinct initial rows and depth-dependent slot allocation, together with different proposal directions. The learned initialization allows this asymmetry without assigning a prescribed semantic role to any slot.

For a fixed trajectory, substituting Equation~\ref{eq:slot-history} into the attention readout (Equation~\ref{eq:controller-readout}) expresses the controller context in terms of layer-wise proposals:
\begin{equation}
P_\ell=\gamma^{\ell+1}\sum_k\xi_{\ell,k}V_sS_k^0
+(1-\gamma)\sum_{j=0}^{\ell}\gamma^{\ell-j}
\underbrace{(\xi_\ell^\top\omega_j)}_{\text{read--write overlap}}V_s\nu_j.
\label{eq:formal-slot-read-history}
\end{equation}
The scalar overlap is between zero and one. Each past proposal contributes according to both its earlier slot assignment and the current slot read. Consequently, a decay average of the proposals alone does not generally determine $P_\ell$: the history of slot assignments also matters. At the same time, the slots are finite-dimensional mixtures of that history, not an unbounded list of separately addressable depth records.

\paragraph{Forward bounds and conditional sensitivity.}
The bounded proposal and probability selector imply
\begin{equation}
\begin{aligned}
\|\mathbf S_\ell\|_F
&\le\gamma^{\ell+1}\|\mathbf S^0\|_F
+(1-\gamma^{\ell+1})\sqrt p,\\
\sum_k|S_{\ell,kj}|
&\le\gamma^{\ell+1}\sum_k|S^0_{kj}|+1-\gamma^{\ell+1}.
\end{aligned}
\label{eq:formal-slot-forward-bound}
\end{equation}
For the first line, $\|\omega\nu^\top\|_F=\|\omega\|_2\|\nu\|_2\le\sqrt p$; apply the triangle inequality to the unrolled history. For the second, sum absolute values over slots and use $\sum_k\omega_k=1$. Since readout attention is a probability vector,
$\|P_\ell\|_2\le\|V_s\|_2\max_k\|S_{\ell,k}\|_2$.
These bounds apply to forward values for any inputs and selector logits.

Sensitivity involves the selector as well as $\gamma$. Hold $z_\ell$ fixed and write $a=W_sz_\ell/\sqrt p$, $\omega=\softmax(\mathbf S a)$. The differential of one update with respect to the previous slots is
\begin{equation}
\Delta\mathbf S^+
=\gamma\Delta\mathbf S
+(1-\gamma)(\operatorname{diag}\omega-\omega\omega^\top)
(\Delta\mathbf S a)\nu^\top.
\label{eq:formal-slot-differential}
\end{equation}
The softmax Jacobian has operator norm at most $1/2$: its quadratic form is the variance of the entries of a vector under $\omega$, at most one quarter of their squared range, and that squared range is at most twice the vector's squared norm. Hence
\begin{equation}
\|\Delta\mathbf S^+\|_F
\le\left[\gamma+\frac{1-\gamma}{2}\|a\|_2\|\nu\|_2\right]
\|\Delta\mathbf S\|_F.
\label{eq:formal-slot-local-gain}
\end{equation}
This is a contraction bound when $\|a\|_2\|\nu\|_2<2$ with fixed $z_\ell$. In the full model $z_\ell$ also depends on the current residual and visible records, so its derivative supplies additional paths. The decay factor controls historical coefficients; it is not by itself the Jacobian norm of the complete controller.

\subsection{RMS geometry, gradients, and threshold behavior}\label{app:calibration}

For this subsection suppress layer and token indices, put $q=\rms(w)$ and $\kappa=\rms(h)$, and define the calibrated map $f_\kappa(w)=\widehat w$ from Equation~\ref{eq:writeback}. On the active branch $q>\epsilon$, $\rms(x)=\|x\|_2/\sqrt d$ gives
\begin{equation}
f_\kappa(w)=\|h\|_2\frac{w}{\|w\|_2},\qquad
\|f_\kappa(w)\|_2=\|h\|_2.
\label{eq:formal-sphere-map}
\end{equation}
Calibration maps the raw direction to the sphere whose radius is the receiving state's norm. Positive rescaling of $w$ cancels while both points stay on this branch; negative rescaling reverses the direction. Multiplication by the signed gate gives $\|R\|_2/\|h\|_2=|g|$ for $h\ne0$. This explains how gate magnitude can be interpreted independently of the scale of the compression and projection matrices.

\paragraph{Tangential and radial derivatives.}
Holding the reference scale fixed, differentiation gives
\begin{equation}
J_w:=\frac{\partial f_\kappa(w)}{\partial w}
=\frac{\kappa}{q}\left(I-\frac{ww^\top}{\|w\|_2^2}\right),\quad
J_ww=0,\quad \|J_w\|_2=\frac{\kappa}{q}\quad(d>1).
\label{eq:rms-jacobian}
\end{equation}
To derive it, $\mathrm{d}q=w^\top\mathrm{d}w/(d\,q)$, where $d$ is the vector dimension. Applying the product rule to $\kappa w/q$ yields
$\mathrm{d}f=(\kappa/q)\mathrm{d}w-\kappa w(w^\top\mathrm{d}w)/(d\,q^3)$; since $d\,q^2=\|w\|_2^2$, Equation~\ref{eq:rms-jacobian} follows. The matrix in parentheses is the orthogonal projector onto the tangent space perpendicular to $w$. Its radial eigenvalue is zero and its $d-1$ tangential eigenvalues are one. Thus calibration learns direction through tangential gradients; the signed gate supplies a separate radial degree of freedom in the final intervention.

\begin{figure}[!htbp]
\centering
\includegraphics[width=\linewidth]{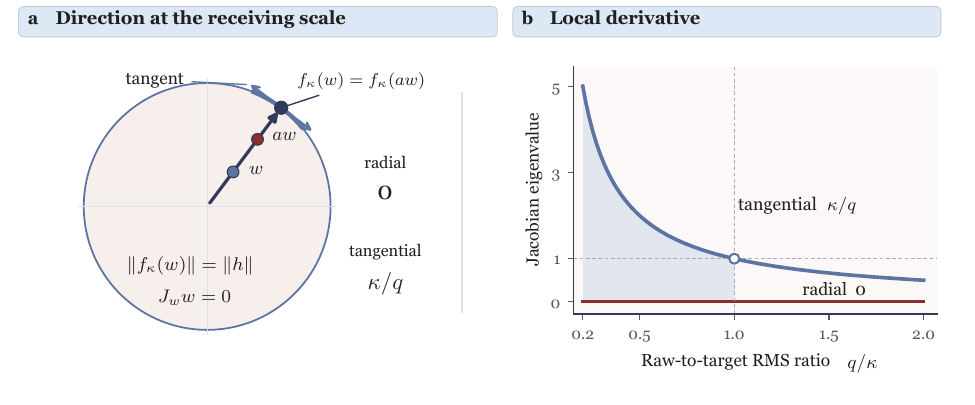}
\caption{\textbf{Analytical geometry of RMS calibration.} Left: in two dimensions with $\|h\|=1$, positive multiples on the same ray map to one point on the target sphere; the double arrow marks a tangent. The local basis separates zero radial response from tangential gain $\kappa/q$. Right: the active-branch Jacobian eigenvalues from Equation~\ref{eq:rms-jacobian}. The gate subsequently sets signed amplitude.}
\label{fig:calibration-geometry}
\end{figure}

Figure~\ref{fig:calibration-geometry} separates three geometric objects: the ray selected by the raw projection, the sphere set by the receiving residual, and the local derivative along or across that ray. A positive radial change of the raw vector leaves its calibrated output fixed. A tangential change rotates the direction, with sensitivity determined by $\kappa/q$. The gate's derivative supplies the remaining scalar control of the final intervention. This geometry connects the parameterization to the separate source, transport, and strength coordinates discussed in Appendix~\ref{app:design}.

Let $a=\partial\mathcal L/\partial R$ be an incoming adjoint and temporarily hold routing inputs fixed. Then
\begin{equation}
\frac{\partial\mathcal L}{\partial w}=gJ_w a,
\qquad
\frac{\partial\mathcal L}{\partial g}=a^\top f_\kappa(w).
\label{eq:formal-gate-direction-gradients}
\end{equation}
The normalization derivative can have gain greater than one when the raw RMS is small relative to the reference RMS. Meanwhile, for gate logit $z$, $\partial g/\partial z=g_{\max}\operatorname{sech}^2z\le g_{\max}$. These two derivatives control different quantities: the first changes the direction, while the second changes its signed coefficient.

\begin{table}[htbp]
\centering\small
\caption{\textbf{Calibration regimes for fixed $\kappa$.} Values and derivatives refer to the piecewise mathematical map before the final dtype cast.}
\label{tab:formal-rms-regimes}
\begin{tabular}{llll}
\toprule
\rowcolor{paperhead}
Region & Returned vector & Output norm & Derivative in $w$ \\
\midrule
$q>\epsilon$ & $(\kappa/q)w$ & $\sqrt d\kappa$ & Tangential projector $\times\kappa/q$ \\
$q<\epsilon$ & $w$ & $\sqrt d q$ & $I$ \\
$q=\epsilon$ & $w$ & $\sqrt d\epsilon$ & Boundary; see one-sided limits \\
\bottomrule
\end{tabular}
\end{table}

\paragraph{The threshold and a branch-aware bound.}
For a unit vector $v$, the fallback value at $w=\sqrt d\epsilon v$ is $\sqrt d\epsilon v$, while the active-side limit is $\sqrt d\kappa v$. Their difference has norm $\sqrt d|\kappa-\epsilon|$. The forward map is continuous at this boundary only if $\kappa=\epsilon$; even then the radial one-sided derivatives differ. Inside the fallback region the map is the identity. These properties follow from the explicit branch rule, which returns the unnormalized vector below the threshold rather than dividing it by $\epsilon$.

For example, an \emph{analytical illustration} with $\kappa=1$ and $\epsilon=10^{-6}$ has active-side tangential gain approaching $10^6$, while its active output RMS remains one. The two numbers describe derivative gain and forward scale respectively. For a complete sequence, let $\mathcal A$ and $\mathcal F$ be the active and fallback token positions. Since $|g|<g_{\max}$,
\begin{equation}
\|R_\ell\|_F^2
\le g_{\max}^2\left(\sum_{(n,t)\in\mathcal A}\|h_{\ell,n,t}\|_2^2
+|\mathcal F|d\epsilon^2\right).
\label{eq:formal-mixed-branch-bound}
\end{equation}
The fallback has a small absolute bound; its relative bound also depends on the receiving norm. When every position is active and $\|H_\ell\|_F>0$, the squared norm ratio satisfies
\begin{equation}
\frac{\|R_\ell\|_F^2}{\|H_\ell\|_F^2}
=\sum_{n,t}\frac{\|h_{\ell,n,t}\|_2^2}{\|H_\ell\|_F^2}
g_{\ell,n,t}^{\,2}.
\label{eq:formal-tensor-gate-rms}
\end{equation}
Thus the norm ratio is the square root of a hidden-energy-weighted mean of squared gates.

\subsection{Stop-gradient and scale-invariant optimization}\label{app:formal-gradients}

The reference RMS is detached when forming $f_\kappa$. Its forward value still tracks $h$, but its reverse-mode derivative is zero. On the active branch $q>\epsilon$, for $h\ne0$ and with $w$ held fixed, the mathematical forward map without detachment would contribute the additional Jacobian
\begin{equation}
\frac{\partial f_{\rms(h)}(w)}{\partial h}
=\frac{w}{q}\frac{h^\top}{d\kappa}.
\label{eq:formal-detached-edge}
\end{equation}
Detachment removes this reference-scale edge. It leaves the identity path through $\widetilde h=h+R$, the dependence of $w$ on $h$ and past computations, and the dependence of $g$ on $h$ and $P$ intact. On the active branch the differentiated writeback is therefore
$dR=f_\kappa(w)\,dg+gJ_w\,dw$, with no $d\kappa$ term.

Memory construction is not detached. A later loss can reach an earlier intervention through the ordinary residual chain, the compressed source record, and the controller that reads that record. Freezing a layer's parameters preserves its input vector--Jacobian product:
\begin{equation}
\frac{\partial\mathcal L}{\partial\widetilde H_\ell}
=J_{\mathcal B_\ell}(\widetilde H_\ell)^\top
\frac{\partial\mathcal L}{\partial H_{\ell+1}}
\quad\text{for the backbone edge}.
\label{eq:formal-frozen-vjp}
\end{equation}
Other outgoing edges add their adjoints to this expression. This is also why training earlier LoRA modules differentiates through subsequent frozen operators. Parameter freezing removes weight-gradient and optimizer-state requirements; it does not replace reverse-mode propagation by a local update at each insertion point.

\paragraph{Positive scaling of a receiving projection.}
Suppose all uses of $U_\ell$ remain on the active branch in a neighborhood of a positive rescaling, and the loss has no explicit norm penalty on $U_\ell$. Replacing $U_\ell$ by $aU_\ell$, $a>0$, scales the raw $w$ by $a$ and leaves every calibrated intervention unchanged. Downstream residuals, records, and losses are consequently unchanged as well. Differentiating this invariance at $a=1$ gives
\begin{equation}
\langle\nabla_{U_\ell}\mathcal L,U_\ell\rangle_F=0.
\label{eq:formal-radial-gradient}
\end{equation}
This concerns the Euclidean gradient of the differentiated loss. AdamW adds momentum, coordinatewise preconditioning, and decoupled decay. If $V$ is its preconditioned moment direction, $\eta$ its learning rate, and $a=1-\eta\lambda_{\rm wd}$, the actual norm update is
\begin{equation}
U^+=aU-\eta V,\qquad
\|U^+\|_F^2=a^2\|U\|_F^2-2a\eta\langle U,V\rangle_F+\eta^2\|V\|_F^2.
\label{eq:adamw-scale}
\end{equation}
Raw-gradient orthogonality does not force $\langle U,V\rangle_F$ to vanish. Even when it vanishes, the last term is positive. For a concrete two-coordinate illustration, $U=(1,2)$ and raw gradient $(2,-1)$ have zero inner product; a first-step Adam direction close to $(1,-1)$ has inner product $-1$ with $U$. Reversing the raw gradient reverses that inner product. The projection norm can therefore increase or decrease under the full update. Decay contracts its own component; it does not establish a monotone trajectory for the total norm.

\subsection{Conditional propagation through the frozen decoder}\label{app:formal-propagation}

Local calibration supplies an intervention budget at each receiving layer. Its effect at a later depth depends on the frozen computation between those layers. To make this dependence explicit, compare an intervened trajectory $H_\ell$ with the adapter-disabled trajectory $H^0_\ell$, starting from the same $H_0$. Assume $\mathcal B_\ell$ is $\Lambda_\ell$-Lipschitz on a domain containing the two actual layer inputs. Put $e_\ell=\|H_\ell-H^0_\ell\|_F$. Then
\begin{equation}
\begin{aligned}
e_{\ell+1}
&\le\Lambda_\ell\|H_\ell+R_\ell-H^0_\ell\|_F\\
&\le\Lambda_\ell(e_\ell+\|R_\ell\|_F),\\
e_L&\le\sum_{j=0}^{L-1}
\left(\prod_{k=j}^{L-1}\Lambda_k\right)\|R_j\|_F.
\end{aligned}
\label{eq:formal-forcing-bound}
\end{equation}
The final inequality follows by repeatedly substituting the preceding one and using $e_0=0$. Crucially, it treats each $R_j$ as the intervention realized on the actual trajectory. Routing can depend on the entire available auxiliary history; no assumption that $R_j$ is a function of $H_j$ alone is needed.

When all positions use active calibration and $|g_{j,n,t}|\le\rho_j$, we have $\|R_j\|_F\le\rho_j\|H_j\|_F$. One can either substitute this directly into Equation~\ref{eq:formal-forcing-bound}, or eliminate the intervened norm using $\|H_j\|_F\le\|H_j^0\|_F+e_j$. The latter gives
\begin{equation}
e_L\le\sum_{j=0}^{L-1}\Lambda_j\rho_j\|H_j^0\|_F
\prod_{k=j+1}^{L-1}\Lambda_k(1+\rho_k).
\label{eq:formal-closed-propagation-bound}
\end{equation}
The empty product is one. This follows from the scalar recurrence
$e_{j+1}\le\Lambda_j(1+\rho_j)e_j+\Lambda_j\rho_j\|H_j^0\|_F$.
Fallback positions can instead be handled with the absolute budget in Equation~\ref{eq:formal-mixed-branch-bound}.

For an analytical illustration, if the realized intervention norm is at most $\delta$ at each of $n$ successive layers and $\Lambda_j\le\Lambda$, Equation~\ref{eq:formal-forcing-bound} becomes $\delta\sum_{k=1}^{n}\Lambda^k$. It equals $n\delta$ when $\Lambda=1$, is at most $\delta\Lambda/(1-\Lambda)$ when $\Lambda<1$, and grows geometrically with $n$ when $\Lambda>1$. Thus the same local budget can have different downstream effects depending on the intervening maps. These conditional bounds explain why local relative scale and final-output change are separate quantities.

There is an analogous first-order description. Introduce a scalar perturbation amplitude $\tau$ with $R_j(\tau)=\tau V_j+o(\tau)$, $R_j(0)=0$, and differentiable frozen layers at the unperturbed trajectory. Differentiating $H_{j+1}(\tau)=\mathcal B_j(H_j(\tau)+R_j(\tau))$ gives
\begin{equation}
\dot H_L=\sum_{j=0}^{L-1}
J_{L-1}J_{L-2}\cdots J_j V_j,
\qquad J_j=J_{\mathcal B_j}(H_j^0).
\label{eq:formal-linear-response}
\end{equation}
Each interface direction is transported by the frozen Jacobians that follow it. The product acts on a low-dimensional injected vector, while different layers can inject through different receiving subspaces. This is the precise sense in which changing cross-layer communication can alter later computation without changing the pretrained operators themselves.

\subsection{The differentiated group-relative objective}\label{app:formal-objective}

For a retained group of $G=4$ completions, let $\mathcal T_i$ contain its scored completion positions and $T_i=\max\{1,|\mathcal T_i|\}$. Prompt and padding positions are excluded from $\mathcal T_i$. Let $V$ omit completions that are simultaneously truncated and unparseable; other incorrect completions are valid and receive binary reward zero. For $m=|V|>0$, write
\begin{equation}
\bar r=\frac1m\sum_{i\in V}r_i,\quad
\sigma^2=\frac1m\sum_{i\in V}(r_i-\bar r)^2,\quad
A_i=\begin{cases}
(r_i-\bar r)/(\sigma+10^{-6}),&i\in V,\ \sigma>10^{-6},\\
r_i-\bar r,&i\in V,\ \sigma\le10^{-6},\\
0,&i\notin V.
\end{cases}
\label{eq:formal-advantages}
\end{equation}
For $V=\varnothing$, all advantages and the policy term are zero. Centering gives $\sum_{i\in V}A_i=0$. For mixed binary rewards with $c$ correct among $m$ valid completions, $\sigma^2=(c/m)(1-c/m)$. Ignoring the displayed additive $10^{-6}$ only for this interpretation, correct and incorrect advantages are $\sqrt{(m-c)/c}$ and $-\sqrt{c/(m-c)}$ respectively. With the additive term retained their magnitudes are slightly smaller, and $|A_i|\le\sqrt{m-1}\le\sqrt3$. Uniform valid rewards produce zero advantages.

Put $\ell_{i,t}=\log\pi_\theta(y_{i,t}\mid x,y_{i,<t})$ and let $\ell^0_{i,t}$ be the corresponding adapter-disabled log probability. The differentiated surrogate uses a detached copy from the current evaluation,
\begin{equation}
\begin{aligned}
q_{i,t}&=\exp\!\left(\ell_{i,t}-\sg(\ell_{i,t})\right),\\
\mathcal L_{\rm policy}
&=-\frac1m\sum_{i\in V}\frac1{T_i}\sum_{t\in\mathcal T_i}
\min\!\left(q_{i,t}A_i,\operatorname{clip}(q_{i,t},0.8,1.2)A_i\right),\\
\mathcal L_{\rm KL}
&=\frac1{\sum_i|\mathcal T_i|}\sum_i\sum_{t\in\mathcal T_i}
\left[\exp(-\delta_{i,t})+\delta_{i,t}-1\right],
\quad\delta_{i,t}=\ell_{i,t}-\ell^0_{i,t}.
\end{aligned}
\label{eq:full-objective}
\end{equation}
The token-level KL reduction is defined for a group with at least one scored completion token. Every retained group receives one differentiable policy evaluation; accumulating gradients from multiple groups does not change the denominator construction.

\paragraph{Gradient equivalence to sequence-normalized REINFORCE.}
In the forward evaluation $q_{i,t}=1$, while its derivative is $\nabla_\theta q_{i,t}=\nabla_\theta\ell_{i,t}$. The value one lies strictly inside the clipping interval. In a neighborhood with the denominator held fixed, both arguments of the minimum therefore coincide and have the same derivative, irrespective of the sign of $A_i$. It follows that
\begin{equation}
\nabla_\theta\mathcal L_{\rm policy}
=\nabla_\theta\left[-\frac1m\sum_{i\in V}
\frac{\sg(A_i)}{T_i}\sum_{t\in\mathcal T_i}\ell_{i,t}\right].
\label{eq:reinforce-equivalence}
\end{equation}
This is equality of gradients for the retained observations and advantages. It is not equality of the two scalar loss values. When every valid completion has at least one scored token, the implemented policy value is $-m^{-1}\sum_{i\in V}A_i=0$ even when its gradient is nonzero. Its computational graph carries the gradient through the numerator despite cancellation of the evaluated values. A finite-difference check of this derivative must freeze the denominator at the expansion point; rebuilding a detached denominator separately at each perturbed point instead differentiates a constant-valued forward expression.

Sequence normalization gives each valid response one outer weight and averages its token scores before that weighting. The KL term uses a different reduction: each scored token has the same weight across all retained completions, including those excluded from $V$. Group standardization, validity filtering, sampling, and sequence normalization are therefore all part of the update estimator; Equation~\ref{eq:reinforce-equivalence} does not replace them by an unnormalized expected-reward objective.

\paragraph{Sampled divergence geometry.}
For $k(\delta)=e^{-\delta}+\delta-1$, $k\ge0$, $k'(\delta)=1-e^{-\delta}$, and $k''(\delta)=e^{-\delta}>0$. The minimum is at $\delta=0$, with expansion $k(\delta)=\delta^2/2+O(\delta^3)$. For a fixed prefix and full-support distributions $p,p_0$,
\begin{equation}
\E_{y\sim p}k\!\left(\log\frac{p(y)}{p_0(y)}\right)
=\sum_y p(y)\log\frac{p(y)}{p_0(y)}
=D_{\rm KL}(p\|p_0).
\label{eq:formal-kl-value-identity}
\end{equation}
The exponential term integrates to $\sum_y p_0(y)=1$. The optimizer, however, differentiates the sampled expression with the observed token fixed. Under a general sampling distribution $q$, that conditional expected derivative is
$\sum_y q(y)(1-p_0(y)/p(y))\nabla_\theta\log p(y)$.
This specifies the differentiated penalty independently of the value identity. Temperature, nucleus filtering, and group selection determine the actual sampling distribution, while the evaluated $p$ uses full-vocabulary, untempered logits.

\subsection{Routing regularization and its reduction}\label{app:formal-regularizer}

For the canonical 28 receiving layers, the complete auxiliary objective is
$\mathcal L=\mathcal L_{\rm policy}+0.02\mathcal L_{\rm KL}+0.01\Omega$, with
\begin{equation}
\Omega=\frac1{28}\sum_{\ell=4}^{31}
\left[\operatorname{mean}_{n,t,b}\left(a_{\ell,n,t,b}-\frac1{n_\ell}\right)^2
+0.05\operatorname{mean}_{n,t,k}c_{\ell,n,t,k}^{\,2}\right].
\label{eq:routing-regularizer}
\end{equation}
The first mean includes the $n_\ell$ visible sources; the second includes $r$ source-mixture coordinates. Both range over all $BT$ tensor positions. For one position with $n$ sources, define
$\psi(a)=n^{-1}\sum_b(a_b-1/n)^2$. Since $\sum_b a_b=1$,
\begin{equation}
\psi(a)=\frac1n\left(\|a\|_2^2-\frac1n\right),\qquad
0\le\psi(a)\le\frac{n-1}{n^2}.
\label{eq:formal-routing-penalty-bounds}
\end{equation}
The minimum is uniform allocation; the upper bound is reached at a simplex vertex and approached by dense softmax as logits separate. For $n=1$, the function and its logit gradient are identically zero. Thus at layers 4--7 the source-mixture magnitude term contributes while the routing-deviation term does not. This follows from source availability, not from a special early-layer regularization mask.

The source-coordinate mean also means that the maximum per-position deviation penalty depends on $n$: it is $1/4$ for two sources and $6/49$ for seven. Its gradient with respect to logits $z$ is
$\nabla_z\psi=\frac2n(\operatorname{diag}a-aa^\top)(a-\mathbf1/n)$.
The penalty acts on selection coefficients, while the source-mixture term acts on the mixture before $U_\ell$ and RMS calibration. Neither directly penalizes the norm of the final calibrated writeback projection.

To make the position reduction explicit, partition the $BT$ positions into prompt, completion, and padding sets $\mathcal P,\mathcal C,\mathcal Z$. For either scalar positionwise penalty $f$,
\begin{equation}
\frac1{BT}\sum_{n,t}f_{n,t}
=\sum_{Q\in\{\mathcal P,\mathcal C,\mathcal Z\}}
\frac{|Q|}{BT}\operatorname{mean}_{(n,t)\in Q}f_{n,t},
\label{eq:formal-position-mixture}
\end{equation}
where empty-set contributions are zero. This identity describes the implemented full-tensor objective: each position set contributes in proportion to its size. It also separates this reduction from the completion masks used by the two language-model loss terms.

\subsection{Bounded informative resampling and generation budgets}\label{app:formal-resampling}

For a fixed prompt and fixed model parameters during sampling, let $Y$ denote a group drawn from the generation procedure, and let $I(Y)$ indicate that at least one completion is parsed correct and at least one is not. Write $q=\Pr(I=1)$ and let $A$ be the maximum number of group attempts. The procedure retains the first mixed-correctness group; if no earlier attempt is mixed, it retains attempt $A$ regardless of its outcome. The configured values are $G=4$ and $A=4$ (three additional attempts).

Assume attempts are independent and identically distributed conditional on the prompt. With $N$ the number of attempts, the tail probabilities and expectation are
\begin{equation}
\Pr(N\ge j)=(1-q)^{j-1},\quad 1\le j\le A,
\qquad
\E N=\sum_{j=0}^{A-1}(1-q)^j
=\frac{1-(1-q)^A}{q}.
\label{eq:formal-retry-budget}
\end{equation}
The quotient has continuous limit $A$ at $q=0$. This follows because reaching attempt $j$ requires exactly that the preceding $j-1$ attempts fail the mixed-correctness test. The terminal probability is $\Pr(N=A)=(1-q)^{A-1}$, which includes both success and failure on the final attempt.

Let $\mu$ denote the original group law and $\mu_{\rm ret}$ the retained law. For any group event $E$ and $0<q<1$,
\begin{equation}
\begin{aligned}
\mu_{\rm ret}(E)
&=\frac{1-(1-q)^A}{q}\,\mu(E\cap I)
+(1-q)^{A-1}\mu(E\cap I^c),\\
\mu_{\rm ret}
&=[1-(1-q)^A]\,\mu(\cdot\mid I)
+(1-q)^A\,\mu(\cdot\mid I^c).
\end{aligned}
\label{eq:formal-retained-law}
\end{equation}
To prove the first line, sum $(1-q)^{j-1}\mu(E\cap I)$ over successful retention at each $j=1,\ldots,A$, then add $(1-q)^{A-1}\mu(E\cap I^c)$ for a failed final attempt. The second line groups these terms into normalized conditional distributions. Bounded retry increases the mass of mixed groups while preserving a nonzero failed-group mass $(1-q)^A$ whenever $q<1$.

The retry event precedes preference-validity filtering. Let $J$ denote a group whose \emph{valid} completions include both reward values. Then $J\subseteq I$, but equality need not hold: a group containing correct answers and truncated-unparseable failures is mixed before filtering and can become uniform afterwards. If $r=\Pr(J)$, Equation~\ref{eq:formal-retained-law} gives
\begin{equation}
\Pr_{\rm ret}(J)=r\frac{1-(1-q)^A}{q}.
\label{eq:formal-valid-informative-rate}
\end{equation}
For binary rewards, this is the probability of a retained group with nonzero advantages. It distinguishes the retry acceptance event from producing a nonconstant set of preference coefficients; the resulting parameter gradient also depends on the token-score derivatives.

\begin{table}[htbp]
\centering\small
\caption{\textbf{Analytical illustration of bounded retry.} Four independent Bernoulli-correctness completions per attempt, all preference-valid, with four attempts allowed. Values are computed from $q=1-p^4-(1-p)^4$.}
\label{tab:formal-retry-example}
\begin{tabular}{lrrrr}
\toprule
\rowcolor{paperhead}
Per-completion correctness $p$ & Mixed $q$ & Retained mixed & Expected attempts & Expected responses \\
\midrule
$0.1$ & $0.3438$ & $0.8146$ & $2.3694$ & $9.4774$ \\
$0.5$ & $0.8750$ & $0.9998$ & $1.1426$ & $4.5703$ \\
$0.9$ & $0.3438$ & $0.8146$ & $2.3694$ & $9.4774$ \\
\bottomrule
\end{tabular}
\end{table}

In the independent Bernoulli illustration, $q=1-p^G-(1-p)^G$. It is symmetric under $p\mapsto1-p$: both nearly always-correct and nearly always-incorrect prompts can consume additional attempts. The procedure therefore selects correctness diversity within groups, rather than preferring high accuracy by itself.

\paragraph{Responses, tokens, and retained updates.}
Every prompt yields one retained group but incurs between $G$ and $GA$ generated responses, with expectation $G\E N$. Let $C_j$ be the completion-token cost of attempt $j$, identically distributed jointly with its outcome. Reaching attempt $j$ depends only on earlier outcomes and is independent of $C_j$. Therefore
\begin{equation}
\E\!\left[\sum_{j=1}^{N}C_j\right]
=\sum_{j=1}^{A}\Pr(N\ge j)\E C_1
=\E N\,\E C_1.
\label{eq:formal-token-budget}
\end{equation}
This equality does not require the cost of an attempt to be independent of its own correctness outcome. The expected cost of discarded attempts, for $0<q<1$, is
\begin{equation}
\E C_{\rm discarded}
=\E[C_1\mid I^c](1-q)\sum_{j=0}^{A-2}(1-q)^j.
\label{eq:formal-discarded-cost}
\end{equation}
Only failed nonterminal attempts are discarded. Retained mixed and failed groups can have different length distributions, as made explicit by Equation~\ref{eq:formal-retained-law}.

For $D$ prompts and a cap of $M$ new tokens per completion, the deterministic bounds are $DG\le N_{\rm responses}\le DGA$ and $N_{\rm generated\ tokens}\le DGAM$. For one pass over $D=7{,}473$ prompts with $G=A=4$, the response budget is 29,892--119,568 completions and the token bound is $119{,}568M$. The number of differentiated retained groups remains 7,473. These count generation separately from teacher-forced scoring and from gradient accumulation. They make precise why prompt count, optimizer updates, and generated-token budget characterize different aspects of the same RL procedure.

\clearpage
\stopcontents[trmspecification]
\resumecontents[trmresults]
\section{Complete empirical results and design analysis}\label{app:results-atlas}\label{app:diagnostics}
This appendix develops the evidence for parameter-efficient computation reuse. Method comparisons pair reasoning quality with trainable allocation; component comparisons examine how that capacity is organized; writeback diagnostics and configuration profiles characterize the resulting task responses. Generative scores are means $\pm$ sample standard deviations (SD) across three evaluation seeds, 42, 43, and 44. Bold/underlined means identify the best/second-best values within each column, including ties; they describe the ordering of means. Lower values are preferred for resource costs and perplexity. Historical configurations retain their $\dagger$ estimate attribute, and $\ddagger$ marks original non-generative point references. The complete sample variances appear in Appendix~\ref{app:evaluation-variance}.

\subsection{Method comparisons across mathematics and search}
For each evaluation seed, AIME mean averages the two annual scores and MathAvg gives one-third weight to GSM8K, MATH-500, and AIME mean. The reported aggregate SD is computed across those seed-level aggregates. Differences likewise pair the same evaluation-seed indices before taking their mean and SD. Table~\ref{tab:math-supplement} completes the annual and aggregate profiles in Table~\ref{tab:math}; Table~\ref{tab:search-complete} retains the complete search comparison.

\begin{table}[!htb]
\centering\footnotesize
\setlength{\tabcolsep}{2.6pt}
\caption{\textbf{Annual mathematics results and aggregate contrasts.} Accuracy and MathAvg are percentages; SD and $\Delta$ are percentage points. $\Delta$ is the paired MathAvg difference from full-parameter GRPO. GSM8K, MATH-500, and parameter allocations appear in Table~\ref{tab:math}.}
\label{tab:math-supplement}
\begin{tabular}{lrrrrr}
\toprule
\rowcolor{paperhead}
Method & AIME24 & AIME25 & AIME mean & MathAvg & $\Delta$ vs GRPO \\
\midrule
Frozen base & \stat{30.00}{5.77} & \stat{33.33}{3.33} & \stat{31.67}{3.33} & \stat{65.33}{0.89} & \stat{-8.46}{0.95} \\
Full-parameter GRPO & \stat{44.44}{6.94} & \stat{44.44}{3.85} & \stat{44.44}{5.36} & \stat{73.79}{1.83} & \stat{0.00}{0.00} \\
RFT & \secondstat{51.11}{10.18} & \stat{41.11}{13.47} & \stat{46.11}{2.55} & \stat{73.64}{1.33} & \stat{-0.15}{2.54} \\
LoRA-r2 + GRPO & \stat{43.33}{17.32} & \stat{41.11}{10.72} & \stat{42.22}{12.73} & \stat{73.42}{4.28} & \stat{-0.37}{2.67} \\
LoRA-r16 + GRPO & \stat{50.00}{5.77} & \stat{41.11}{6.94} & \stat{45.56}{6.31} & \stat{74.83}{1.41} & \stat{+1.04}{2.69} \\
LoRA-MoE + RO-GRPO & \stat{41.11}{5.09} & \stat{47.78}{5.09} & \stat{44.44}{2.55} & \stat{75.74}{1.47} & \stat{+1.95}{0.41} \\
LoRA + S-GRPO & \secondstat{51.11}{19.53} & \stat{46.67}{3.33} & \secondstat{48.89}{8.22} & \stat{75.95}{2.43} & \stat{+2.16}{2.30} \\
LoRA-r16, matched retries & \stat{45.56}{7.70} & \secondstat{51.11}{12.62} & \stat{48.33}{6.01} & \secondstat{77.28}{1.95} & \secondstat{+3.49}{3.76} \\
\rowcolor{trmlight}\trm\ + GRPO & \beststat{63.33}{6.67} & \beststat{57.78}{1.92} & \beststat{60.56}{3.47} & \beststat{83.64}{1.16} & \beststat{+9.85}{2.93} \\
\bottomrule
\end{tabular}
\end{table}

\begin{table}[!htb]
\centering\small
\setlength{\tabcolsep}{3pt}
\caption{\textbf{Complete agentic search comparison.} BrowseComp Plus uses token-F1 multiplied by 100; ASearch Test denotes the ASearcher held-out validation score (0--100). Appendix~\ref{app:search-protocol} details the scorer and default penalty factors. $\Delta$ is the paired difference from full-parameter GRPO; all entries are mean $\pm$ SD in the corresponding metric\textquotesingle s points.}
\label{tab:search-complete}
\begin{tabular}{lrrrr}
\toprule
\rowcolor{paperhead}
Method & BrowseComp F1 & $\Delta$ vs GRPO & ASearch & $\Delta$ vs GRPO \\
\midrule
Frozen base & \stat{5.11}{0.30} & \stat{-28.12}{1.25} & \stat{59.76}{0.41} & \stat{-10.19}{0.56} \\
Full-parameter GRPO & \stat{33.23}{1.54} & \stat{0.00}{0.00} & \stat{69.94}{0.53} & \stat{0.00}{0.00} \\
RFT & \stat{31.20}{0.75} & \stat{-2.03}{2.07} & \stat{65.81}{0.92} & \stat{-4.14}{1.18} \\
LoRA-r2 + GRPO & \stat{32.98}{0.56} & \stat{-0.25}{1.25} & \stat{69.19}{0.59} & \stat{-0.75}{1.01} \\
LoRA-r16 + GRPO & \stat{36.06}{0.56} & \stat{+2.83}{1.80} & \stat{70.59}{0.18} & \stat{+0.64}{0.64} \\
LoRA-MoE + RO-GRPO & \beststat{37.00}{1.17} & \beststat{+3.77}{2.56} & \stat{71.46}{0.55} & \stat{+1.52}{0.65} \\
LoRA + S-GRPO & \stat{35.50}{0.57} & \stat{+2.27}{1.58} & \stat{70.67}{0.19} & \stat{+0.73}{0.34} \\
LoRA-r16, matched retries & \stat{36.90}{1.13} & \stat{+3.67}{1.32} & \secondstat{72.80}{0.75} & \secondstat{+2.86}{0.63} \\
\rowcolor{trmlight}\trm\ + GRPO & \secondstat{36.98}{1.16} & \secondstat{+3.75}{0.91} & \beststat{73.98}{0.74} & \beststat{+4.04}{1.27} \\
\bottomrule
\end{tabular}
\end{table}

Against full-parameter GRPO, \trm\ improves MathAvg by $9.85\pm2.93$ points with a trainable allocation equal to 0.466\% of the backbone. At a similar adapter budget, matched-retry LoRA records $77.28\pm1.95$ versus \trm's $83.64\pm1.16$, a 6.36-point difference in means. The corresponding search means are 36.90 versus 36.98 on BrowseComp Plus and 72.80 versus 73.98 on ASearch. LoRA-MoE records 37.00 BrowseComp F1 with 173.112M trainable parameters; \trm\ records 36.98 with 41.730M. Together these profiles establish a favorable quality--allocation tradeoff and the strongest ASearch mean in the comparison.

\FloatBarrier
\subsection{Complete component ablations}\label{app:component-results}
Table~\ref{tab:ablation-complete} retains all four task scores and the paired aggregate contrast for each component change. The no-state alternative uses a capacity-matched MLP, allowing the controller\textquotesingle s role to be assessed alongside its allocation.

\begin{table}[!htb]
\centering\footnotesize
\setlength{\tabcolsep}{1.5pt}
\caption{\textbf{Complete task-level component ablations.} Entries are mean $\pm$ SD across evaluation seeds. $\Delta$ is each variant minus full \trm\ in MathAvg percentage points.}
\label{tab:ablation-complete}
\begin{tabular}{lrrrrrr}
\toprule
\rowcolor{paperhead}
Configuration & GSM8K & MATH & AIME24 & AIME25 & MathAvg & $\Delta$ \\
\midrule
\rowcolor{trmlight}Full \trm & \beststat{97.62}{0.79} & \beststat{92.73}{0.70} & \beststat{63.33}{6.67} & \beststat{57.78}{1.92} & \beststat{83.64}{1.16} & \beststat{0.00}{0.00} \\
No cross-layer retrieval & \stat{96.26}{0.59} & \stat{79.40}{1.56} & \stat{14.44}{3.85} & \secondstat{51.11}{15.03} & \stat{69.48}{2.36} & \stat{-14.16}{1.78} \\
Hidden-state memory & \secondstat{96.79}{0.53} & \stat{77.80}{1.31} & \stat{37.78}{10.72} & \stat{17.78}{5.09} & \stat{67.46}{2.19} & \stat{-16.18}{2.96} \\
Learned static routing & \stat{96.01}{0.68} & \stat{77.27}{0.81} & \stat{26.67}{5.77} & \stat{16.67}{6.67} & \stat{64.98}{1.71} & \stat{-18.66}{2.78} \\
No state (matched MLP) & \stat{95.02}{0.75} & \secondstat{83.93}{1.92} & \secondstat{50.00}{17.64} & \stat{28.89}{10.18} & \secondstat{72.80}{4.10} & \secondstat{-10.84}{3.09} \\
No layer/block identity & \stat{94.29}{0.42} & \stat{66.13}{1.80} & \stat{11.11}{10.72} & \stat{16.67}{3.33} & \stat{58.10}{1.06} & \stat{-25.53}{0.56} \\
No RMS alignment & \stat{94.57}{0.64} & \stat{67.93}{3.80} & \stat{3.33}{3.33} & \stat{10.00}{3.33} & \stat{56.39}{0.41} & \stat{-27.25}{1.24} \\
No token-conditioned gate & \stat{96.26}{0.42} & \stat{66.73}{1.81} & \stat{34.44}{8.39} & \stat{22.22}{11.71} & \stat{63.78}{2.61} & \stat{-19.86}{1.47} \\
No informative retries & \stat{94.39}{0.55} & \stat{74.87}{2.55} & \stat{14.44}{1.92} & \stat{22.22}{3.85} & \stat{62.53}{1.20} & \stat{-21.11}{2.26} \\
\bottomrule
\end{tabular}
\end{table}

\paragraph{Computation content and addressed retrieval.}
Full \trm\ reaches $83.64\pm1.16$ MathAvg. Removing cross-layer retrieval gives $69.48\pm2.36$, and replacing completed block changes with hidden states gives $67.46\pm2.19$. These changes distinguish access to earlier computations from the representation stored in memory. The corresponding MATH-500 means are 92.73, 79.40, and 77.80. Source/layer identity identifies the origin and destination of each exchange; its removal has a paired MathAvg difference of $-25.53\pm0.56$ points.

\paragraph{History, selection, and relative influence.}
The matched MLP records $72.80\pm4.10$ MathAvg and learned static routing records $64.98\pm1.71$, supporting the organization of capacity into recurrent depth history and query-dependent selection. Removing RMS alignment gives a paired difference of $-27.25\pm1.24$ points, while removing the token-conditioned gate gives $-19.86\pm1.47$. Calibration sets the relative intervention scale; the signed gate adjusts that influence to the current token. The complete interface learns which computations to reuse and how strongly to incorporate them.

\subsection{Parameter and training-cost profiles of the ablations}
Table~\ref{tab:ablation-resources} pairs the same configurations with exact allocated parameter counts and their original GPU-hour records. The common estimated optimizer-step count separates update count from response generation and the computation performed within an update.

\begin{table}[!htb]
\centering\small
\setlength{\tabcolsep}{3.5pt}
\caption{\textbf{Resources for the component study.} Parameter counts are exact allocations; steps retain their estimated status. GPU-hours retain their original training-cost records, separate from the evaluation-seed statistics.}
\label{tab:ablation-resources}
\begin{tabular}{lrrr}
\toprule
\rowcolor{paperhead}
Configuration & Parameters & Steps (est.) & GPU-hours \\
\midrule
\rowcolor{trmlight}Full \trm & 41,730,332 & 3,737 & 121.23 \\
No cross-layer retrieval & \underline{41,648,924} & 3,737 & 110.97 \\
Hidden-state memory & 41,730,332 & 3,737 & 122.42 \\
Learned static routing & \textbf{40,535,324} & 3,737 & 114.90 \\
No state (matched MLP) & 41,731,978 & 3,737 & \underline{69.25} \\
No layer/block identity & 41,729,052 & 3,737 & 77.34 \\
No RMS alignment & 41,730,332 & 3,737 & 117.70 \\
No token-conditioned gate & 41,725,980 & 3,737 & 123.89 \\
No informative retries & 41,730,332 & 3,737 & \textbf{37.92} \\
\bottomrule
\end{tabular}
\end{table}

The MLP allocates 41,731,978 parameters, 1,646 more than full \trm's 41,730,332. Its paired MathAvg difference is $-10.84\pm3.09$ points. Hidden-state memory and the no-RMS variant retain exactly the full allocation; removing the token-conditioned gate changes it by 4,352 parameters. These comparisons link accuracy to the organization of trainable capacity.

Informative retries address a complementary training dimension. The no-retry configuration records 37.92 GPU-hours and $62.53\pm1.20$ MathAvg, while full \trm\ records 121.23 GPU-hours and $83.64\pm1.16$. Their paired MathAvg difference is $21.11\pm2.26$ points in favor of the full configuration. Retry generation changes the completions available to each prompt\textquotesingle s update while retaining the same estimated optimizer-step count, connecting the empirical comparison to the correctness diversity in Appendix~\ref{app:formal}.

\FloatBarrier
\subsection{Configuration landscape and annual task profiles}
The historical exploration varies writeback rules, controller structure, and optimization settings. Table~\ref{tab:configuration-profiles} preserves every annual score from its initialization and structural branch, while Table~\ref{tab:writeback-complete} records the complete writeback sweep. The historical $\dagger$ attribute applies to GSM8K, MATH-500, and their aggregates; evaluation-seed dispersion does not change that provenance. Full \trm\ is the final-model reference.

\begin{table}[!htb]
\centering\footnotesize
\setlength{\tabcolsep}{2.6pt}
\caption{\textbf{Complete historical task profiles and final reference.} Generative entries are mean $\pm$ SD; $\dagger$ retains the historical IID-estimate attribute. The initialization and local/controller variants use the standard-gate branch. Full \trm\ is the final configuration from Table~\ref{tab:math}.}
\label{tab:configuration-profiles}\label{tab:atlas-matrix}\label{tab:other}
\begin{tabular}{lrrrrr}
\toprule
\rowcolor{paperhead}
Configuration & GSM8K & MATH-500 & AIME24 & AIME25 & MathAvg \\
\midrule
Standard gate$^\dagger$ & \stat{87.57}{0.87} & \stat{84.87}{3.29} & \stat{36.67}{0.00} & \stat{27.78}{10.72} & \stat{68.22}{2.83} \\
Standard gate, no KL$^\dagger$ & \stat{89.03}{0.58} & \stat{71.67}{4.02} & \stat{13.33}{5.77} & \stat{4.44}{1.92} & \stat{56.53}{1.72} \\
Local16$^\dagger$ & \stat{90.75}{1.12} & \stat{82.00}{1.00} & \stat{33.33}{3.33} & \stat{35.56}{18.36} & \stat{69.07}{2.74} \\
Local64$^\dagger$ & \stat{89.54}{0.57} & \stat{71.07}{2.57} & \stat{5.56}{5.09} & \stat{11.11}{5.09} & \stat{56.31}{2.48} \\
Wide512$^\dagger$ & \stat{87.57}{1.18} & \stat{84.60}{1.93} & \stat{24.44}{10.72} & \stat{27.78}{5.09} & \stat{66.09}{2.85} \\
Zero initialization$^\dagger$ & \stat{89.03}{0.35} & \stat{83.87}{1.81} & \stat{51.11}{10.72} & \stat{30.00}{6.67} & \stat{71.15}{2.29} \\
Orthogonal initialization$^\dagger$ & \stat{90.83}{0.08} & \stat{82.93}{1.72} & \stat{42.22}{5.09} & \stat{28.89}{1.92} & \stat{69.77}{0.38} \\
Top2 composite$^\dagger$ & \secondstat{96.64}{0.29} & \secondstat{87.20}{1.06} & \secondstat{58.89}{10.72} & \secondstat{41.11}{9.62} & \secondstat{77.95}{2.12} \\
\rowcolor{trmlight}Full \trm & \beststat{97.62}{0.79} & \beststat{92.73}{0.70} & \beststat{63.33}{6.67} & \beststat{57.78}{1.92} & \beststat{83.64}{1.16} \\
\bottomrule
\end{tabular}
\end{table}

Full \trm\ has the largest mean on all four tasks in this panel. Relative to Top2, the approximately 5.69-point MathAvg difference comprises 0.33 point from GSM8K, 1.84 from MATH-500, and 3.52 from pooled AIME. The annual means are 63.33 and 57.78 for full \trm, compared with 58.89 and 41.11 for Top2. Zero initialization has the largest annual mean gap in this panel, 21.11 points. These annual gaps compare two task sets; their seed-paired SDs are not among the reported statistics.

Zero and orthogonal initialization modify the standard-gate branch; the orthogonal gain is 0.064, whereas RMS configurations use gain 1. Local16 and Local64 change the additional token-local branch rank; Wide512 changes controller dimensions; Top2 denotes the composite sparse-routing configuration specified in Appendix~\ref{app:variants}.

\begin{figure}[!htb]
\centering
\includegraphics[width=.96\linewidth]{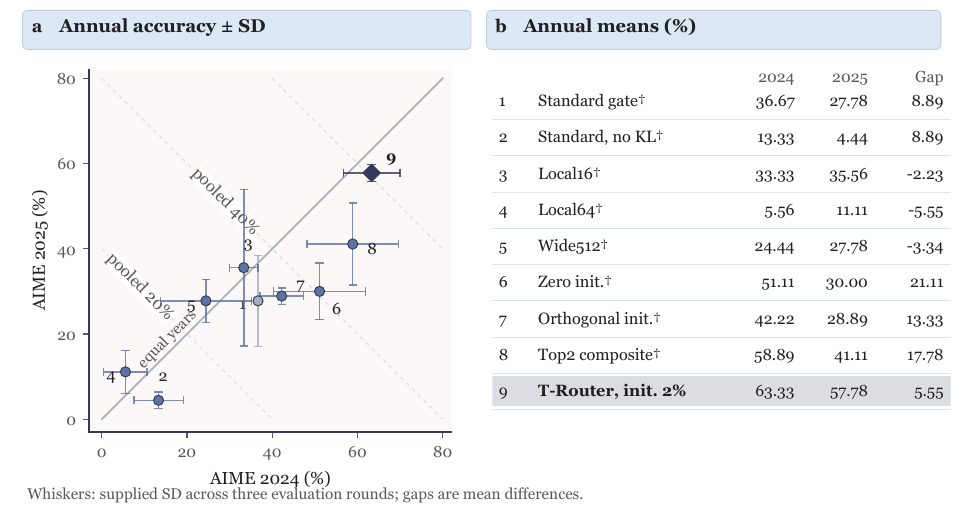}
\caption{\textbf{Annual competition-mathematics profiles.} The two annual means locate each configuration; whiskers show supplied SDs, and numerical summaries give annual means and their gaps. Pooled means average AIME24 and AIME25, and annual gaps are differences of those means. Each annual set contains 30 questions per evaluation seed.}
\label{fig:atlas-aime}
\end{figure}

\subsection{Benchmark contributions to aggregate differences}
MathAvg admits an additive decomposition. For mean family score $s_j(c)$ and standard-gate reference $c_0$,
\begin{equation}
\Delta\mathrm{MathAvg}(c,c_0)=\frac{s_{\rm G}(c)-s_{\rm G}(c_0)}3+\frac{s_{\rm M}(c)-s_{\rm M}(c_0)}3+\frac{s_{\rm A}(c)-s_{\rm A}(c_0)}3.
\label{eq:atlas-contributions}
\end{equation}
Figure~\ref{fig:atlas-contributions} applies this identity to the displayed mean profiles. These decompositions and the later reweighting analyses use supplied rounded means; small rounding differences from separately reported aggregates can therefore occur.

\begin{figure}[!htb]
\centering
\includegraphics[width=\linewidth]{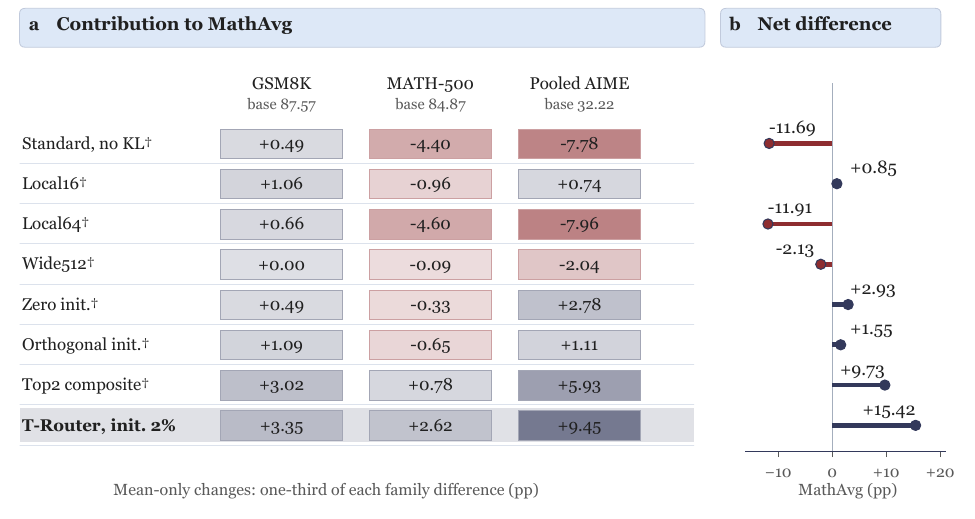}
\caption{\textbf{Where aggregate mean differences occur.} Each family contributes one-third of its signed difference from the standard gate. Navy/dark red indicate positive/negative contributions; the net column sums those mean contrasts. These are analytical decompositions of the same reported evaluations.}
\label{fig:atlas-contributions}
\end{figure}

Full \trm's approximately 15.42-point difference from the standard gate comprises 3.35 points from GSM8K, 2.62 from MATH-500, and 9.45 from pooled AIME. Top2\textquotesingle s approximately 9.73-point difference comprises 3.02, 0.78, and 5.93 points. Both distribute gains across all three mathematical families, with the full design adding a stronger MATH-500 and AIME profile.

\begingroup\raggedright
Local16 contributes approximately $(+1.06,-0.96,+0.74)$ points, while Local64 contributes $(+0.66,-4.60,-7.96)$. The no-KL branch contributes $(+0.49,-4.40,-7.78)$. Zero and orthogonal initialization produce net mean differences of approximately $+2.93$ and $+1.55$, respectively. These task-level views characterize the effect of complete settings alongside their aggregate order.\par\endgroup

\subsection{Writeback configurations and task response}
The writeback study connects relative intervention scale to the quality of the reused computation. The standard gate learns a coefficient without RMS alignment; fixed and learned RMS configurations express writeback on a common relative-scale basis. Table~\ref{tab:writeback-complete} contains every reported family score, aggregate, and endpoint diagnostic.

\begin{table}[!htb]
\centering\footnotesize
\setlength{\tabcolsep}{2pt}
\caption{\textbf{Complete writeback exploration and final-model reference.} Scores are mean $\pm$ SD; $\dagger$ retains historical estimates. Final WB and tail median are original percentage-valued endpoint diagnostics. A dash denotes an unavailable trailing median.}
\label{tab:writeback-complete}\label{tab:atlas-scale}
\begin{tabular}{lrrrrrr}
\toprule
\rowcolor{paperhead}
Configuration & GSM8K & MATH & AIME mean & MathAvg & Final WB & Tail \\
\midrule
Standard gate$^\dagger$ & \stat{87.57}{0.87} & \stat{84.87}{3.29} & \stat{32.22}{5.36} & \stat{68.22}{2.83} & 0.1254 & --- \\
Fixed RMS 0.5\%$^\dagger$ & \stat{87.47}{0.99} & \stat{76.20}{1.59} & \stat{35.56}{3.47} & \stat{66.41}{1.69} & 0.5005 & --- \\
Fixed RMS 1\%$^\dagger$ & \stat{89.31}{0.67} & \stat{79.67}{3.11} & \stat{40.00}{5.00} & \stat{69.66}{2.34} & 1.0010 & --- \\
Fixed RMS 2\%$^\dagger$ & \stat{96.99}{0.56} & \secondstat{87.60}{2.11} & \stat{47.22}{2.55} & \stat{77.27}{1.16} & 2.0020 & --- \\
Learned RMS, init 0.5\%$^\dagger$ & \stat{96.99}{0.29} & \stat{84.67}{0.99} & \secondstat{55.00}{8.33} & \stat{78.89}{2.39} & 3.4987 & 3.7596 \\
Learned RMS, init 1\%$^\dagger$ & \beststat{97.83}{0.46} & \stat{84.07}{1.67} & \secondstat{55.00}{12.02} & \secondstat{78.96}{4.40} & 3.6535 & 3.8762 \\
\rowcolor{trmlight}Full \trm & \secondstat{97.62}{0.79} & \beststat{92.73}{0.70} & \beststat{60.56}{3.47} & \beststat{83.64}{1.16} & 4.0211 & 4.2065 \\
\bottomrule
\end{tabular}
\end{table}

\begin{figure}[!htb]
\centering
\includegraphics[width=\linewidth]{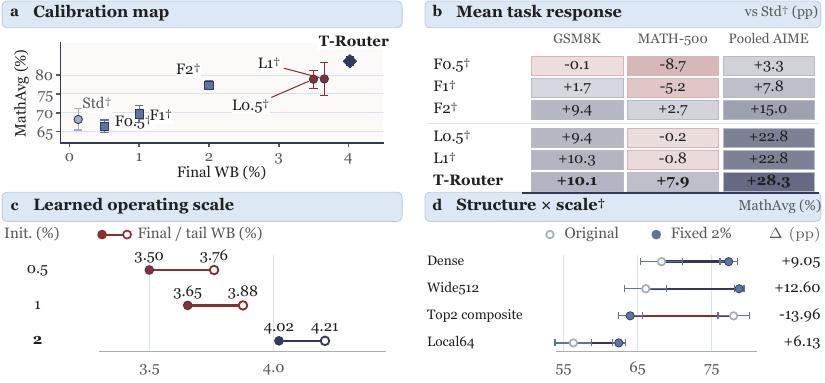}
\caption{\textbf{Writeback scale, task response, and structure.} (a) MathAvg versus final relative writeback. (b) Mean family-score differences from the standard gate. F/L denote fixed/learned RMS, with coefficients or initializations in percent. (c) Original final and trailing-median diagnostics. (d) Structure-dependent mean changes under fixed 2\% RMS. Whiskers in (a,d) show evaluation-seed SD. Tables~\ref{tab:writeback-complete} and~\ref{tab:interactions} give the complete statistics.}
\label{fig:writeback}
\end{figure}

Full \trm\ records $83.64\pm1.16$ MathAvg at 4.0211\% final writeback, combining a 92.73 MATH-500 mean with 60.56 pooled AIME. Learned RMS initialized at 0.5\% and 1\% records MathAvg means of 78.89 and 78.96, with pooled AIME means of 55.00 for both. These task profiles connect writeback scale to the computations delivered to the receiving layer.

\subsection{Writeback coefficients and endpoint diagnostics}
The standard gate has a 20\% bound and 5\% initial coefficient, with small-normal writeback projections. Fixed RMS uses gain-1 orthogonal projections and constant coefficients of 0.5\%, 1\%, or 2\%. Learned RMS uses the same gain-1 projection initialization, a 5\% gate bound, and initial coefficients of 0.5\%, 1\%, or 2\%. A fixed coefficient retains input-dependent source routing.

\begin{figure}[!htb]
\centering
\includegraphics[width=\linewidth]{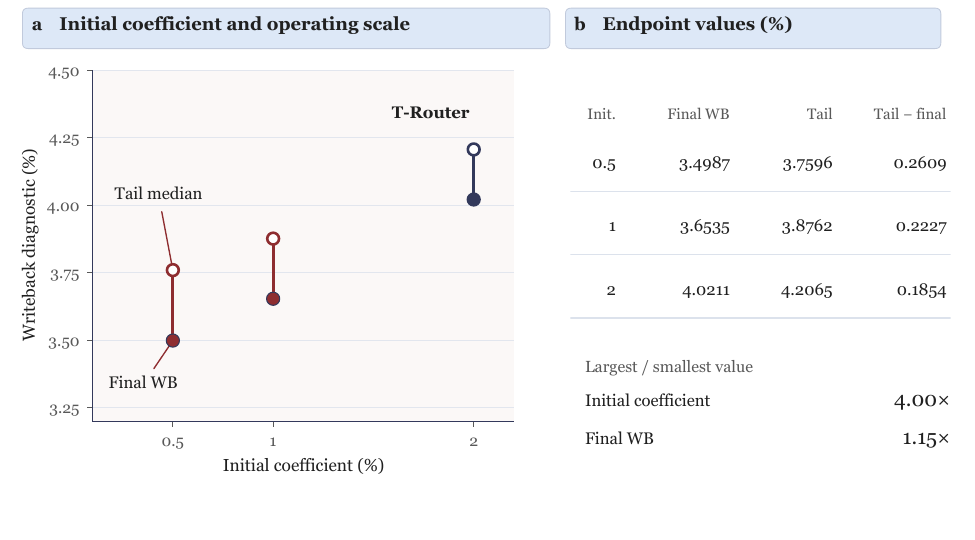}
\caption{\textbf{Writeback endpoint summaries.} Initial coefficients are paired with Final WB (filled circles) and trailing-median WB (open circles). The table gives the original scalar diagnostics, their differences, and the largest-to-smallest initial and final ratios. Final WB is the latest routed layer\textquotesingle s whole-tensor writeback-to-residual norm ratio.}
\label{fig:atlas-endpoints}
\end{figure}

For the 0.5\%, 1\%, and 2\% learned initializations, tail medians are 3.7596\%, 3.8762\%, and 4.2065\%, exceeding final diagnostics by 0.2609, 0.2227, and 0.1854 points. The final scales lie closer together than the initial coefficients. Across the broader comparison, final scale alone does not order accuracy: fixed RMS 0.5\% has a larger final diagnostic than the standard gate, while their MathAvg means are 66.41 and 68.22. Retrieval direction, the receiving representation, and scaling jointly determine the resulting computation.

\FloatBarrier
\subsection{Structure--writeback combinations}\label{app:variants}
The interaction study pairs four structures with their original writeback and fixed 2\% RMS. Table~\ref{tab:interactions} preserves the mean and SD of each paired original-to-fixed change, retaining each structure\textquotesingle s routing and state computation.

\begin{table}[!htb]
\centering\small
\setlength{\tabcolsep}{4pt}
\caption{\textbf{Structure--writeback combinations.} MathAvg is in percent; SD and paired changes are in percentage points; $\dagger$ retains the historical estimate attribute. Change is fixed RMS minus original writeback, paired by evaluation-seed index.}
\label{tab:interactions}
\begin{tabular}{lrrr}
\toprule
\rowcolor{paperhead}
Structure & Original writeback & Fixed RMS 2\% & Paired change \\
\midrule
Dense, standard gate$^\dagger$ & \secondstat{68.22}{2.83} & \secondstat{77.27}{1.16} & \secondstat{+9.05}{2.26} \\
Wide512$^\dagger$ & \stat{66.09}{2.85} & \beststat{78.69}{0.66} & \beststat{+12.60}{3.41} \\
Top2 composite$^\dagger$ & \beststat{77.95}{2.12} & \stat{63.98}{1.62} & \stat{-13.96}{1.95} \\
Local64$^\dagger$ & \stat{56.31}{2.48} & \stat{62.44}{0.90} & \stat{+6.13}{1.62} \\
\bottomrule
\end{tabular}
\end{table}

\begin{figure}[!htb]
\centering
\includegraphics[width=\linewidth]{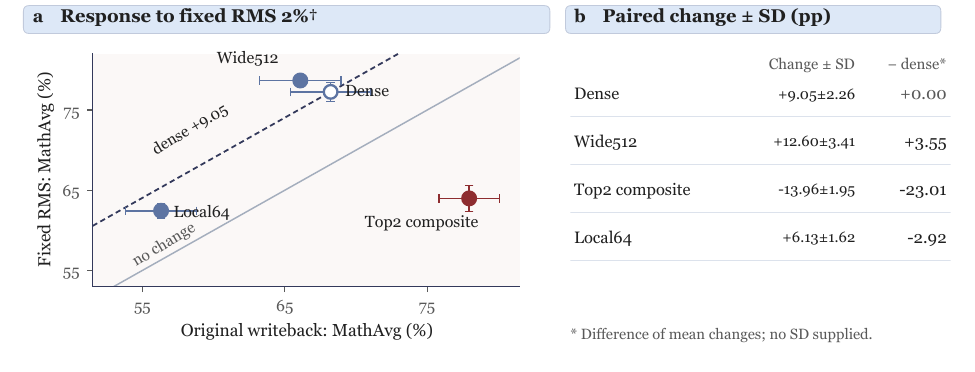}
\caption{\textbf{Structure-dependent response to fixed RMS writeback.} Original and fixed-RMS scores in (a) show means and evaluation SDs. The dashed reference adds the dense row\textquotesingle s 9.05-point change. The paired changes use supplied SDs; differences between those changes are descriptive contrasts of means.}
\label{fig:atlas-interactions}
\end{figure}

Dense routing and Wide512 change by $+9.05\pm2.26$ and $+12.60\pm3.41$ points; the difference between these means is $+3.55$. Local64 changes by $+6.13\pm1.62$, or $-2.92$ relative to the dense response. Top2 changes by $-13.96\pm1.95$, giving a $-23.01$ mean contrast relative to dense routing. The signs show that a configuration\textquotesingle s original quality and its response to fixed calibration describe distinct aspects of the design space.

\paragraph{Meaning of the structure names.}
Local16 and Local64 add a token-local branch of rank 16 or 64, a controller-conditioned sigmoid gate initialized at 0.2, and a branch learning-rate multiplier of two; cross-layer memory rank remains 256. Wide512 sets slot and controller hidden dimensions to 512, with eight slots and memory rank 256. Top2 restricts source and slot selection to two entries, RMS-normalizes stored deltas, sets routing temperature to 0.7, and uses routing-sharpness, balance, and slot-diversity terms. Its result therefore characterizes that complete configuration.

Wide512 with fixed RMS records $78.69\pm0.66$ MathAvg with 45.546M allocated parameters, versus $83.64\pm1.16$ and 41.730M for the selected learned-RMS design. The 3.816M allocation difference accompanies a 4.95-point difference in mean MathAvg. Allocation remains distinct from active memory and arithmetic during execution.

\FloatBarrier
\subsection{Auxiliary task profiles}
The auxiliary tasks describe narrative reasoning (MuSR), demanding multiple-choice questions (GPQA-D), instruction following (IFEval), and language-model likelihood (WikiText-2). IFEval strict@50 uses 50 prompts per evaluation seed and requires satisfying every specified constraint for a prompt.

\begin{table}[!htb]
\centering\small
\setlength{\tabcolsep}{3.5pt}
\caption{\textbf{Complete auxiliary task profiles.} IFEval is mean $\pm$ sample SD over evaluation seeds. $\ddagger$ marks original non-generative point references for MuSR, GPQA-D, and perplexity; no evaluation-seed dispersion is assigned to those entries. Lower perplexity is preferred; dashes denote unavailable entries.}
\label{tab:main-diagnostics}
\begin{tabular}{lrrrr}
\toprule
\rowcolor{paperhead}
Configuration & MuSR & GPQA-D & IFEval strict@50 & WikiText-2 PPL \\
\midrule
Frozen base & --- & --- & --- & --- \\
Full-parameter GRPO & \textbf{63.00}$^\ddagger$ & \underline{40.00}$^\ddagger$ & --- & --- \\
Standard gate & \textbf{63.00}$^\ddagger$ & \textbf{41.00}$^\ddagger$ & \stat{31.33}{3.06} & \textbf{7.0391}$^\ddagger$ \\
Fixed RMS 2\% & \underline{61.00}$^\ddagger$ & 39.00$^\ddagger$ & \secondstat{36.00}{4.00} & 7.0890$^\ddagger$ \\
Learned RMS, init 1\% & \underline{61.00}$^\ddagger$ & \underline{40.00}$^\ddagger$ & \stat{34.00}{7.21} & \underline{7.0758}$^\ddagger$ \\
\rowcolor{trmlight}Full \trm & \underline{61.00}$^\ddagger$ & 38.00$^\ddagger$ & \beststat{38.67}{4.16} & 7.1185$^\ddagger$ \\
\bottomrule
\end{tabular}
\end{table}

\begin{figure}[!htb]
\centering
\includegraphics[width=\linewidth]{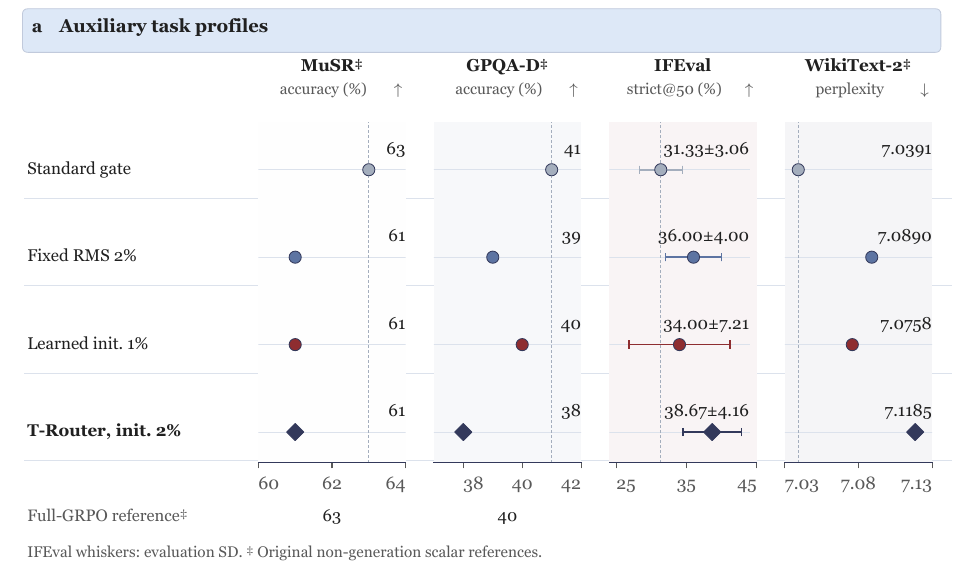}
\caption{\textbf{Task-specific diagnostic scales.} Each task retains its native units. IFEval means carry evaluation-seed SDs; $\ddagger$ point references retain their original non-generative scoring status. The full-GRPO reference is available for the two multiple-choice tasks.}
\label{fig:atlas-auxiliary}
\end{figure}

Full \trm\ records the highest IFEval mean, $38.67\pm4.16$, compared with $36.00\pm4.00$ for fixed RMS 2\%, $34.00\pm7.21$ for learned initialization at 1\%, and $31.33\pm3.06$ for the standard gate. The three RMS settings share the MuSR point reference of 61.00, with GPQA-D references between 38.00 and 40.00. Their WikiText-2 perplexities are 7.0890, 7.0758, and 7.1185 for fixed 2\%, learned 1\%, and full \trm, respectively. These values retain the separate meanings of accuracy, constraint satisfaction, and likelihood.

\FloatBarrier
\subsection{Sensitivity to the choice of family weights}
MathAvg assigns equal weight to the three mathematical families. To characterize alternative priorities, reaggregate the same mean profiles with $w_{\rm G},w_{\rm M},w_{\rm A}\ge0$ and $w_{\rm G}+w_{\rm M}+w_{\rm A}=1$:
\begin{equation}
S_w(c)=w_{\rm G}s_{\rm G}(c)+w_{\rm M}s_{\rm M}(c)+w_{\rm A}s_{\rm A}(c).
\label{eq:atlas-weighted}
\end{equation}

\begin{figure}[!htb]
\centering
\includegraphics[width=\linewidth]{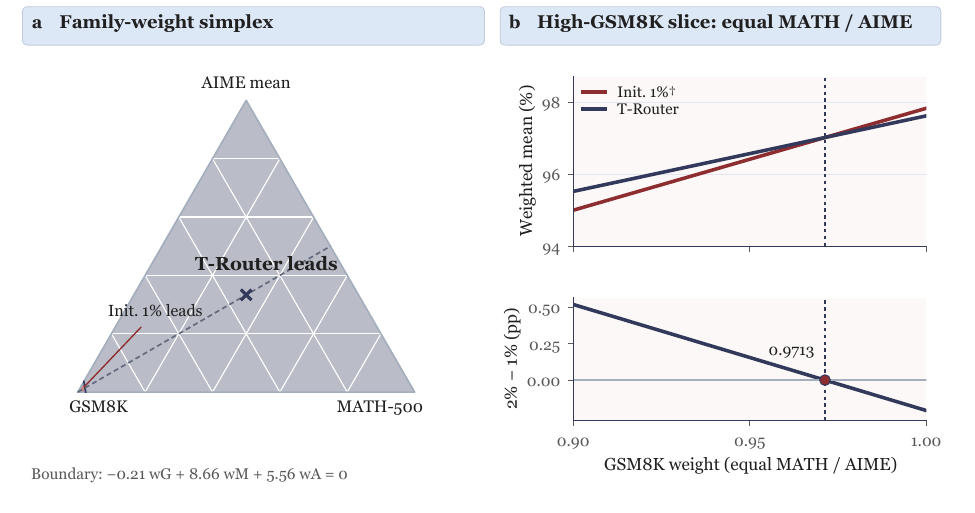}
\caption{\textbf{Analytical sensitivity to benchmark-family weights.} The simplex shows the largest weighted mean among 14 complete profiles. The one-dimensional slice gives equal weight to MATH-500 and AIME and varies GSM8K weight; the crossover is near $w_{\rm G}=0.9713$. Scores are analytical reaggregations of the supplied means using Equation~\ref{eq:atlas-weighted}.}
\label{fig:atlas-weights}
\end{figure}

Full \trm\ has mean profile $(97.62,92.73,60.56)$, versus $(97.83,84.07,55.00)$ for learned RMS initialized at 1\%. Therefore,
\begin{equation}
S_w(\text{init 2\%})-S_w(\text{init 1\%})=-0.21w_{\rm G}+8.66w_{\rm M}+5.56w_{\rm A}.
\label{eq:atlas-boundary}
\end{equation}
These two profiles define the upper envelope: full \trm\ strictly exceeds every other historical profile in all three family means. The 1\% initialization has the largest GSM8K mean, while full \trm\ has the largest MATH-500 and AIME means. Along $w_{\rm M}=w_{\rm A}=(1-w_{\rm G})/2$, full \trm\ leads up to $w_{\rm G}=7.11/7.32\approx0.9713$. Along the equal-GSM8K/MATH-500 slice, its advantage is $4.225+1.335w_{\rm A}$ and remains positive throughout. Equal family weighting lies in the full model\textquotesingle s region. This geometry describes priorities over the same mean score profiles.

\FloatBarrier
\subsection{Question resolution and alternative aggregation}
The mathematics evaluation contains 1,319 GSM8K questions, 500 MATH-500 questions, and 60 pooled AIME questions per seed. For that seed\textquotesingle s correct-answer counts $n_{\rm G},n_{\rm M},n_{\rm A}$,
\begin{equation}
\mathrm{MathAvg}=\frac{100}{3}\left(\frac{n_{\rm G}}{1319}+\frac{n_{\rm M}}{500}+\frac{n_{\rm A}}{60}\right).
\label{eq:atlas-countmetric}
\end{equation}
One extra correct answer in a single evaluation changes that seed\textquotesingle s MathAvg by approximately 0.0253, 0.0667, or 0.5556 points for GSM8K, MATH-500, or pooled AIME. The three-seed mean spreads a change in one seed over three evaluations. Equal-family weighting prevents the largest test set from determining the aggregate solely through its question count.

\begin{figure}[!htb]
\centering
\includegraphics[width=\linewidth]{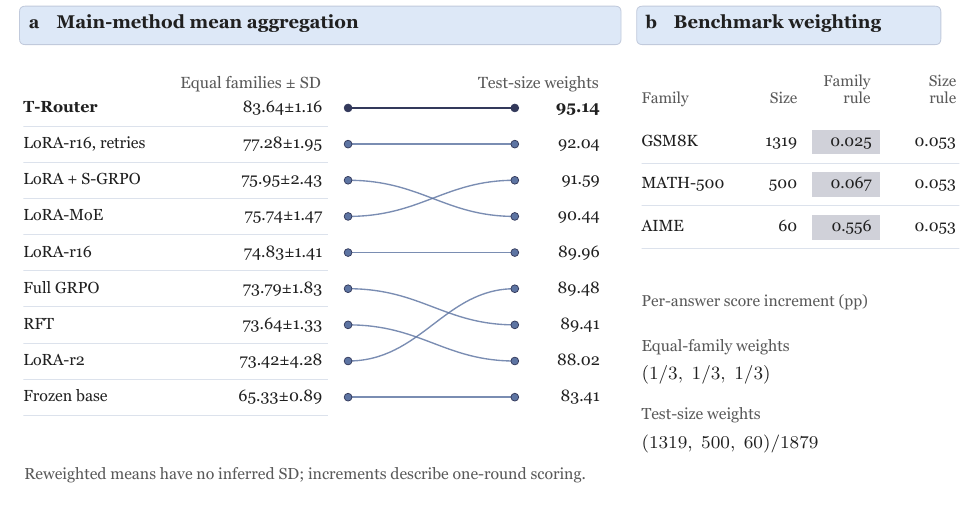}
\caption{\textbf{Aggregation at full-test sizes.} The same nine method mean profiles are summarized by equal-family MathAvg and equal-question reaggregation. The size/resolution panel reports per-evaluation answer increments and family weights. The alternative weighted means are calculated from reported mean scores.}
\label{fig:atlas-resolution}
\end{figure}

An equal-question summary instead uses
\begin{equation}
S_{\rm question}=\frac{1319s_{\rm G}+500s_{\rm M}+60s_{\rm A}}{1879},
\label{eq:atlas-micro}
\end{equation}
with weights approximately $(0.7020,0.2661,0.0319)$. One correct answer changes this per-seed summary by $100/1879\approx0.0532$ points. Reaggregating full \trm's mean profile gives approximately 95.14, compared with its reported equal-family MathAvg of 83.64. Full \trm\ has the highest mean in all three mathematical families among the nine main methods, so it remains first under every nonnegative family weighting summing to one. Both summaries express the same evaluated profile with different emphasis.

\FloatBarrier
\subsection{Evaluation-seed dispersion and complete variance record}\label{app:evaluation-variance}
The source evaluations use seeds 42, 43, and 44. For a reported score or paired difference $x_r$, the sample variance and SD are
\begin{equation}
\bar x=\frac{1}{3}\sum_{r=1}^3x_r,\qquad s^2=\frac{1}{2}\sum_{r=1}^3(x_r-\bar x)^2,\qquad s=\sqrt{s^2}.
\label{eq:evaluation-variance}
\end{equation}
Tables~\ref{tab:variance-math-search}--\ref{tab:variance-historical-auxiliary} preserve all seven supplied sample-variance panels at four decimals. Mathematical accuracy, MathAvg, and their differences use pp$^2$; search scores use the corresponding points$^2$. Aggregates and paired differences are formed within each evaluation seed before their variances are computed. Their supplied variances therefore retain the joint task/seed behavior; averaging component variances would not produce the same statistic. SDs describe evaluation variability for the reported configurations. Resource measurements and original $\ddagger$ references keep their separate measurement status.

\begin{table}[!htb]
\centering\footnotesize
\setlength{\tabcolsep}{3.0pt}
\caption{\textbf{Sample variances: mathematics and search.} All values are supplied $s^2$ over three evaluation seeds. Zero is an exact zero sample variance; $\Delta$ pairs each method with full-parameter GRPO at the same seed index.}
\label{tab:variance-math-search}
\textbf{(a) Mathematics}\\[3pt]
\begin{tabular}{lrrrrrrr}
\toprule
\rowcolor{paperhead}
Method & GSM8K & MATH & AIME24 & AIME25 & AIME mean & MathAvg & $\Delta$ \\
\midrule
Frozen base & 0.8986 & 5.0800 & 33.3333 & 11.1111 & 11.1111 & 0.7874 & 0.8933 \\
Full-parameter GRPO & 0.0690 & 0.4933 & 48.1481 & 14.8148 & 28.7037 & 3.3394 & 0.0000 \\
RFT & 1.3412 & 6.0933 & 103.7037 & 181.4815 & 6.4815 & 1.7760 & 6.4687 \\
LoRA-r2 + GRPO & 0.0536 & 0.6533 & 300.0000 & 114.8148 & 162.0370 & 18.2979 & 7.1083 \\
LoRA-r16 + GRPO & 0.5882 & 6.4133 & 33.3333 & 48.1481 & 39.8148 & 2.0014 & 7.2478 \\
LoRA-MoE + RO-GRPO & 0.3679 & 2.0933 & 25.9259 & 25.9259 & 6.4815 & 2.1701 & 0.1681 \\
LoRA + S-GRPO & 0.2146 & 3.4533 & 381.4815 & 11.1111 & 67.5926 & 5.9142 & 5.2820 \\
LoRA-r16, matched retries & 0.2548 & 0.0133 & 59.2593 & 159.2593 & 36.1111 & 3.8180 & 14.1538 \\
\trm\ + GRPO & 0.6227 & 0.4933 & 44.4444 & 3.7037 & 12.0370 & 1.3391 & 8.6068 \\
\bottomrule
\end{tabular}
\par\medskip
\textbf{(b) Agentic search}\\[3pt]
\begin{tabular}{lrrrr}
\toprule
\rowcolor{paperhead}
Method & BrowseComp & $\Delta$ F1 & ASearch & $\Delta$ score \\
\midrule
Frozen base & 0.0925 & 1.5637 & 0.1699 & 0.3186 \\
Full-parameter GRPO & 2.3867 & 0.0000 & 0.2777 & 0.0000 \\
RFT & 0.5584 & 4.2884 & 0.8434 & 1.3926 \\
LoRA-r2 + GRPO & 0.3094 & 1.5502 & 0.3459 & 1.0129 \\
LoRA-r16 + GRPO & 0.3189 & 3.2539 & 0.0317 & 0.4089 \\
LoRA-MoE + RO-GRPO & 1.3643 & 6.5709 & 0.3075 & 0.4258 \\
LoRA + S-GRPO & 0.3256 & 2.4924 & 0.0371 & 0.1147 \\
LoRA-r16, matched retries & 1.2793 & 1.7436 & 0.5555 & 0.3915 \\
\trm\ + GRPO & 1.3487 & 0.8305 & 0.5536 & 1.6154 \\
\bottomrule
\end{tabular}
\end{table}

\begin{table}[!htb]
\centering\footnotesize
\setlength{\tabcolsep}{3.0pt}
\caption{\textbf{Sample variances: components and writeback.} The component $\Delta$ is each variant minus full \trm. Historical writeback settings retain their $\dagger$ estimate attribute.}
\label{tab:variance-ablation-writeback}
\textbf{(c) Component ablations}\\[3pt]
\begin{tabular}{lrrrrrr}
\toprule
\rowcolor{paperhead}
Configuration & GSM8K & MATH & AIME24 & AIME25 & MathAvg & $\Delta$ \\
\midrule
Full \trm & 0.6227 & 0.4933 & 44.4444 & 3.7037 & 1.3391 & 0.0000 \\
No cross-layer retrieval & 0.3468 & 2.4400 & 14.8148 & 225.9259 & 5.5468 & 3.1758 \\
Hidden-state memory & 0.2836 & 1.7200 & 114.8148 & 25.9259 & 4.7817 & 8.7683 \\
Learned static routing & 0.4675 & 0.6533 & 33.3333 & 44.4444 & 2.9100 & 7.7297 \\
No state (matched MLP) & 0.5595 & 3.6933 & 311.1111 & 103.7037 & 16.7764 & 9.5347 \\
No layer/block identity & 0.1744 & 3.2533 & 114.8148 & 11.1111 & 1.1156 & 0.3102 \\
No RMS alignment & 0.4158 & 14.4533 & 11.1111 & 11.1111 & 0.1699 & 1.5405 \\
No token-conditioned gate & 0.1744 & 3.2933 & 70.3704 & 137.0370 & 6.7967 & 2.1560 \\
No informative retries & 0.2989 & 6.4933 & 3.7037 & 14.8148 & 1.4390 & 5.1010 \\
\bottomrule
\end{tabular}
\par\medskip
\textbf{(d) Writeback configurations}\\[3pt]
\begin{tabular}{lrrrr}
\toprule
\rowcolor{paperhead}
Configuration & GSM8K & MATH & AIME mean & MathAvg \\
\midrule
Standard gate$^\dagger$ & 0.7645 & 10.8133 & 28.7037 & 7.9975 \\
Fixed RMS 0.5\%$^\dagger$ & 0.9791 & 2.5200 & 12.0370 & 2.8583 \\
Fixed RMS 1\%$^\dagger$ & 0.4541 & 9.6533 & 25.0000 & 5.4889 \\
Fixed RMS 2\%$^\dagger$ & 0.3123 & 4.4400 & 6.4815 & 1.3420 \\
Learned RMS, init 0.5\%$^\dagger$ & 0.0824 & 0.9733 & 69.4444 & 5.7185 \\
Learned RMS, init 1\%$^\dagger$ & 0.2088 & 2.7733 & 144.4444 & 19.3792 \\
Full \trm & 0.6227 & 0.4933 & 12.0370 & 1.3391 \\
\bottomrule
\end{tabular}
\end{table}

\begin{table}[!htb]
\centering\footnotesize
\setlength{\tabcolsep}{3.0pt}
\caption{\textbf{Sample variances: historical profiles and auxiliary scores.} Changes in (e) are paired fixed-RMS-minus-original differences. Dashes in (g) preserve unavailable variances for original non-generative references or missing scores.}
\label{tab:variance-historical-auxiliary}
\textbf{(e) Structure--writeback combinations}\\[3pt]
\begin{tabular}{lrrr}
\toprule
\rowcolor{paperhead}
Structure & Original & Fixed RMS 2\% & Change \\
\midrule
Dense, standard gate$^\dagger$ & 7.9975 & 1.3420 & 5.1199 \\
Wide512$^\dagger$ & 8.1360 & 0.4337 & 11.6171 \\
Top2 composite$^\dagger$ & 4.5150 & 2.6144 & 3.8024 \\
Local64$^\dagger$ & 6.1259 & 0.8179 & 2.6340 \\
\bottomrule
\end{tabular}
\par\medskip
\textbf{(f) Other historical configurations}\\[3pt]
\begin{tabular}{lrrrrr}
\toprule
\rowcolor{paperhead}
Configuration & GSM8K & MATH & AIME24 & AIME25 & MathAvg \\
\midrule
Standard gate$^\dagger$ & 0.7645 & 10.8133 & 0.0000 & 114.8148 & 7.9975 \\
Standard gate, no KL$^\dagger$ & 0.3353 & 16.1733 & 33.3333 & 3.7037 & 2.9447 \\
Local16$^\dagger$ & 1.2473 & 1.0000 & 11.1111 & 337.0370 & 7.5142 \\
Local64$^\dagger$ & 0.3276 & 6.6133 & 25.9259 & 25.9259 & 6.1259 \\
Wide512$^\dagger$ & 1.3852 & 3.7200 & 114.8148 & 25.9259 & 8.1360 \\
Zero initialization$^\dagger$ & 0.1226 & 3.2933 & 114.8148 & 44.4444 & 5.2464 \\
Orthogonal initialization$^\dagger$ & 0.0057 & 2.9733 & 25.9259 & 3.7037 & 0.1421 \\
Top2 composite$^\dagger$ & 0.0824 & 1.1200 & 114.8148 & 92.5926 & 4.5150 \\
\bottomrule
\end{tabular}
\par\medskip
\textbf{(g) Auxiliary task profiles}\\[3pt]
\begin{tabular}{lrrrr}
\toprule
\rowcolor{paperhead}
Configuration & MuSR & GPQA-D & IFEval strict@50 & WikiText-2 PPL \\
\midrule
Frozen base & --- & --- & --- & --- \\
Full-parameter GRPO & --- & --- & --- & --- \\
Standard gate & --- & --- & 9.3333 & --- \\
Fixed RMS 2\% & --- & --- & 16.0000 & --- \\
Learned RMS, init 1\% & --- & --- & 52.0000 & --- \\
Full \trm & --- & --- & 17.3333 & --- \\
\bottomrule
\end{tabular}
\end{table}

\clearpage
\section{Design interpretation and functional correspondence}\label{app:design}
\trm\ pursues parameter-efficient reinforcement learning by concentrating adaptation in a communication interface around a frozen backbone. Completed block changes become explicitly accessible to later receivers, which learn their selection and relative influence. Separating transported content, recurrent depth state, and the current control signal gives each part of the trainable budget a concrete role. This is the computational form of the thalamic routing principle.

\subsection{Computation and coordination as separate roles}
The relevant thalamic function is context-dependent regulation of cortical communication. The pulvinar coordinates information transmission between cortical areas through attention-dependent synchronization \citep{saalmann2012pulvinar}. Mediodorsal thalamic input amplifies functional prefrontal connectivity, sustaining rule representations without itself relaying their categorical content \citep{schmitt2017thalamic}. Together, these findings motivate a routing interface that uses task context to regulate how distributed computations influence a receiver.

In \trm, pretrained decoder weights provide the computational substrate, while the complete auxiliary module regulates communication. Compressors define accessible source content, the depth controller conditions routing, and calibrated writeback changes the inputs processed by the frozen layers. Green source storage, purple controller state, and gold residual writeback in Figure~\ref{fig:framework} belong to this one coordinating system. The cortical partitions in Figure~\ref{fig:problem} depict distributed backbone computation; the thalamic correspondence covers the full auxiliary pathway.

\begin{table}[!htbp]
\centering\small
\caption{\textbf{Functional correspondence and its computational realization.} Each row specifies an operation in the design.}
\label{tab:analogy}
\begin{tabularx}{\linewidth}{>{\raggedright\arraybackslash}p{2.9cm}YY}
\toprule
\rowcolor{paperhead}
Organizing principle & T-Router realization & Computational consequence \\
\midrule
Distributed processing & Frozen decoder blocks & Every block produces its original nonlinear transformation on the received residual \\
Selective inter-area communication & Attention over completed block records & Later computations select from origin-indexed intermediate changes \\
Context-dependent coordination & Token-specific slots recurrent over depth & Retrieval uses an auxiliary history of preceding computational states \\
Modulation of influence & Signed RMS-calibrated writeback & Direction and relative magnitude are controlled separately \\
Coordinating system around existing processing & Full source--controller--writeback pathway & Adaptation capacity is concentrated in communication between frozen blocks \\
\bottomrule
\end{tabularx}
\end{table}

This mapping determines concrete design choices. A receiving block retains its full transformation instead of becoming an expert that may be skipped. Each source has an explicit depth origin rather than becoming an anonymous part of an accumulated hidden state. The same coordinate system is used throughout the paper: block index identifies an available computation, slot index identifies a component of the routing context, and token index identifies the causal position at which the decision occurs. The routing correspondence is operational: source weights determine which completed changes contribute to the mixture, and the gate controls their signed influence on the receiving block.

\subsection{Why the source bank and controller carry different information}
The source bank preserves completed displacements, each compressed using a source-specific map. Its size grows with completed depth blocks, and an appended record remains fixed for subsequent reads in that forward call. The controller state $S$ rewrites a fixed number of slots at each layer, accumulating update proposals through decay and slot allocation. The context $P$ is a query-dependent readout of the updated slots, recomputed for each receiving layer.

These objects have complementary roles. The bank supplies content associated with earlier blocks; $S$ summarizes the trajectory of residual states, source summaries, and layer identities. The current residual queries this state to obtain $P$. Routing and gating then combine $P$ with the current residual to select source content and regulate its influence. The transported vector $c$ comes from the source bank. This separation lets accessible sources grow without increasing the number of controller slots.

\begin{table}[!htbp]
\centering\small
\caption{\textbf{Source storage and controller state are complementary interfaces.} Their different update rules organize the computation even when their widths coincide.}
\label{tab:bank-controller-comparison}
\begin{tabularx}{\linewidth}{lYY}
\toprule
\rowcolor{paperhead}
Property & Source bank & Depth controller \\
\midrule
Index meaning & Completed block identity & Slot identity \\
Cardinality & Grows as blocks finish & Fixed at $K=8$ \\
Content width & $r=256$ & $p=256$ per slot \\
Write frequency & Once per completed block & Once per decoder layer \\
Write rule & Append compressed block change & Decay plus allocated shared proposal \\
Read function & Source attention produces $c_\ell$ & Slot attention produces $P_\ell$ \\
Direct contribution & Content projected into the residual & Context for source selection and gate input \\
Lifetime & One forward call & One forward call \\
\bottomrule
\end{tabularx}
\end{table}

Equal dimensions do not make the two representations interchangeable. A bank of seven records at layer 31 retains seven block identities. Eight controller slots at that layer can each contain a different mixture of many earlier proposals. The controller's rank-one increment per layer is compatible with a higher-rank accumulated state, as shown in Appendix~\ref{app:slot-history}. The two representations also have different sensitivities to intervention: changing a source record alters both retrieved content and later controller input, whereas changing only a controller state changes how existing source content is selected and scaled.

\subsection{Three coordinates of a residual intervention}
The raw direction can be written as $w_\ell=U_\ell c_\ell$. This expression contains source selection through the attention-weighted source mixture and transport through the receiving projection. On the active branch, calibration maps the raw direction onto a sphere whose radius is the receiving residual norm. Finally, the signed gate chooses the intervention's relative magnitude and orientation along that calibrated direction.

The three coordinates have distinct mathematical roles. Source attention is a distribution over candidate origins; it does not specify the norm of their projected sum. The projection supplies a receiver-specific low-dimensional subspace; a scalar gate does not change that subspace. Calibration changes the norm of a nonzero direction while preserving its ray, and the gate then selects a point along the line spanned by that direction. On the active branch, $|g_\ell|$ has the direct interpretation $\|R_\ell\|/\|h_\ell\|$.

\begin{table}[!htbp]
\centering\small
\caption{\textbf{Coordinates of controlled reuse.} Distinct parameter groups control the origin, direction, and relative strength of the intervention.}
\label{tab:intervention-coordinates}
\begin{tabularx}{\linewidth}{lYY}
\toprule
\rowcolor{paperhead}
Coordinate & Governing objects & Quantity it controls \\
\midrule
Source preference & Query, source keys, depth context & Relative attention across completed computations \\
Content transport & $C_b$, source value projection, $U_\ell$ & Source-to-receiver transformation within $\operatorname{col}(U_\ell)$ \\
Relative influence & Target-RMS calibration and signed gate & Norm relative to the receiving residual and sign along the direction \\
\bottomrule
\end{tabularx}
\end{table}

This factorization explains why writeback calibration and routing structure interact. Sparsifying source selection changes the mixture delivered to the receiving projection. Normalizing stored deltas changes the geometry on which keys and values are built. Changing controller width changes the context available to the query and gate. A common fixed writeback coefficient can therefore act on different learned directions in different designs. The paired results in Appendix~\ref{app:variants} reveal this interaction at the configuration level. The matched component study in Table~\ref{tab:ablation-complete} separately tests the operations of the full interface.

The learned gate adds another degree of freedom to calibrated transport. Its input contains both the current residual and controller context, so the model can modulate an already selected direction differently across tokens and receiving layers. A fixed gate retains token-dependent directions through source routing, but uses a common scalar coefficient. Thus ``fixed'' and ``learned'' refer to the strength coordinate; they do not divide the models into static and dynamic source-selection systems.

\subsection{Relation to alternative adaptation interfaces}
A local low-rank weight update changes a layer matrix by $\Delta W=AB$ and acts on the current input to that matrix \citep{hu2022lora}. A bottleneck adapter applies a learned local transformation around a layer \citep{houlsby2019adapters}. \trm\ instead retains explicit earlier sources and chooses a combination of them at a receiving layer. Its transport matrices are also low rank, but their operands are completed block changes selected through an input-dependent depth route.

DenseNet exposes earlier layer features through fixed feature concatenation \citep{huang2017densenet}. \trm\ reads compressed block changes through learned attention, projects the mixture into the receiving residual space, and calibrates its influence. This gives a fixed-width frozen backbone explicit access to historical computation, with reuse conditioned on the current residual and depth context.

The distinction can be expressed at a fixed receiver. With fixed source weights and no recurrent dependence, the change-dependent part of the raw writeback is a weighted sum of $T_{\ell b}D_b$ terms, with $\operatorname{rank}(T_{\ell b})\le r$. With the full module, the weights, records, controller context, calibration factor, and gate all vary with the input. A low-dimensional intervention space therefore coexists with a nonlinear input-dependent adaptation function. This is different from treating low rank as a statement that the entire adapted network is a fixed linear perturbation.

Delta Attention Residuals uses sublayer or block changes as routed content \citep{luo2026delta}; mHC-based finetuning learns read/write routing over multiple residual streams around frozen branches \citep{oldenburg2026mhcpeft}. \trm's coupled interface assigns separate roles to compressed origin-indexed records, recurrent context, and relative-scale writeback. The bank preserves completed changes, while controller slots accumulate depth history. A current-state query reads those slots to obtain $P$, which jointly conditions source attention and the signed gate. This organization makes content reuse and its control trainable within one compact RL interface.

Reuse operates during the causal forward pass: a receiver draws on completed records to shape the input to its frozen transformation. During RL, the task objective assigns credit through the resulting computation, differentiating the source, controller, routing, and writeback parameters. Frozen layers retain their input derivatives, connecting these communication choices to the output loss. Training therefore learns a reuse rule that executes directly during generation.

\subsection{Connecting the empirical comparisons to the interface}
The component study connects the organization of adaptation capacity to reasoning quality. Block-change memory reaches 83.64 MathAvg versus 67.46 for hidden-state memory at the same allocation; removing retrieval gives 69.48. The full controller also exceeds a capacity-matched MLP, which has 1,646 more parameters and scores 72.80. These comparisons favor accessible changes and recurrent control within the same compact-budget regime. $S$ accumulates depth history, while the current hidden state queries $S$ to obtain $P$ as the context for reuse.

Learned static routing gives 64.98, and removing the token-conditioned gate gives 63.78. The two variants address distinct receiving-layer decisions: which earlier changes contribute and how strongly their mixture enters the residual. All backbone layers continue to execute in order. The supplementary width, initialization, local-branch, and sparse-composite studies explore coupled design choices around this interface; their interaction patterns complement the component comparisons.

The resulting optimization target is a rule for composing existing intermediate contributions. The bank defines available content, depth context conditions its selection, and writeback regulates its influence on the next frozen transformation. Training these choices together yields 83.64 MathAvg with a 41.73M-parameter allocation, compared with 73.79 for full-parameter GRPO and 77.28 for matched-retry LoRA. Computation reuse thus connects the efficiency objective to both an explicit mechanism and its measured reasoning performance.

\end{document}